\def\argmin{\mathop{\rm arg\,min}}
\def\minop{\mathop{\rm min}\limits}
\def\maxop{\mathop{\rm max}\limits}
\def\R{\mathbb{R}}
\newcommand{\norm}[1]{\left\|#1\right\|}
\newcommand{\inner}[1]{\left\langle#1\right\rangle}
\DeclareMathOperator{\E}{\mathbb{E}}
\let\l\relax
\DeclareMathOperator{\l}{\mathbf{\ell}}
\newcommand{\wv}{\boldsymbol{w}}
\newcommand{\xv}{\boldsymbol{x}}
\newcommand{\deltav}{\boldsymbol{\delta}}
\newcommand{\Scal}{\mathcal{S}}
\def\eqref#1{equation~\ref{#1}}
\def\1{\bm{1}}
\DeclareMathAlphabet{\mathsfit}{\encodingdefault}{\sfdefault}{m}{sl}
\SetMathAlphabet{\mathsfit}{bold}{\encodingdefault}{\sfdefault}{bx}{n}
\newcommand{\myparagraph}{\textbf}
\newtheorem{theorem}{Theorem}
\newtheorem*{theorem*}{Theorem}
\newtheorem{lemma}[theorem]{Lemma}
\renewcommand{\wv}{w}
\renewcommand{\xv}{x}
\renewcommand{\deltav}{\delta}
\definecolor{ForestGreen}{RGB}{34,139,34}
\newcommand{\hl}[1]{#1}
\DeclareMathOperator\arcsinh{arcsinh}
\icmltitlerunning{Towards Understanding Sharpness-Aware Minimization}
\begin{document}

\twocolumn[
\icmltitle{Towards Understanding Sharpness-Aware Minimization}

\begin{icmlauthorlist}
\icmlauthor{Maksym Andriushchenko}{epfl}
\icmlauthor{Nicolas Flammarion}{epfl}
\end{icmlauthorlist}

\icmlaffiliation{epfl}{EPFL, Switzerland}

\icmlcorrespondingauthor{Maksym Andriushchenko}{maksym.andriushchenko@epfl.ch}

\icmlkeywords{Machine Learning, ICML}

\vskip 0.3in
]

\printAffiliationsAndNotice{}

\begin{abstract}
    Sharpness-Aware Minimization (SAM) is a recent training method that relies on worst-case weight perturbations which significantly improves generalization in various settings. We argue that the existing justifications for the success of SAM which are based on a PAC-Bayes generalization bound and the idea of convergence to flat minima are incomplete. Moreover, there are no explanations for the success of using $m$-sharpness in SAM which has been shown as essential for generalization. To better understand this aspect of SAM, we theoretically analyze its implicit bias for diagonal linear networks. We prove that SAM always chooses a solution that enjoys better generalization properties than standard gradient descent for a certain class of problems, and this effect is amplified by using $m$-sharpness. We further study the properties of the implicit bias on non-linear networks empirically, where we show that fine-tuning a standard model with SAM can lead to significant generalization improvements. Finally, we provide convergence results of SAM for non-convex objectives when used with stochastic gradients. We illustrate these results empirically for deep networks and discuss their relation to the generalization behavior of SAM. The code of our experiments is available at {\small \url{https://github.com/tml-epfl/understanding-sam}}.
\end{abstract}

\section{Introduction}
\label{sec:intro}
Understanding generalization of overparametrized deep neural networks is a central topic of machine learning. 
Training objective has many global optima where the training data are perfectly fitted \citep{zhang2016understanding}, but different global optima lead to dramatically different generalization performance \citep{liu2019bad}.
However, it has been observed that stochastic gradient descent (SGD) tends to converge to well-generalizing solutions, even \textit{without} any explicit regularization methods \citep{zhang2016understanding}.
This suggests that the leading role is played by the \textit{implicit} bias of the optimization algorithms used~\citep{neyshabur2014search}: when the training objective is minimized using a particular algorithm and initialization method, it converges to a specific solution with favorable generalization properties. 
However, even though SGD has a very beneficial implicit bias, significant overfitting  can still occur, particularly in the presence of label noise \citep{nakkiran2019deep} and adversarial perturbations \citep{rice2020overfitting}. %

Recently it has been observed that the \textit{sharpness} of the training loss, i.e., how quickly it changes in some neighborhood around the parameters of the model, correlates well with the generalization error~\citep{keskar2016large, jiang2019fantastic}, and generalization bounds related to the sharpness have been derived \citep{dziugaite2018entropy}. The idea of minimizing the sharpness to improve generalization has motivated recent works of \citet{foret2021sharpnessaware}, \citet{zheng2020regularizing}, and \citet{wu2020adversarial} which propose to use worst-case perturbations of the weights on every iteration of training in order to improve generalization. 
We refer to this method as \textit{Sharpness-Aware Minimization} (SAM) and focus mainly on the version proposed in \citet{foret2021sharpnessaware} that performs only one step of gradient ascent to approximately solve the weight perturbation problem before updating the weights.

Despite the fact that SAM significantly improves generalization in various settings, the existing justifications based on the generalization bounds provided by \citet{foret2021sharpnessaware} and \citet{wu2020adversarial} do not seem conclusive. 
The main reason is that their generalization bounds do not distinguish the robustness to \textit{worst-case} weight perturbation from \textit{average-case} robustness to Gaussian noise. However the latter does not sufficiently improve generalization as both \citet{foret2021sharpnessaware} and \citet{wu2020adversarial} report. 
Furthermore, their analysis does not distinguish whether the worst-case weight perturbation is computed based on some or on all training examples. As we will discuss, this feature has a crucial impact on generalization. 

In our paper, we aim to further investigate the reasons for SAM's success and make the following contributions: %
\begin{itemize}
    \item We discuss why the current understanding of the success of SAM which is based on a PAC-Bayesian generalization bound and on convergence to a flatter minimum is incomplete.
    \item We test hypotheses regarding why maximization in SAM taken over \textit{fewer} training points can lead to better generalization and conclude that the benefit is likely to come from the better objective.
    \item We study the implicit bias of this objective theoretically for diagonal linear networks. For non-linear networks, we study the implicit bias empirically and relate it to the theoretical model.
    \item We prove convergence of SAM for non-convex objectives in the stochastic setting. We check convergence empirically for deep networks and relate it to the generalization behavior of SAM.
\end{itemize}

\section{Background on SAM}
\label{sec:background_on_sam}

\myparagraph{Related work.}
Here we discuss relevant works on robustness in the \textit{weight space} and its relation to generalization.
Works on weight-space robustness of neural networks date back at least to the 1990s \citep{murray1993synaptic, hochreiter1995simplifying}.
Random perturbations of the weights are used extensively in deep learning \citep{jim1996analysis, graves2013speech}, and most prominently in approaches such as dropout \citep{srivastava2014dropout}. %
Many practitioners have observed that using SGD with larger batches for training leads to worse generalization \citep{lecun2012efficient}, and \citet{keskar2016large} have shown that this degradation of performance is \textit{correlated} with the sharpness of the found parameters. 
This observation has motivated many further works which focus on closing the generalization gap between small-batch and large-batch SGD~\citep{wen2018smoothout, haruki2019gradient, lin2020extrapolation}.
More recently, \citet{jiang2019fantastic} have shown a strong correlation between the sharpness and the generalization error on a large set of models under a variety of different settings hyperparameters, beyond the batch size.
This has motivated the idea of minimizing the sharpness during training to improve standard generalization, leading to Sharpness-Aware Minimization (SAM)~\citep{foret2021sharpnessaware}.
SAM modifies SGD such that on every iteration of training, the gradient is taken not at the current iterate but rather at a worst-case point in its vicinity. 
\hl{\citet{zheng2020regularizing} concurrently propose a similar weight perturbation method which also successfully improves standard generalization on multiple deep learning benchmarks.}
\citet{wu2020adversarial} have also proposed an almost identical algorithm with the same motivation, but with
the focus on improving robust generalization of adversarial training. 
On the theoretical side, \citet{mulayoff2020unique} study the sharpness properties of minima of deep linear network, and \citet{neu2021information, wang2022on} study generalization bounds based on average-case sharpness and quantities related to the optimization trajectory of SGD.

\myparagraph{Sharpness.}
Let $\Scal_{train} = \{x_i, y_i\}_{i=1}^n$ be the training data and $\l_i(\wv)$ be the loss of a classifier parametrized by weights $w \in \R^{|w|}$ and evaluated at point $(x_i, y_i)$. Then the \textit{sharpness} on a set of points $\Scal \subseteq \Scal_{train}$ is defined as:
\begin{align} 
    s(\wv, \Scal) \triangleq \maxop_{\norm{\deltav}_2 \leq \rho} \frac{1}{|\Scal|}\sum_{i: (x_i, y_i) \in \Scal} \l_i(\wv + \deltav) - \l_i(\wv).
    \label{eq:sharpness}
\end{align}
In most of the past literature, sharpness is defined for $\Scal = \Scal_{train}$ \citep{keskar2016large, neyshabur2017exploring, jiang2019fantastic}. However, \citet{foret2021sharpnessaware} recently introduced the notion of \textit{$m$-sharpness} which is the average of the sharpness computed over all the batches $\Scal$ of size $m$ from the training set $\Scal_{train}$.

Lower sharpness is correlated with lower test error \citep{keskar2016large}, however, the correlation is not always perfect \citep{neyshabur2017exploring, jiang2019fantastic}. Moreover, the sharpness definition itself can be problematic since rescaling of incoming and outcoming weights of a node that leads to the same function can lead to very different sharpness values \citep{dinh2017sharp}.
\citet{kwon2021asam} suggest a sharpness definition that fixes this rescaling problem but other problems still exist such as the sensitivity of classification losses to the scale of the parameters \citep{neyshabur2017exploring}.

\begin{figure*}[t!]
    \centering
    \begin{subfigure}[t]{.35\textwidth}
        \caption{\hspace{8mm}\textbf{ResNet-18 on CIFAR-10}}
        \vspace{-2mm}
        \includegraphics[width=1.0\columnwidth]{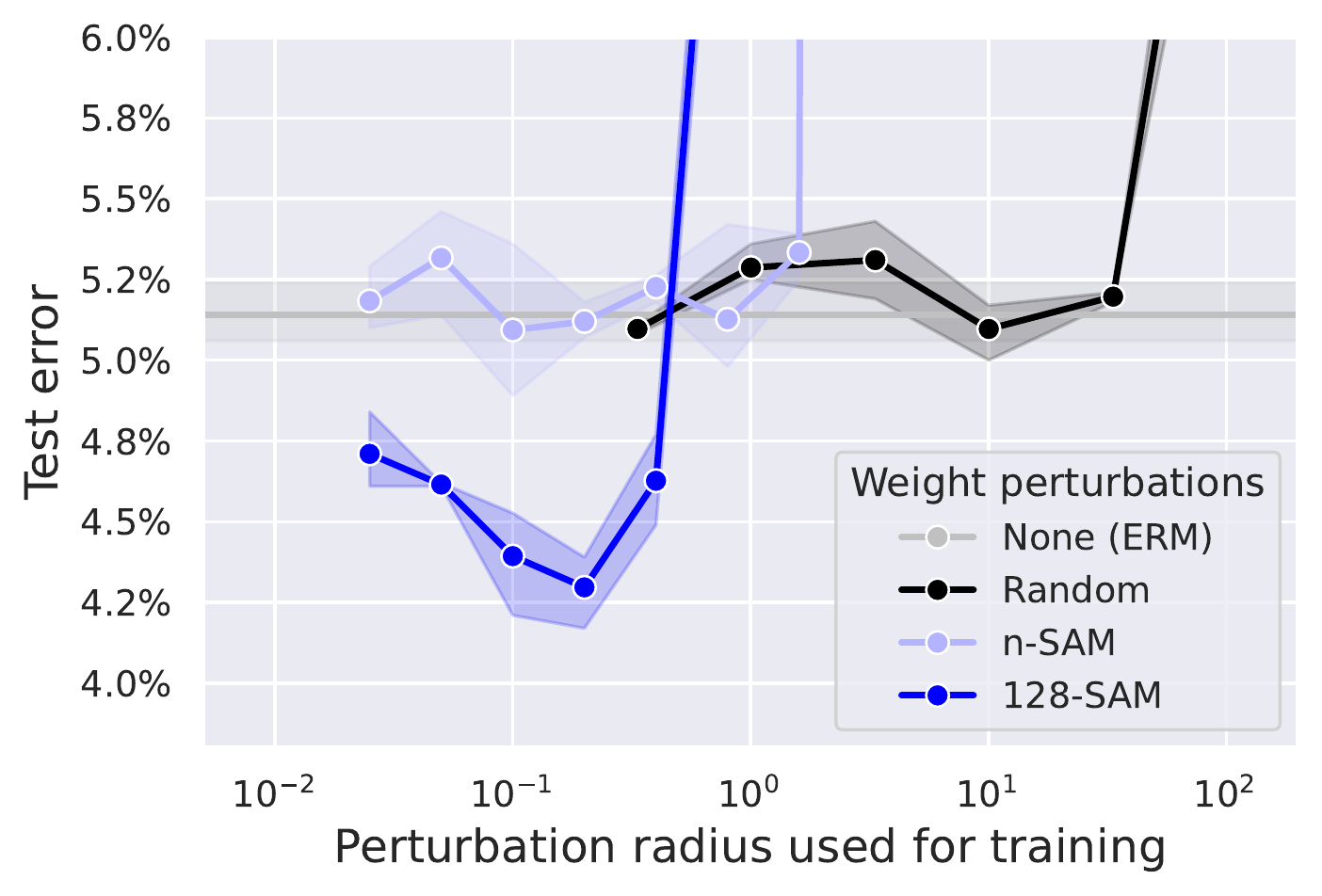}
    \end{subfigure}
    \quad \quad
    \begin{subfigure}[t]{.35\textwidth}
        \caption{\hspace{9mm}\textbf{ResNet-34 on CIFAR-100}}
        \vspace{-2mm}
        \includegraphics[width=1.0\columnwidth]{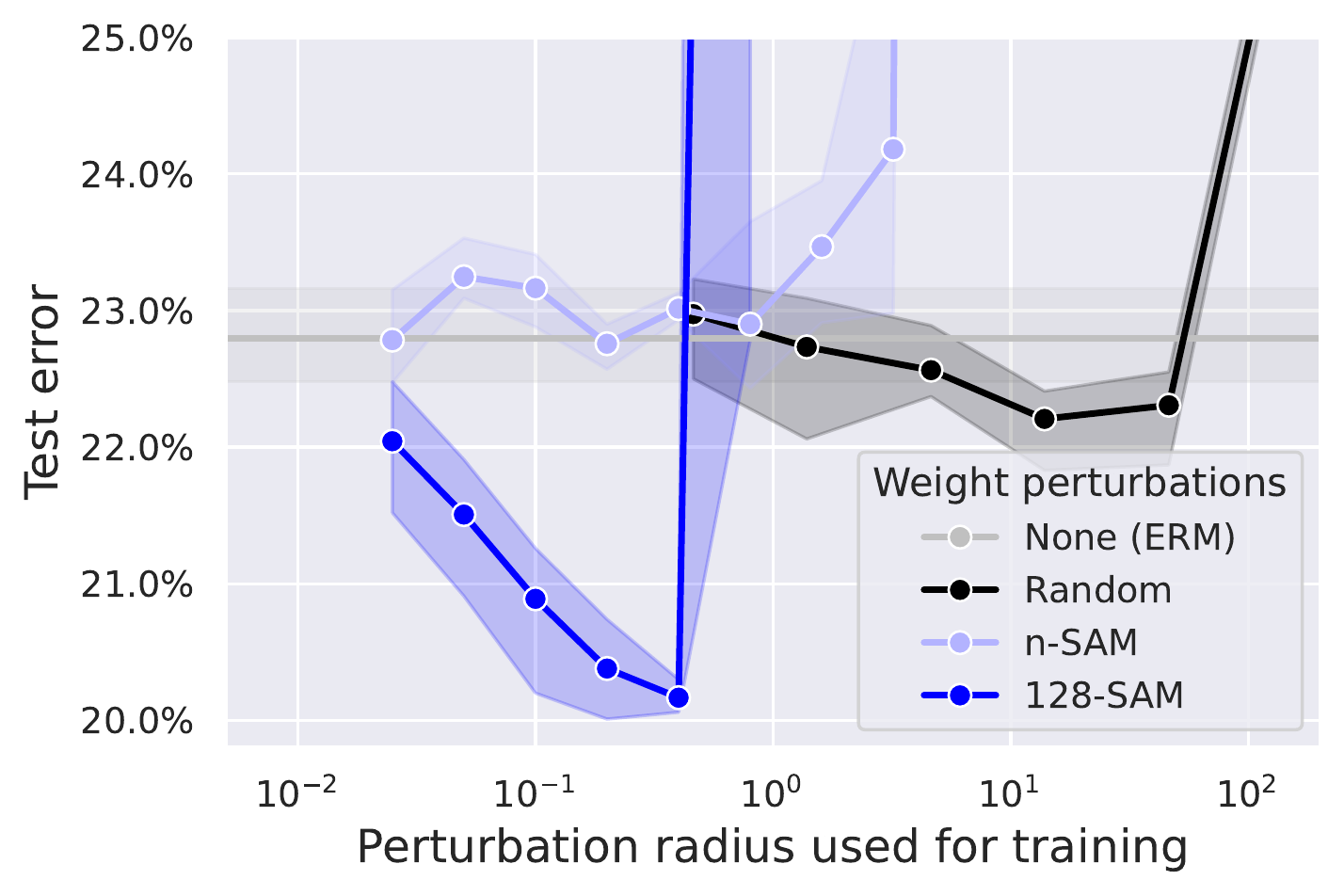}
    \end{subfigure}
    \vspace{-1.5mm}
    \caption{Comparison of different weight perturbation methods: no perturbations (ERM), random perturbations prior to taking the gradient on each iteration, $n$-SAM, and $128$-SAM (see Sec.~\ref{sec:background_on_sam} for the notation). All models are trained with standard data augmentation and small batch sizes ($128$). We observe that among these methods only $m$-SAM with a low $m$ (i.e., $128$-SAM) substantially improves generalization.
    }
    \label{fig:gen_bound_is_not_predictive}
\end{figure*}
\myparagraph{Sharpness-aware minimization.}
\citet{foret2021sharpnessaware} theoretically base the SAM algorithm on the following objective: %
\begin{align}
    \text{\textbf{$\bm n$-SAM}:} \quad \minop_{\wv \in \R^{|w|}} \maxop_{\norm{\deltav}_2 \leq \rho} \sum_{i=1}^{n} \l_i(\wv + \deltav),
    \label{eq:n_sam}
\end{align} 
which we denote as \textbf{$\bm n$-SAM} since it is based on maximization of the sum of the losses over the $n$ training points.
They justify this objective via a PAC-Bayesian generalization bound, although they show empirically (see Fig.~3 therein) that the following objective leads to better generalization:
\begin{align}
    \text{\textbf{$\bm m$-SAM}:}  \minop_{\wv \in \R^{|w|}} %
    \sum_{\substack{ \Scal\subset \Scal_{train},\\
                             |\Scal|=m}}
                               \maxop_{\norm{\deltav}_2 \leq \rho} \sum_{i\in \Scal} \l_i(\wv + \deltav),
    \label{eq:m_sam}
\end{align} 
which we denote as \textbf{$\bm m$-SAM} since it is based on maximization of the sum of the losses over batches of $m$ training points and therefore related to the $m$-sharpness.

To make SAM practical, \citet{foret2021sharpnessaware} propose to minimize the $m$-SAM objective with stochastic gradients.
Denoting the batch indices at time $t$ by $I_t$ ($|I_t|=m$), this leads to the following update rule on each iteration of training:
\begin{equation}\label{eq:sambatch}
    w_{t+1} = w_t -\frac{\gamma_t}{|I_t|}\sum_{i\in I_t} \nabla \ell_i \big(  w_t +\frac{\rho_t}{|I_t|} \sum_{j\in I_t} \nabla \ell_j(w_t) \big).
\end{equation}
Importantly, the \textit{same} batch $I_t$ is used for the inner and outer gradient steps. 
We note that $\rho_t$ can optionally include the gradient normalization suggested in \citet{foret2021sharpnessaware}, i.e., $\rho_t := \rho / \| \frac{1}{|I_t|} \sum_{j\in I_t} \nabla \ell_j(w_t)\|_2$. However, we show in Sec.~\ref{sec:convergence_sam} that its usage is not necessary for improving generalization, so we will omit it from our theoretical analysis.

\myparagraph{Importance of low-$m$, worst-case perturbations.}
In order to improve upon ERM, \citet{foret2021sharpnessaware} use SAM with low-$m$ and worst-case perturbations. To clearly illustrate the importance of these two choices, we show the performance of 
the following weight perturbation methods: no perturbations (ERM), random perturbations (prior to taking the gradient on each iteration), $n$-SAM, and $128$-SAM.
We use ResNet-18 on CIFAR-10 and ResNet-34 on CIFAR-100 \citep{krizhevsky2009learning} with standard data augmentation and batch size $128$ and refer to App.~\ref{sec:app_exp_details} for full experimental details, %
including our implementation of $n$-SAM. 
Fig.~\ref{fig:gen_bound_is_not_predictive} clearly suggests that (1) the improvement from random perturbations is marginal, and (2) the only method that substantially improves generalization is low-$m$ SAM (i.e., $128$-SAM). Thus, worst-case perturbations and the use of $m$-sharpness in SAM are essential for the generalization improvement (which depends continuously on $m$ as noted by \citet{foret2021sharpnessaware}, see Fig.~\ref{fig:test_err_sharpness_vs_rho_many_diff_m} in App.~\ref{subsec:app_effect_of_m}).
We also note that using too low $m$ is inefficient in practice since it does not fully utilize the computational accelerators such as GPUs. 
Thus, using higher $m$ values (such as $128$) helps to balance the generalization improvement with the computational efficiency.
Finally, we note that using SAM with large batch sizes without using a smaller $m$ leads to suboptimal generalization (see Fig.~\ref{fig:test_err_sharpness_vs_rho_many_diff_bs} in App.~\ref{subsec:app_effect_of_batch_size}).

\section{Challenging the Existing Understanding of SAM}

In this section, we show the limitations of the current understanding of SAM. %
In particular, we discuss that the generalization bounds on which its only \textit{formal} justification relies on (such as those presented in \citet{foret2021sharpnessaware, wu2020adversarial, kwon2021asam}) cannot explain its success.
Second, we argue that contrary to a common belief, convergence of SAM to flatter minima measured in terms of $m$-sharpness does not always translate to better generalization.

\myparagraph{The existing generalization bound does not explain the success of SAM.} 
The main theoretical justification for SAM comes from the PAC-Bayesian generalization bound presented, e.g., in Theorem~2 of \citet{foret2021sharpnessaware}. However, the bound is derived for \textit{random} perturbations of the parameters, i.e. the leading term of the bound is %
$\E_{\delta \sim \mathcal{N}(0, \sigma)}\sum_{i=1}^n \l_i(w+\delta)$. The extension to \textit{worst-case} perturbations, i.e. %
$\max_{\norm{\delta}_2 \leq \rho} \sum_{i=1}^n \l_i(w+\delta)$, is done post hoc and only makes the bound less tight. Moreover, we can see empirically (Fig.~\ref{fig:gen_bound_is_not_predictive}) that \textit{both} training methods suggested by the derivation of this bound (random perturbations and $n$-SAM) do not substantially improve generalization. 
This generalization bound can be similarly extended to $m$-SAM by upper bounding the leading term via the maximum taken over mini-batches. However, this bound would incorrectly suggest that $128$-SAM should have the worst generalization among all the three weight-perturbation methods while it is the only method that successfully improves generalization. %

We note that coming up with tight generalization bounds %
even for well-established ERM for overparametrized models is an open research question \citep{nagarajan2019uniform}. One could expect, however, that at least the \textit{relative} tightness of the bounds could reflect the correct ranking between the three methods, but it is not the case. Thus, we conclude that the existing generalization bound cannot explain the generalization improvement of low-$m$ SAM. %

\myparagraph{A flatter minimum does not always lead to better generalization.}
One could assume that although the generalization bound that relies on $m$-sharpness is loose, $m$-sharpness can still be an important quantity for generalization. This is suggested by its better correlation with the test error compared to the sharpness computed on the whole training set \citep{foret2021sharpnessaware}. In particular, we could expect that convergence of SAM to better-generalizing minima can be explained by a lower $m$-sharpness of these minima.
To check this hypothesis, we select multiple models trained with group normalization\footnote{We consider networks with group normalization \citep{wu2018group} instead of the more common batch normalization \citep{ioffe2015batch} since we observed a large discrepancy between $m$-sharpness computed with the training-time vs. test-time batch normalization (see the experiment in Fig.~\ref{fig:bn_max_loss} in App.~\ref{subsec:app_sharpness_bn_modes}).} that achieve zero training error and measure their $m$-sharpness for $m=128$ and different perturbation radii $\rho$ in Fig.~\ref{fig:sharpness_erm_sam_bs}. %
We note that the considered networks are not reparametrized in an adversarial way \citep{dinh2017sharp} and they all use the same weight decay parameters which makes them more comparable to each other.
First of all, we observe that \textit{none} of the radii $\rho$ gives the correct ranking between the methods according to their test error, although $m$-sharpness ranks correctly SAM and ERM for the same batch size. %
In particular, we see that the minimum found by SAM with a large batch size (1024) is flatter than the minimum found by ERM with a small batch size (128) although the ERM model leads to a better test error: 6.17\% vs. 6.80\% on CIFAR-10 and 25.06\% vs. 28.31\% on CIFAR-100. This shows that it is easy to find counterexamples where flatter minima generalize worse.
\begin{figure}[t]
    \centering
    \footnotesize
    \begin{subfigure}[t]{.235\textwidth}
        \caption{\hspace{4mm}\textbf{ResNet-18 on CIFAR-10}}
        \vspace{-2mm}
        \includegraphics[width=1.0\columnwidth]{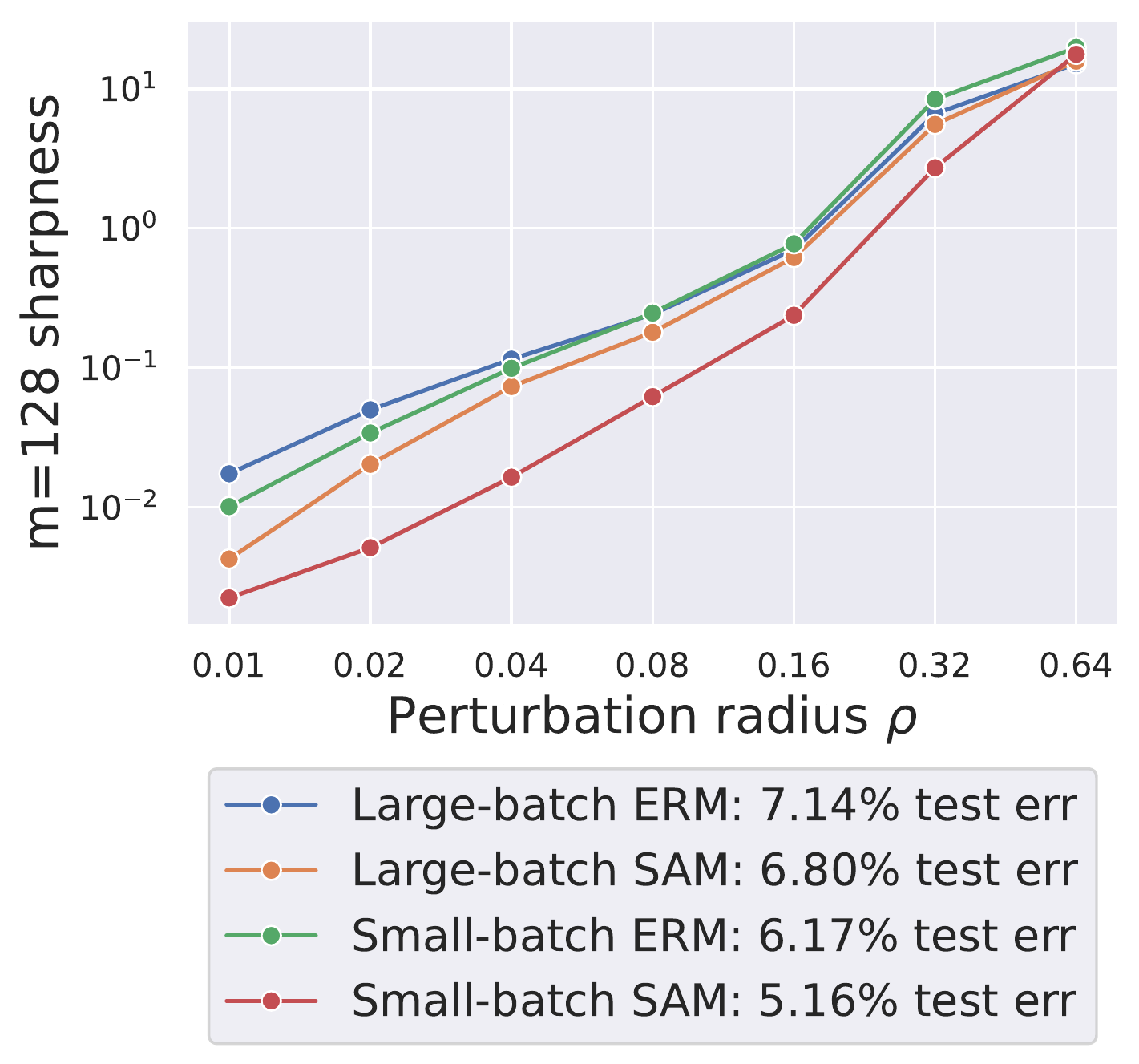}
    \end{subfigure}
    \begin{subfigure}[t]{.235\textwidth}
        \caption{\hspace{5mm}\textbf{ResNet-34 on CIFAR-100}}
        \vspace{-2mm}
        \includegraphics[width=1.0\columnwidth]{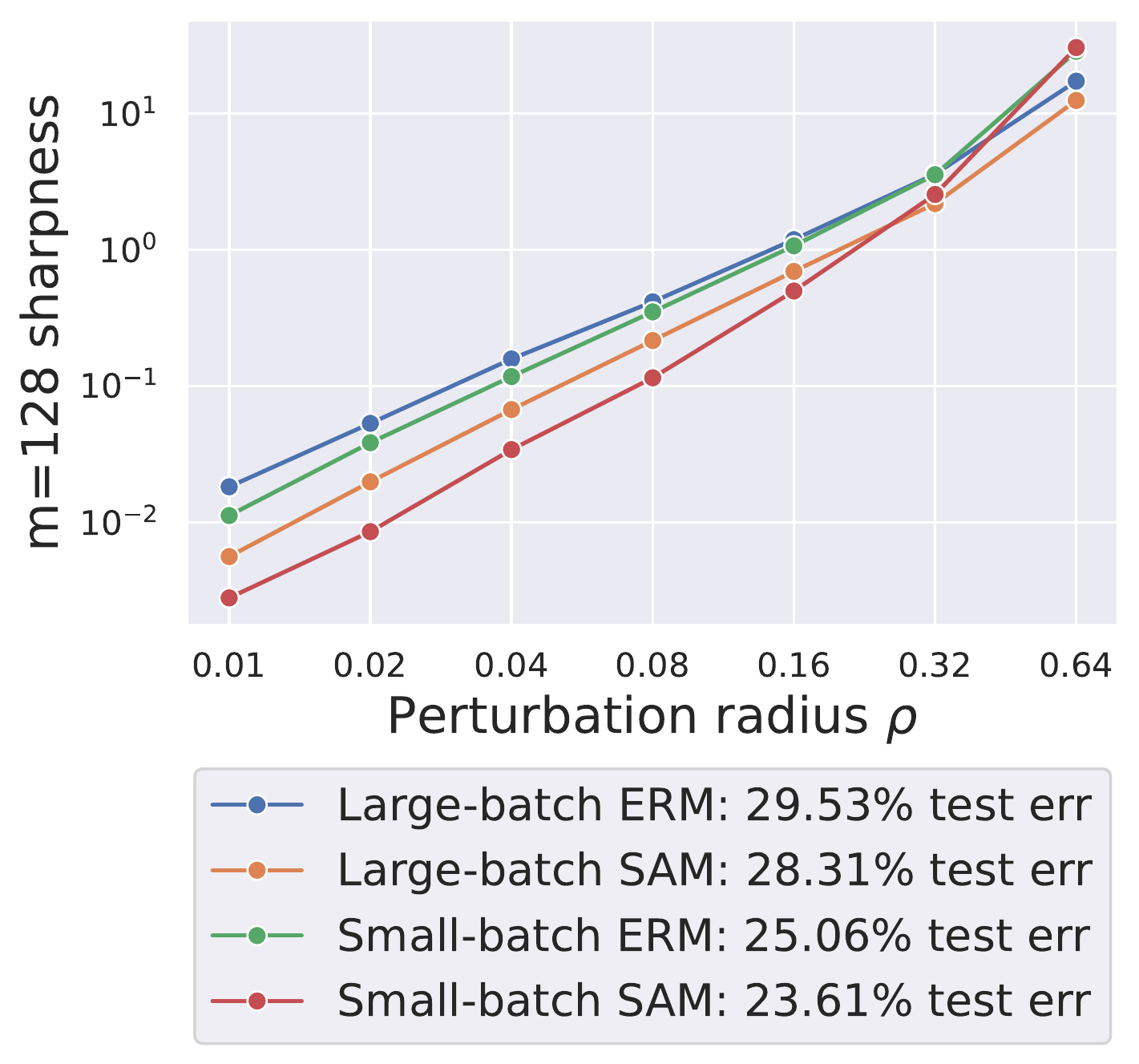}
    \end{subfigure}
    \vspace{-1mm}
    \caption{$m=128$ sharpness computed over different perturbation radii $\rho$ at the minima of ERM and SAM models trained with large (1024) and small batches (128). All models are trained with group normalization and achieve zero training error.}
    \label{fig:sharpness_erm_sam_bs}
\end{figure}

We further note that there are simple examples that illustrate that $m$-sharpness cannot be a universal quantity at distinguishing well-generalizing minima. E.g., consider a linear model $f_x(w) = \inner{w, x}$ and a decreasing margin-based loss $\l$, then the $1$-sharpness has a closed-form solution:
\begin{align} \nonumber
    &\sum_{i=1}^{n} \maxop_{\norm{\deltav}_2 \leq \rho} \l\left(y_i \inner{\wv + \deltav, \xv_i}\right) - \l\left(y_i \inner{\wv, \xv_i}\right) =\\
    &\sum_{i=1}^{n} \l\left(y_i\inner{\wv, \xv_i} - \rho \norm{\xv_i}_2\right) - \l\left(y_i \inner{\wv, \xv_i}\right). \nonumber
\end{align}
The $1$-sharpness is influenced only by the term $-\rho \norm{\xv_i}_2$ which does not depend on a specific $w$. In particular, it implies that all global minimizers $w^*$ of the training loss are \textit{equally sharp} according to the $1$-sharpness which, thus, cannot suggest which global minima generalize better. 

Since ($m$-)sharpness does not always distinguish better- from worse-generalizing minima, the common intuition about sharp vs. flat minima \citep{keskar2016large} can be incomplete. 
This suggests that it is likely that some other quantity is responsible for generalization which can be correlated with ($m$-)sharpness in \textit{some} cases, but not always.
This motivates us to develop a better understanding of the role of $m$ in $m$-SAM, particularly on simpler models which are amenable for a theoretical study.

\section{Understanding the Generalization Benefits of SAM}
In this section, we first check empirically whether the advantage of lower $m$ in $m$-SAM comes from a more accurate solution of the inner maximization problem or from specific properties of batch normalization. We conclude that it is not the case and hypothesize that the advantage comes rather from a better implicit bias of gradient descent induced by $m$-SAM. We characterize this implicit bias for diagonal linear networks showing that SAM can \textit{provably} improve generalization, and the improvement is larger for $1$-SAM than for $n$-SAM. Then we complement the theoretical results with experiments on deep networks showing a few intriguing properties of SAM.

\subsection{Testing Two Natural Hypotheses for Why Low $\bm{m}$ in $\bm{m}$-SAM Could be Beneficial}
\label{subsec:two_natural_hypotheses}
\begin{figure}[t]
    \centering
    \begin{subfigure}[t]{.235\textwidth}
        \caption{\hspace{3mm}\textbf{ResNet-18 on CIFAR-10}}
        \vspace{-2mm}
        \includegraphics[width=1.0\columnwidth]{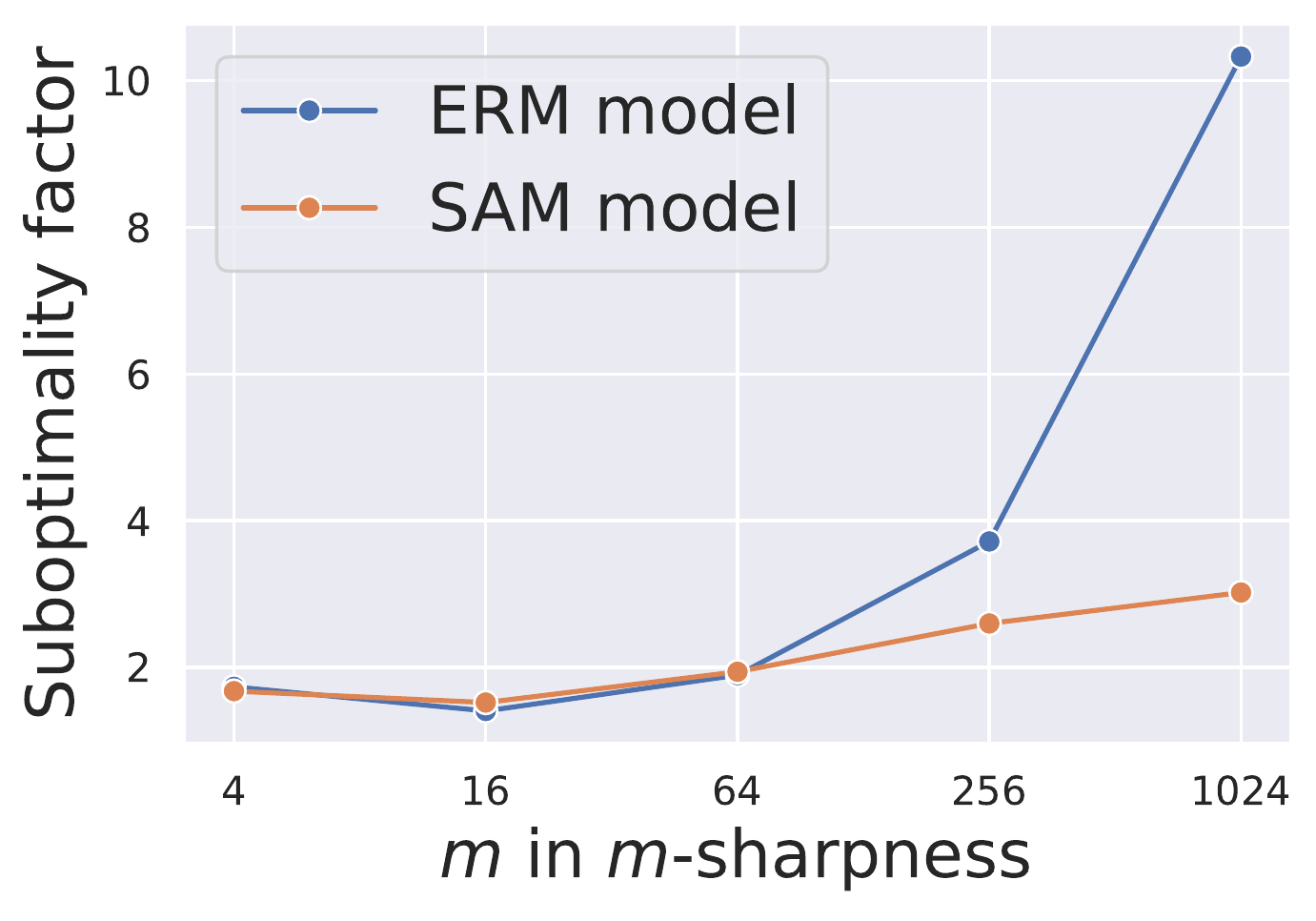}
    \end{subfigure}
    \begin{subfigure}[t]{.235\textwidth}
        \caption{\hspace{4mm}\textbf{ResNet-34 on CIFAR-100}}
        \vspace{-2mm}
        \includegraphics[width=1.0\columnwidth]{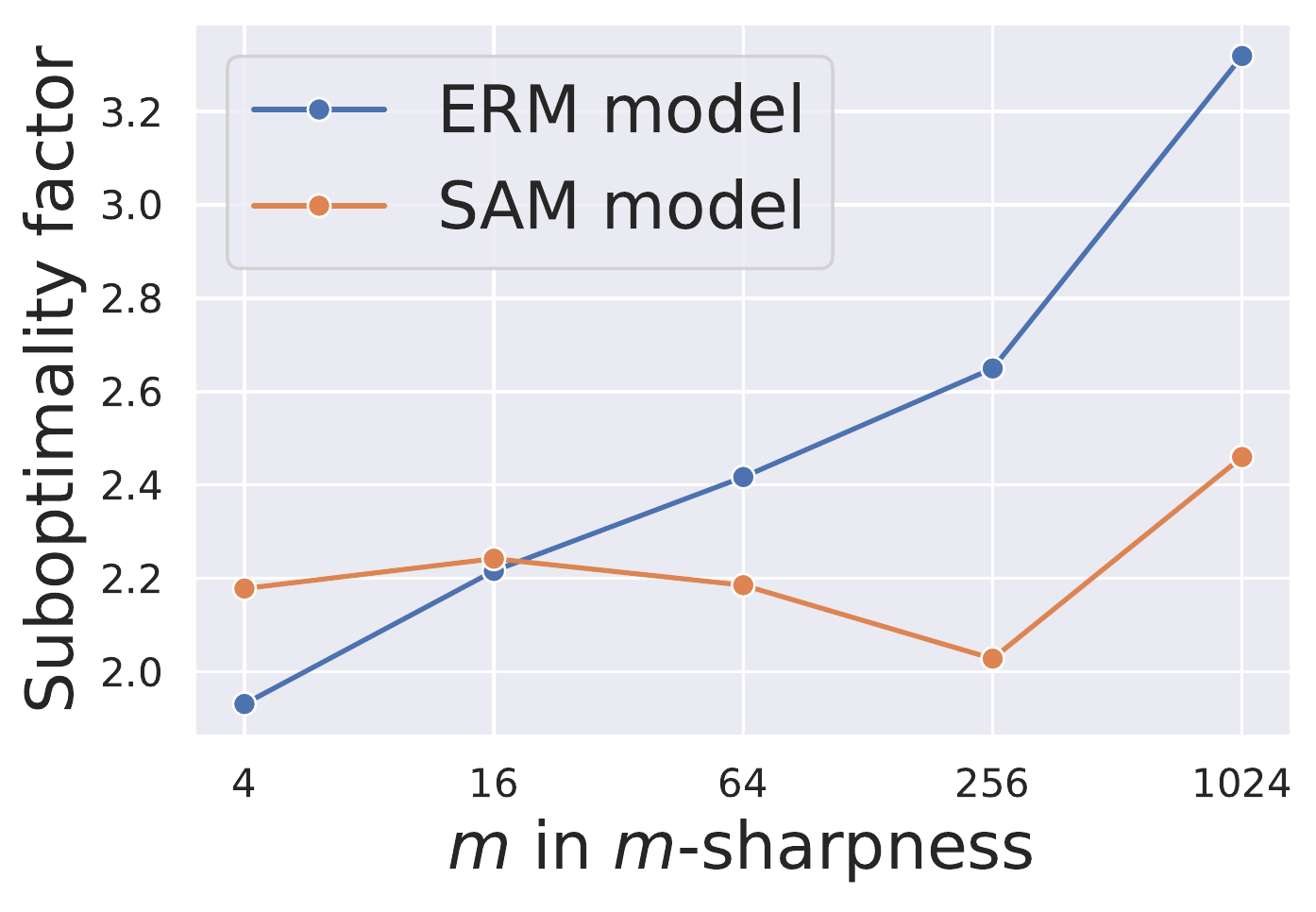}
    \end{subfigure}
    \vspace{-1.5mm}
    \caption{Suboptimality factor of $m$-sharpness ($\rho=0.1$) computed using 100 steps of projected gradient ascent compared to only 1 step for ERM and SAM models with group normalization.}
    \label{fig:sharpness_suboptimality}
\end{figure}
\begin{figure}[t]
    \centering
    \begin{subfigure}[t]{.235\textwidth}
        \caption{\hspace{4mm}\textbf{ResNet-18 on CIFAR-10}}
        \vspace{-2mm}
        \includegraphics[width=1.0\columnwidth]{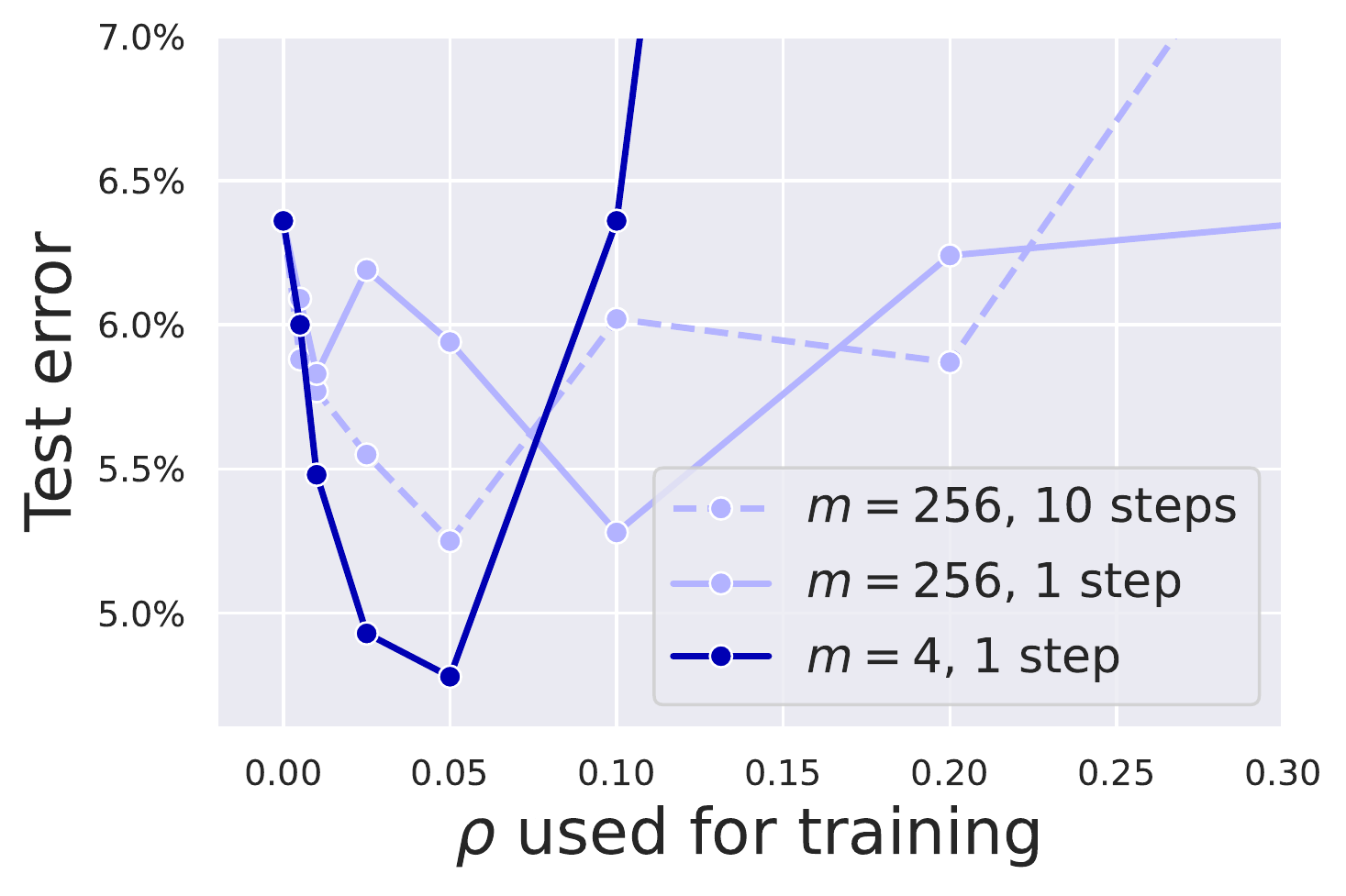}
    \end{subfigure}
    \begin{subfigure}[t]{.235\textwidth}
        \caption{\hspace{5mm}\textbf{ResNet-34 on CIFAR-100}}
        \vspace{-2mm}
        \includegraphics[width=1.0\columnwidth]{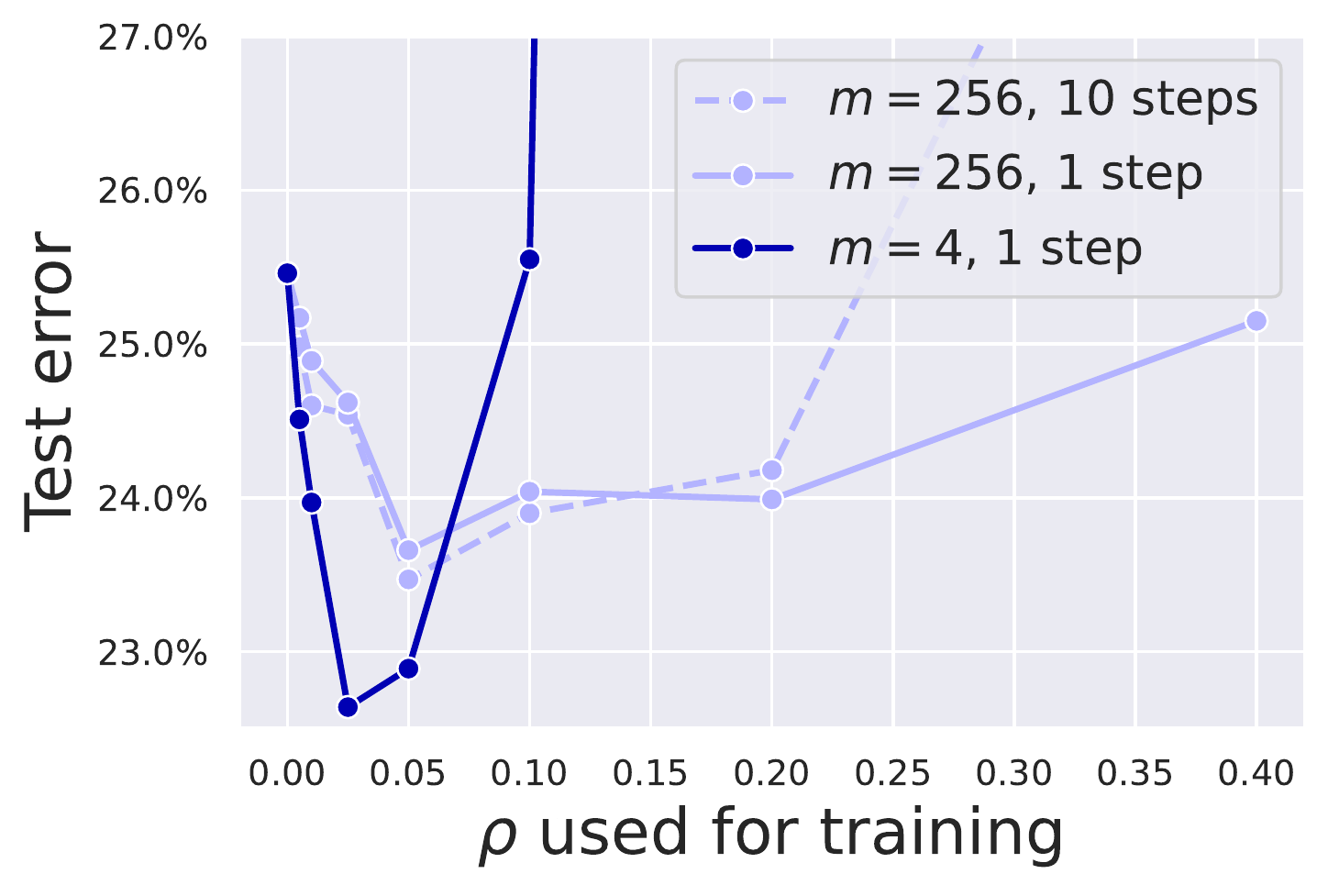}
    \end{subfigure}
    \vspace{-2mm}
    \caption{Test error of SAM models with group normalization trained with different numbers of projected gradient ascent steps ($10$ vs. $1$) for $m$-SAM and different $m$ values ($256$ vs. $4$) using batch size $256$.}
    \label{fig:test_err_sharpness_vs_rho_diff_m}
\end{figure}
As illustrated in Fig.~\ref{fig:gen_bound_is_not_predictive}, the success of $m$-SAM fully relies on the effect of low $m$ which is, however, remains unexplained in the current literature. As a starting point, we could consider the following two natural hypotheses for why low $m$ could be beneficial.

\textbf{Hypothesis 1}: \textbf{lower $m$ leads to more accurate maximization.} 
Since $m$-SAM relies only on a \textit{single} step of projected gradient ascent for the inner maximization problem in Eq.~(\ref{eq:m_sam}), it is unclear in advance how accurately this problem is solved. One could assume that using a lower $m$ can make the single-step solution more accurate as intuitively the function which is being optimized might become ``simpler'' due to fewer terms in the summation. Indeed, there is evidence towards this hypothesis: Fig.~\ref{fig:sharpness_suboptimality} shows the suboptimality factor between $m$-sharpness computed using 100 steps vs. 1 step of projected gradient ascent for $\rho=0.1$ (the optimal $\rho$ for $256$-SAM in terms of generalization) for ERM and SAM models. We can see that the suboptimality factor tends to increase over $m$ and can be as large as $10\times$ for the ERM model on CIFAR-10 for $m=1024$. This finding suggests that the standard single-step $m$-SAM can indeed fail to find an accurate maximizer and the value of $m$ can have a significant impact on it. 
However, despite this fact, using multiple steps in SAM \textit{does not} improve generalization as we show in Fig.~\ref{fig:test_err_sharpness_vs_rho_diff_m}. E.g., on CIFAR-10 it merely leads to a shift of the optimal $\rho$ from $0.1$ to $0.05$, without noticeable improvements of the test error. This is also in agreement with the observation from \citet{foret2021sharpnessaware} on why including second-order terms can slightly hurt generalization: solving the inner maximization problem more accurately leads to the fact that the same radius $\rho$ can become effectively too large (as on CIFAR-10) leading to worse performance.

\textbf{Hypothesis 2}: \textbf{lower $\bm{m}$ results in a better regularizing effect of batch normalization.}
As pointed out in \citet{hoffer2017train} and \citet{goyal2017accurate}, batch normalization (BN) has a beneficial regularization effect that depends on the mini-batch size. In particular, using the BN statistics from a smaller subbatch is coined as \textit{ghost batch normalization} \citep{hoffer2017train} and tends to improve generalization. Thus, it could be the case that the generalization improvement of $m$-SAM is due to this effect as its implementation assumes using a smaller subbatch of size $m$. To test this hypothesis, in Fig.~\ref{fig:test_err_sharpness_vs_rho_diff_m}, we show results of networks trained instead with \textit{group normalization} that does not lead to any extra dependency on the effective batch size. We can see that a significant generalization improvement by $m$-SAM is still achieved for low $m$ ($m=4$ for batch size $256$), and this holds for both datasets. Thus, the generalization improvement of $m$-SAM is not specific to BN.

We hypothesize instead that low-$m$ SAM leads to a better \textit{implicit} bias of gradient descent for commonly used neural network architectures, meaning that some important complexity measure of the model gets implicitly minimized over training that may not be obviously linked to $m$-sharpness.

\subsection{Provable Benefit of SAM for Diagonal Linear Networks}
\label{sec:impmain}
Here we theoretically study the implicit bias of full-batch $1$-SAM and $n$-SAM for diagonal linear networks on a sparse regression problem. We show that $1$-SAM has a better implicit bias than ERM \textit{and} $n$-SAM which explains its improved generalization in this setting.

\myparagraph{Implicit bias of $\bm{1}$-SAM and $\bm{n}$-SAM.}
\begin{figure}[t]
    \centering
    \includegraphics[width=.32\columnwidth]{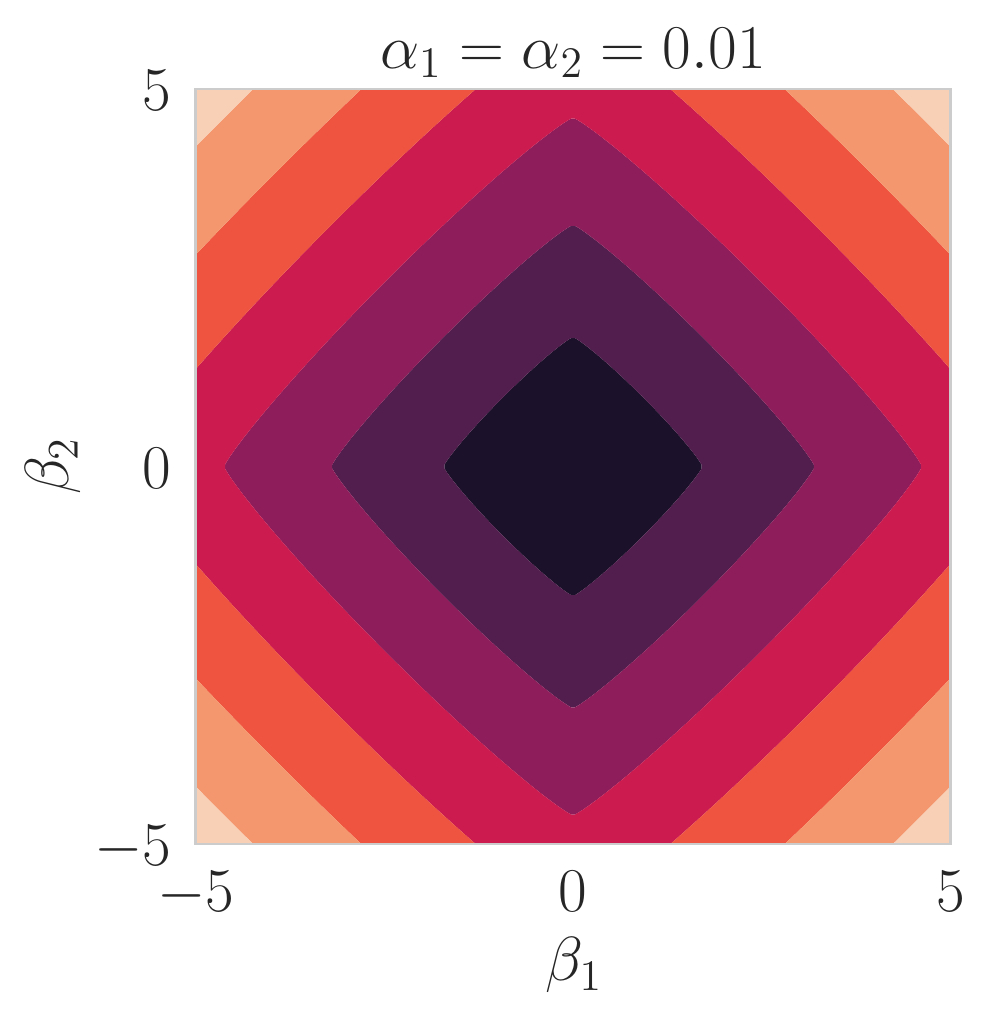}
    \includegraphics[width=.32\columnwidth]{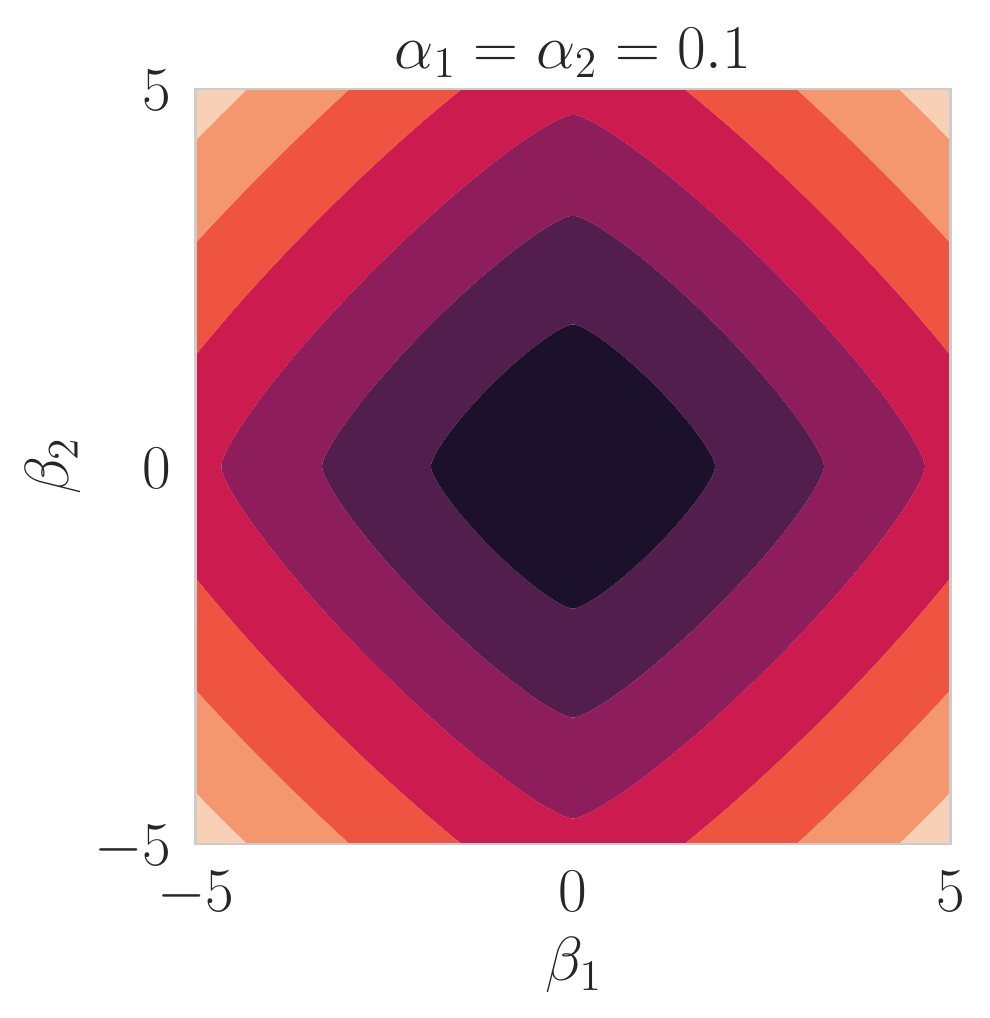}
    \includegraphics[width=.32\columnwidth]{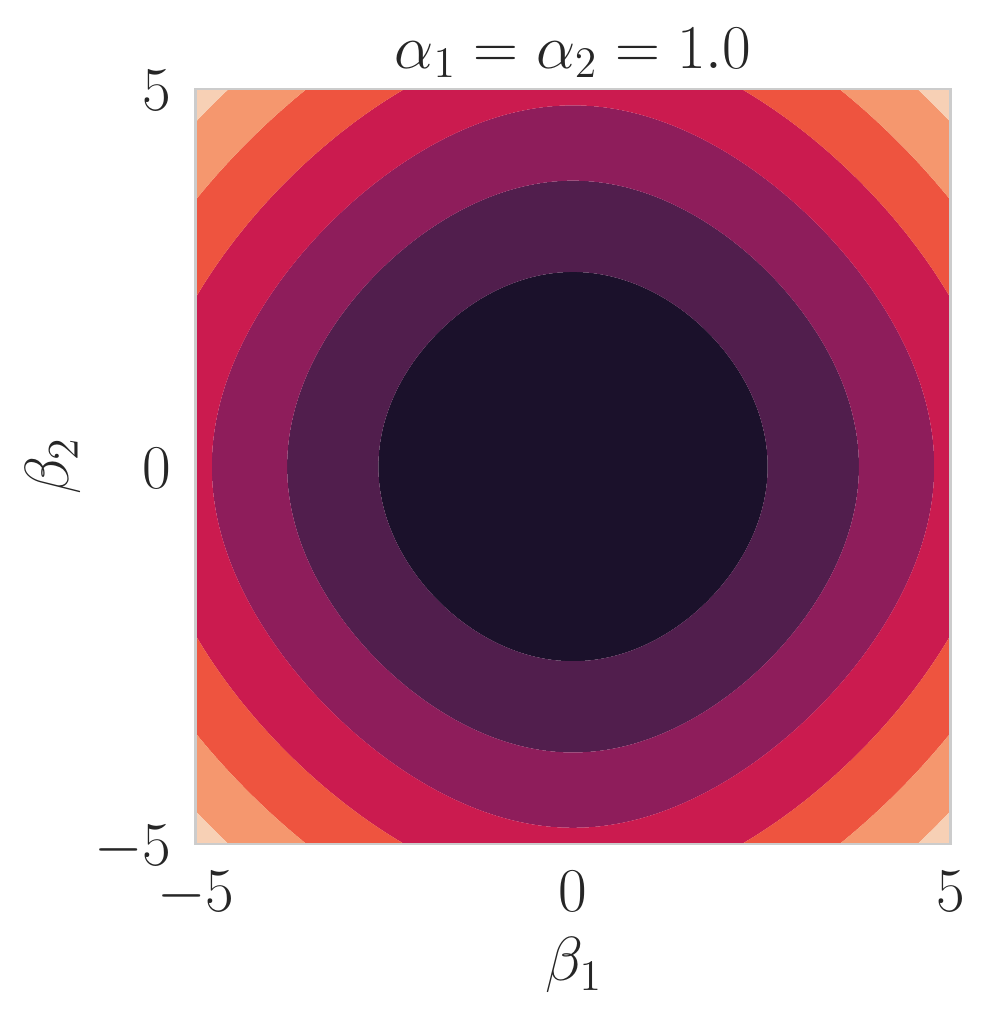}
    \vspace{-2mm}
    \caption{Illustration of the hyperbolic entropy $\phi_\alpha(\beta)$ for $\beta \in \R^2$ that interpolates between $\|\beta\|_1$ for small $\alpha$ and $\|\beta\|_2$ for large $\alpha$.}
    \label{fig:hyperbolic_entropy}
\end{figure}
The implicit bias of gradient methods is well understood for overparametrized linear models where all gradient-based algorithms enjoy the same implicit bias towards minimization of the $\ell_2$-norm of the parameters. For diagonal linear neural networks, where a linear predictor $\langle \beta, x\rangle$ can be parametrized via $\beta = w_+^2 - w_-^2$\footnote{See \citet{woodworth2020kernel} for why this parametrization is equivalent to a diagonal network $\beta = u \odot v$. Moreover, the signs of $u_i$ and $v_i$ will not change throughout training, hence the use of the notation $w_+$ and $w_-$.} with a parameter vector $w = \left[\begin{smallmatrix}w_+\\w_-\end{smallmatrix}\right] \in \R^{2 d}$, first-order algorithms have a richer implicit bias. %
We consider here an overparametrized sparse regression problem, meaning that the ground truth $\beta^*$ is a sparse vector, with the squared loss:
\vspace{-5mm}
\begin{align}\label{eq:quadpro}
    L(w) := \frac{1}{4 n} \sum_{i = 1}^n (\langle w_+^2 - w_-^2, x_i \rangle- y_i)^2, 
\end{align}
where overparametrization means that $n \ll d$ and there exist many $w$ such that $L(w)=0$. We note that in our setting, any global minimizer $w^*$ of $L(w^*)$ is also a global minimizer for the $m$-SAM algorithm for any $m \in \{1, \dots, n\}$ since all per-example gradients are zero and hence the ascent step of SAM will not modify $w^*$. Thus, any difference in generalization between $m$-SAM and ERM has to be attributed rather to the \textit{implicit} bias of each of these algorithms.

We first recall the seminal result of \citet{woodworth2020kernel} and refer the readers to App.~\ref{sec:app_implicit_bias} for further details. Assuming global convergence, the solution selected by the 
gradient flow initialized as $w_+ = w_- = \alpha \in \R_{>0}^d$ and denoted $\beta_\infty^\alpha$ solves the following constrained optimization problem:
\begin{align}
\label{eq:potential_function}
    \beta_\infty^\alpha = \underset{\beta \in \R^d \ \text{s.t.} \ X\beta = y}{\argmin} \phi_\alpha (\beta),
\end{align}
where the potential $\phi_\alpha$ is given as $\phi_\alpha  (\beta) =  \sum_{i=1}^d \alpha_i^2 q(\beta_i/\alpha_i^2)$ with  $q(z)=2-\sqrt{4+z^2}+z\arcsinh(z/2)$.
As illustrated in Fig.~\ref{fig:hyperbolic_entropy}, $\phi_\alpha$ interpolates between the $\ell_1$ and the $\ell_2$ norms of $\beta$ according to the initialization scale $\alpha$. Large $\alpha$'s lead to low $\ell_2$-type solutions, while small $\alpha$'s lead to low $\ell_1$-type solutions which are known to induce good generalization properties for sparse problems~\citep{woodworth2020kernel}. 

Our main theoretical result is that both $1$-SAM and $n$-SAM dynamics, when considered in their full-batch version (see Sec.~\ref{sec:app_fullbatch} for details), bias the flow towards solutions which minimize the potential $\phi_\alpha$ but with effective parameters $\alpha_{\text{1-SAM}}$ and $\alpha_{\text{n-SAM}}$ which are strictly smaller than $\alpha$ for a suitable inner step size $\rho$. In addition, typically $\|\alpha_{\text{1-SAM}}\|_1 < \|\alpha_{\text{n-SAM}}\|_1$ and, therefore, the solution chosen by $1$-SAM has better sparsity-inducing properties than the solution of $n$-SAM and standard ERM. 
\begin{theorem}[Informal] \label{theorem:implicit_bias_main}
    Assuming global convergence, the solutions selected by the full-batch versions of the $1$-SAM and $n$-SAM algorithms taken with infinitesimally small step sizes and initialized at $w_+ = w_- = \alpha \in \R_{>0}^d$, solve the optimization problem~(\ref{eq:potential_function}) with effective parameters:
    \begin{equation*}
       \alpha_{\text{1-SAM}} = \alpha \odot e^{- \rho \Delta_{\text{1-SAM}}  +O(\rho^2)},
       \ \ 
       \alpha_{\text{n-SAM}} = \alpha \odot e^{- \rho \Delta_{\text{n-SAM}}  +O(\rho^2)},
    \end{equation*} 
    where $\Delta_{\text{1-SAM}}, \Delta_{\text{n-SAM}} \in \R^d_+$ for which typically:
    \begin{align*}
         &\|\Delta_{\text{1-SAM}}\|_1\approx d \int_0^\infty L(w(s))ds 
         \text{\ \ \ and\ \ \ } \\
         &\|\Delta_{\text{n-SAM}}\|_1\approx \frac{d}{n} \int_0^\infty L(w(s))ds.
    \end{align*}
\end{theorem}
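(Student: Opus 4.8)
The plan is to reduce both algorithms, in their full-batch infinitesimal-step-size limit, to a gradient-flow-like dynamics that still has the multiplicative structure exploited by \citet{woodworth2020kernel}, and to read off how the inner ascent step perturbs the effective initialization. Passing to continuous time and expanding the ascent step to first order in $\rho$: writing $r_i(w) = \langle w_+^2 - w_-^2, x_i\rangle - y_i$, one has $\nabla_{w_+}\ell_i = r_i\,(x_i \odot w_+)$, so example $i$'s ascent point is $w_+\odot(1+\rho r_i x_i)$, $w_-\odot(1-\rho r_i x_i)$. Substituting back and collecting $O(\rho)$ terms, a short computation puts the $1$-SAM flow in the separable form $\dot w_+ = -w_+\odot(\nu+\rho\kappa)$, $\dot w_- = +w_-\odot(\nu-\rho\kappa)$, where $\nu(t) = \tfrac1n\sum_i c_i(t)\,x_i$ lies in $\mathrm{span}\{x_i\}$ (the $c_i$ being residuals dressed by an $O(\rho)$ scalar factor) and $\kappa(t) = \tfrac1n\sum_i r_i(t)^2\,x_i^2 \in \R^d_{\geq 0}$ (coordinate-wise squares) is the ``bias'' term, generically outside $\mathrm{span}\{x_i\}$. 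The $n$-SAM case is identical with the single shared ascent direction $\rho\nabla L$, giving $\kappa(t)=\mu(t)^2$ for $\mu(t)=\tfrac1n\sum_i r_i(t)x_i$. The crucial observation is that in both flows $\kappa$ enters the $w_+$ and $w_-$ equations with the \emph{same} sign, so integrating the separable ODEs yields $w_\pm(t) = \tilde\alpha(t)\odot e^{\mp\zeta(t)}$ with $\tilde\alpha(t)=\alpha\odot\exp(-\rho\int_0^t\kappa)$ and $\zeta(t)=\int_0^t\nu\in\mathrm{span}\{x_i\}$, hence $\beta(t) = -2\,\tilde\alpha(t)^2\sinh(2\zeta(t))$ coordinate-wise, i.e. exactly a gradient-flow trajectory with a time-varying, strictly contracting effective initialization.

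Next I would invoke the global-convergence hypothesis so that $\zeta_\infty=\int_0^\infty\nu$ and $\Delta:=\int_0^\infty\kappa$ are finite and $X\beta_\infty=y$. Since $q'(z)=\arcsinh(z/2)$, the potential satisfies $\big(\nabla\phi_{\tilde\alpha}(\beta)\big)_j = \arcsinh\!\big(\beta_j/(2\tilde\alpha_j^2)\big)$, so from $\beta_\infty = -2\,\tilde\alpha_\infty^2\sinh(2\zeta_\infty)$ we get $\nabla\phi_{\tilde\alpha_\infty}(\beta_\infty) = -2\zeta_\infty\in\mathrm{span}\{x_i\}$; together with $X\beta_\infty=y$ and convexity of $\phi_{\tilde\alpha_\infty}$ these are precisely the KKT conditions for problem~(\ref{eq:potential_function}), so $\beta_\infty$ minimizes $\phi_{\tilde\alpha_\infty}$ over $\{X\beta=y\}$ with effective parameter $\alpha_{\text{SAM}}=\tilde\alpha_\infty = \alpha\odot e^{-\rho\Delta}$ (the $O(\rho^2)$ in the statement absorbing the discarded higher-order ascent terms), where $\Delta_{\text{1-SAM}}$ comes from $\kappa=\tfrac1n\sum_i r_i^2 x_i^2$ and $\Delta_{\text{n-SAM}}$ from $\kappa=(\tfrac1n\sum_i r_i x_i)^2$.

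To extract the ``typical'' magnitudes, to leading order in $\rho$ I would evaluate these integrals along the unperturbed gradient-flow trajectory $w(s)$: then $\|\Delta_{\text{1-SAM}}\|_1 = \int_0^\infty\tfrac1n\sum_i r_i(s)^2\|x_i\|_2^2\,ds$ and $\|\Delta_{\text{n-SAM}}\|_1 = \int_0^\infty\|\tfrac1n\sum_i r_i(s)x_i\|_2^2\,ds$, and under the mild normalization $\|x_i\|_2^2\approx d$ with near-orthogonal rows these become $\approx d\int_0^\infty L(w(s))\,ds$ and $\approx\tfrac{d}{n}\int_0^\infty L(w(s))\,ds$ respectively, giving the stated comparison and hence the stronger pull of $1$-SAM toward the small-$\alpha$, sparsity-favoring regime.

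The main obstacle is making the perturbation step rigorous: one must show that for small $\rho$ the SAM flow stays $O(\rho)$-close to gradient flow uniformly over the whole half-line $[0,\infty)$ and that its loss still decays at an integrable rate, so that $\int_0^\infty\kappa$ (and $\int_0^\infty\nu$) converges and may be replaced by its gradient-flow value; this, together with the data-geometry assumption needed to turn $\sum_i r_i(s)^2\|x_i\|_2^2$ into a multiple of the loss, is exactly where the ``informal'' and ``typically'' qualifiers are doing the work. By contrast, once the trajectory is in the form $\beta=-2\tilde\alpha^2\sinh(2\zeta)$ with $\zeta\in\mathrm{span}\{x_i\}$, the algebraic identification of the KKT conditions via $\arcsinh$ is routine.
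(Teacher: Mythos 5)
Your proposal is correct and follows essentially the same route as the paper's proofs of Theorems~\ref{theorem:biasmaxsum} and \ref{theorem:biassummax}: integrate the separable multiplicative ODE, observe that the $O(\rho)$ correction $\kappa$ enters $w_+$ and $w_-$ with the same sign (hence contracts the effective initialization to $\alpha\odot e^{-\rho\int\kappa}$) while the residual term flips sign (hence yields the $\sinh$ form and the $\arcsinh$ KKT identification), and then use $\|x_i\|_2^2\approx d$ and Bai--Yin-type concentration to relate $\|\Delta_{\text{1-SAM}}\|_1$ and $\|\Delta_{\text{n-SAM}}\|_1$ to $d\int L$ and $\frac{d}{n}\int L$. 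Your closing remarks on where the argument is non-rigorous (uniform $O(\rho)$ closeness on $[0,\infty)$ and the data-geometry step) accurately locate the same gaps the paper leaves open under the ``informal''/``typically'' qualifiers.
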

The results are formally stated in Theorem~\ref{theorem:biasmaxsum} and \ref{theorem:biassummax} in App.~\ref{sec:app_implicit_bias}. $1$-SAM has better implicit bias properties since its effective scale of $\alpha$ is considerably smaller than the one of $n$-SAM due to the lack of the $\frac{1}{n}$ factor in the exponent. It is worth noting that the vectors $\Delta_{\text{1-SAM}}$ and $\Delta_{\text{n-SAM}}$ are linked with the integral of the loss function along the flow. Thereby, the speed of convergence of the training loss impacts the magnitude of the biasing effect: the slower the convergence, the better the bias, similarly to what is observed for SGD in~\citet{pesme2021implicit}. Extending this result to stochastic implementations of $1$-SAM and $n$-SAM algorithms could be done following \citet{pesme2021implicit} but is outside of the scope of this paper.

\begin{figure}[t]
    \centering
    \includegraphics[width=.48\columnwidth]{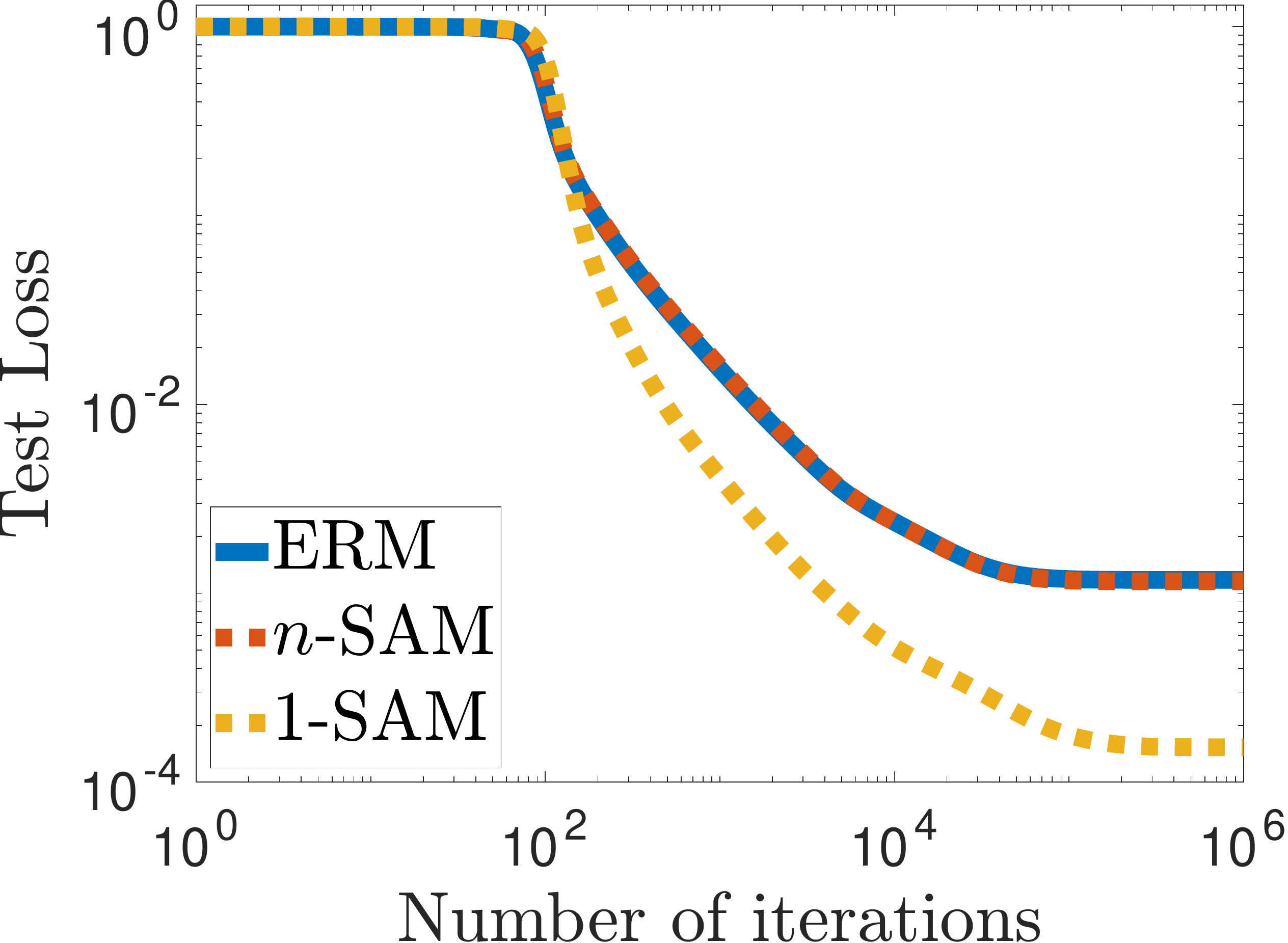}
    \includegraphics[width=.48\columnwidth]{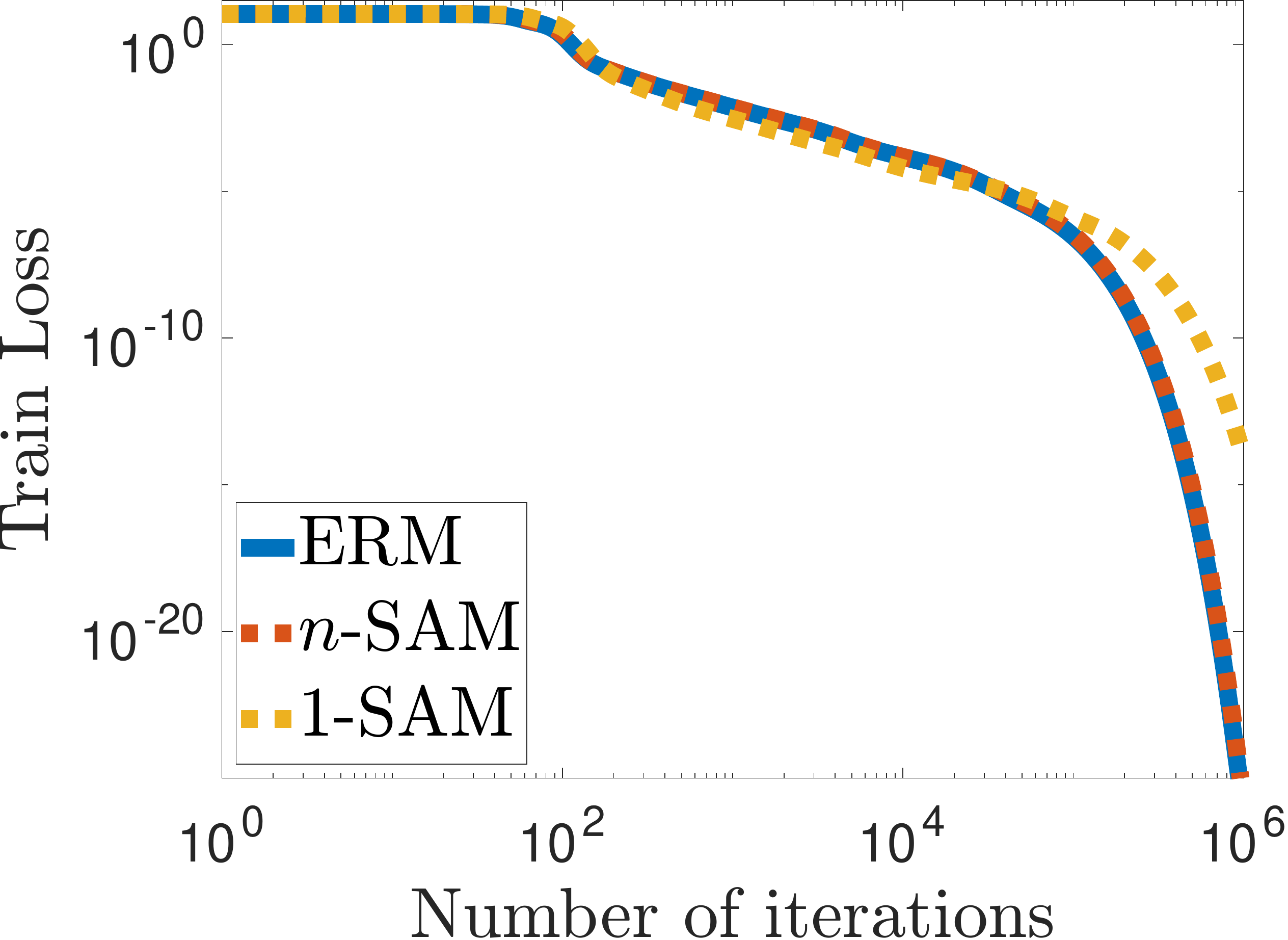}
    \caption{Implicit bias of $1$-SAM and $n$-SAM compared to ERM for a diagonal linear network on a sparse regression problem. We can see that $1$-SAM generalizes significantly better than $n$-SAM and ERM.}
    \label{fig:implicit_bias_sam}
\end{figure}
\myparagraph{Empirical evidence for the implicit bias.}
We compare the training and test loss of ERM, $1$-SAM, and $n$-SAM in Fig.~\ref{fig:implicit_bias_sam} for the same perturbation radius $\rho$, and for different $\rho$ in App.~\ref{subsec:comparison_1sam_nsam} (Fig.~\ref{fig:diag_net_gs_nsam_1sam}). 
As predicted, the methods show different generalization abilities: ERM and $n$-SAM achieve approximately the same performance whereas $1$-SAM clearly benefits from a better implicit bias. This is coherent with the deep learning experiments presented in Fig.~\ref{fig:gen_bound_is_not_predictive} on CIFAR-10 and CIFAR-100.
We also note that the training loss of all the variants is converging to zero but the convergence of $1$-SAM is slower. 
Additionally, we show a similar experiment with \textit{stochastic} variants of the algorithms in App.~\ref{subsec:comparison_1sam_nsam} (Fig.~\ref{fig:implicit_bias_sto}) where their performance is, as expected, better compared to their deterministic counterparts. %

\subsection{Empirical Study of the Implicit Bias in Non-Linear Networks}
\label{subsec:empirical_study_implicit_bias}
Here we conduct a series of experiments to characterize the implicit bias of SAM on \textit{non-linear} networks.

\myparagraph{The sparsity-inducing bias of SAM for a simple ReLU network.}
\begin{figure}[t]
    \centering
    \begin{subfigure}[t]{.235\textwidth}
        \caption{\hspace{4mm}\textbf{ERM}}
        \vspace{-2mm}
        \includegraphics[width=1.0\columnwidth]{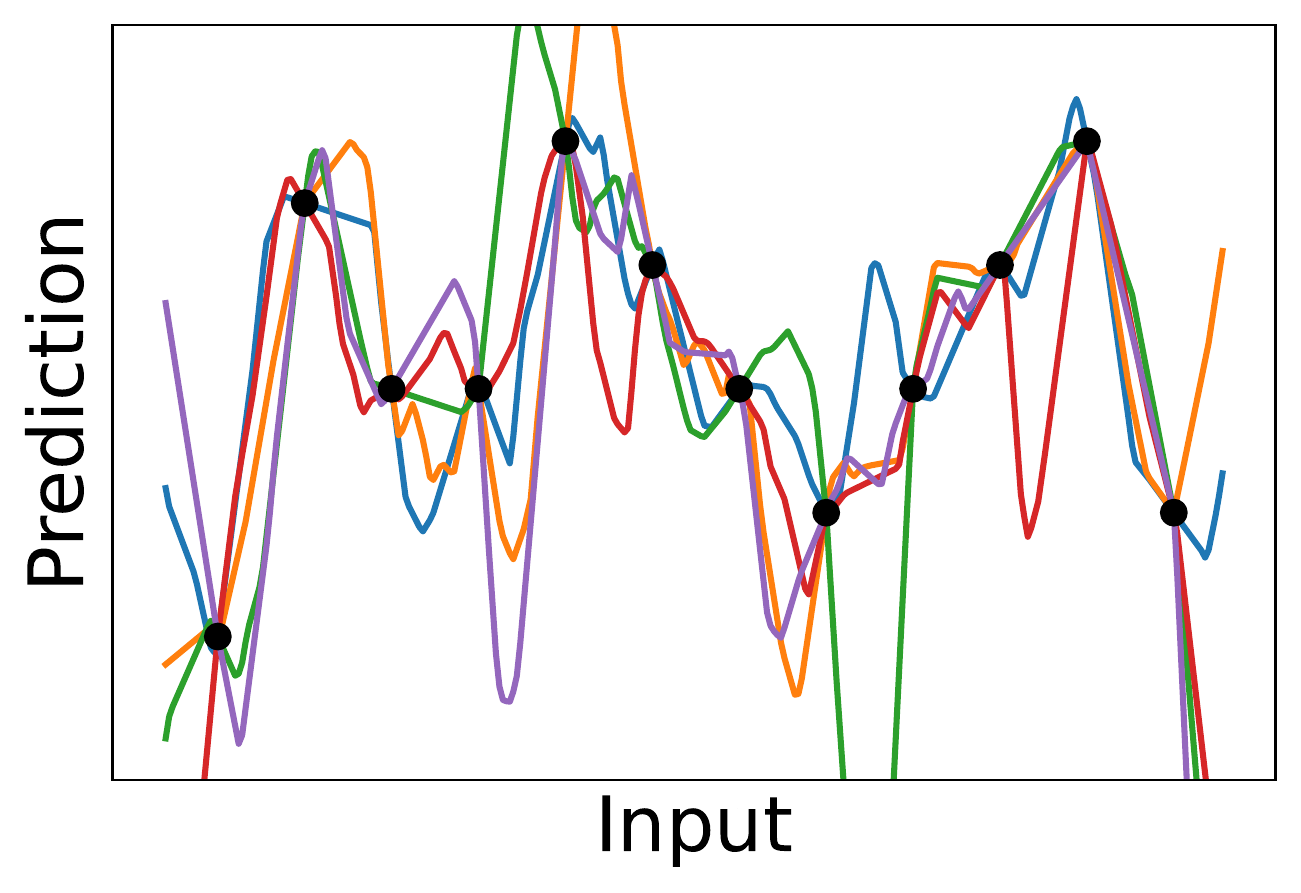}
    \end{subfigure}
    \begin{subfigure}[t]{.235\textwidth}
        \caption{\hspace{4mm}\textbf{SAM}}
        \vspace{-2mm}
        \includegraphics[width=1.0\columnwidth]{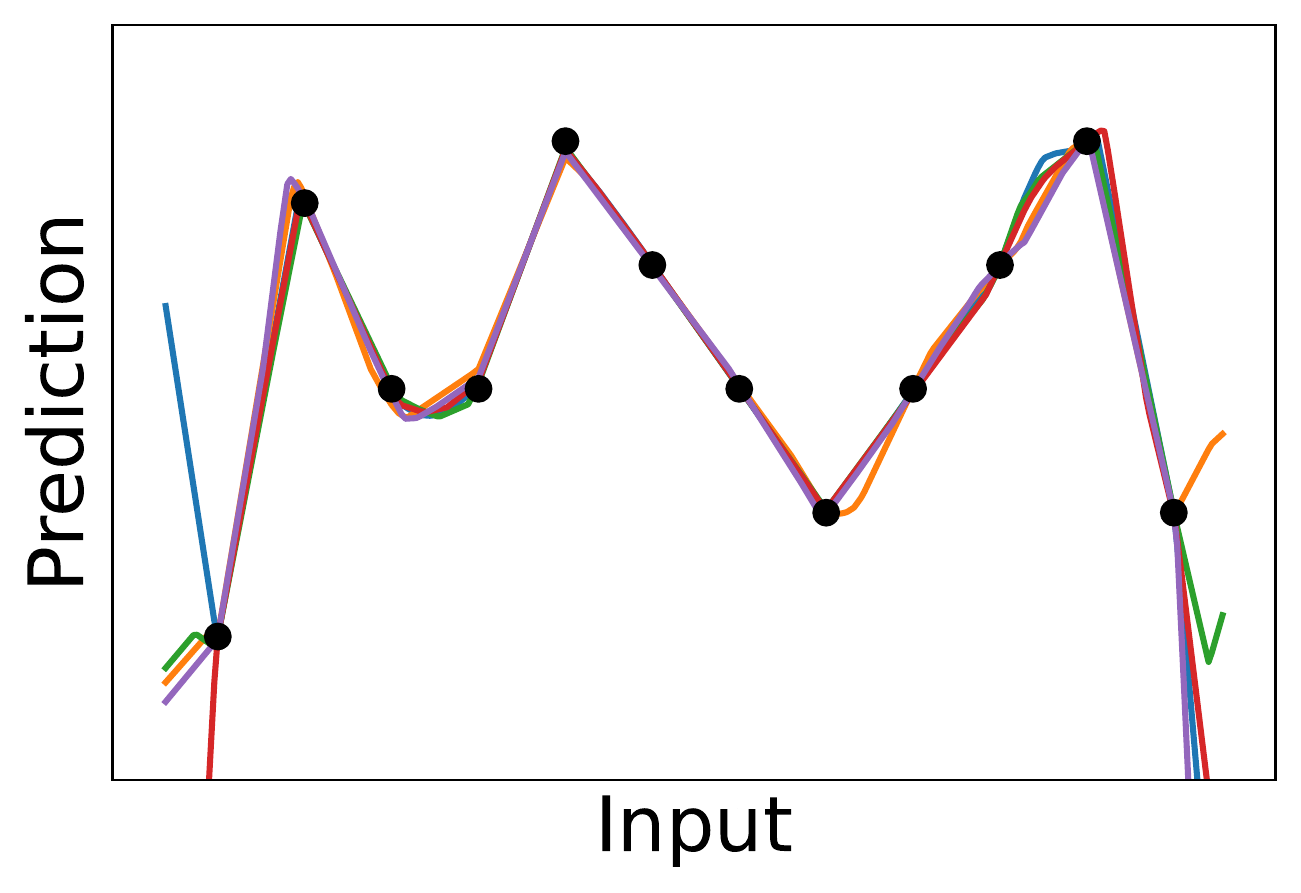}
    \end{subfigure}
    \caption{The effect of the implicit bias of ERM vs. SAM for a one hidden layer ReLU network trained with full-batch gradient descent. Each run is replicated over five random initializations.}
    \label{fig:implicit_bias_relu_net}
\end{figure}
We start from the simplest non-linear network: a one hidden layer ReLU network applied to a simple 1D regression problem from \citet{blanc2020implicit}.  We use it to illustrate the implicit bias of SAM in terms of the geometry of the learned function. 
For this, we train ReLU networks with 100 hidden units using full-batch gradient descent on the quadratic loss with ERM and SAM\footnote{Since $n=12$ for this task, we observed no substantial difference between $1$-SAM and $n$-SAM.} over five different random initializations. 
We plot the resulting functions in Fig.~\ref{fig:implicit_bias_relu_net}.
We observe that SAM leads to simpler interpolations of the data points than ERM, and it is much more stable over random initializations. In particular, SAM seems to be biased toward a sparse combination of ReLUs 
which is reminiscent of \citet{chizat20a} who show that the limits of the gradient flow can be described as a max-margin classifier that favors hidden low-dimensional structures by implicitly regularizing the $\mathcal F_1$ variation norm. 
Moreover, this also relates to our Theorem~\ref{theorem:implicit_bias_main} where sparsity rather shows up in terms of the lower $\ell_1$-norm of the resulting linear predictor.
This further illustrates that there can exist multiple ways in which one can describe the beneficial effect of SAM. 
For deep non-linear networks, however, the effect of SAM is hard to visualize, but we can still characterize some of its important properties. %

\myparagraph{The effect of SAM for deep networks at different stages of training.} 
\begin{figure}[t]
    \centering
    \begin{subfigure}[t]{.37\textwidth}
        \caption{\hspace{6mm}\textbf{ResNet-18 on CIFAR-10}}
        \vspace{-2mm}
        \includegraphics[width=1.0\columnwidth]{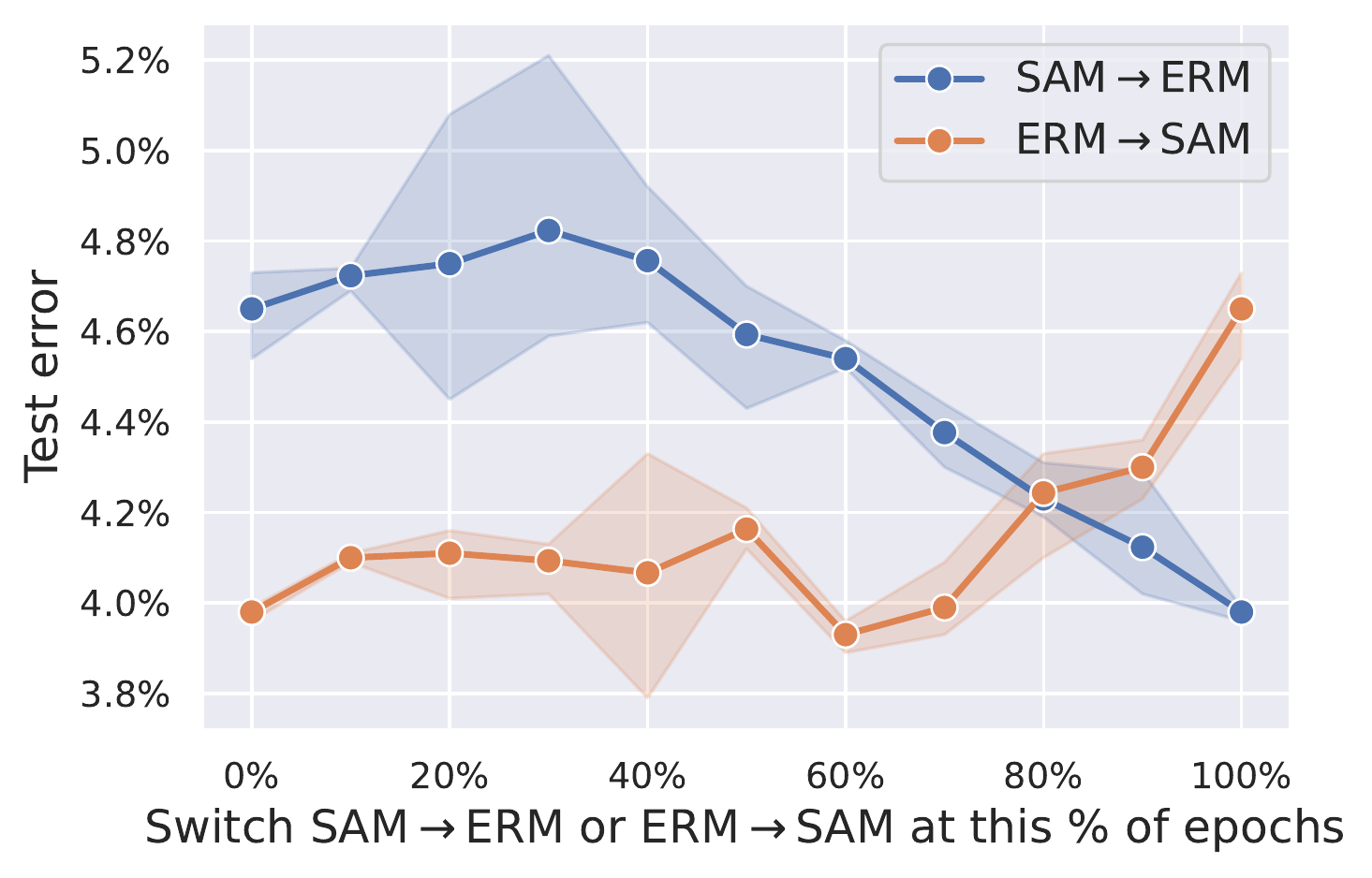}
    \end{subfigure}
    \begin{subfigure}[t]{.38\textwidth}
        \vspace{1mm}
        \caption{\hspace{6mm}\textbf{ResNet-34 on CIFAR-100}}
        \vspace{-2mm}
        \includegraphics[width=1.0\columnwidth]{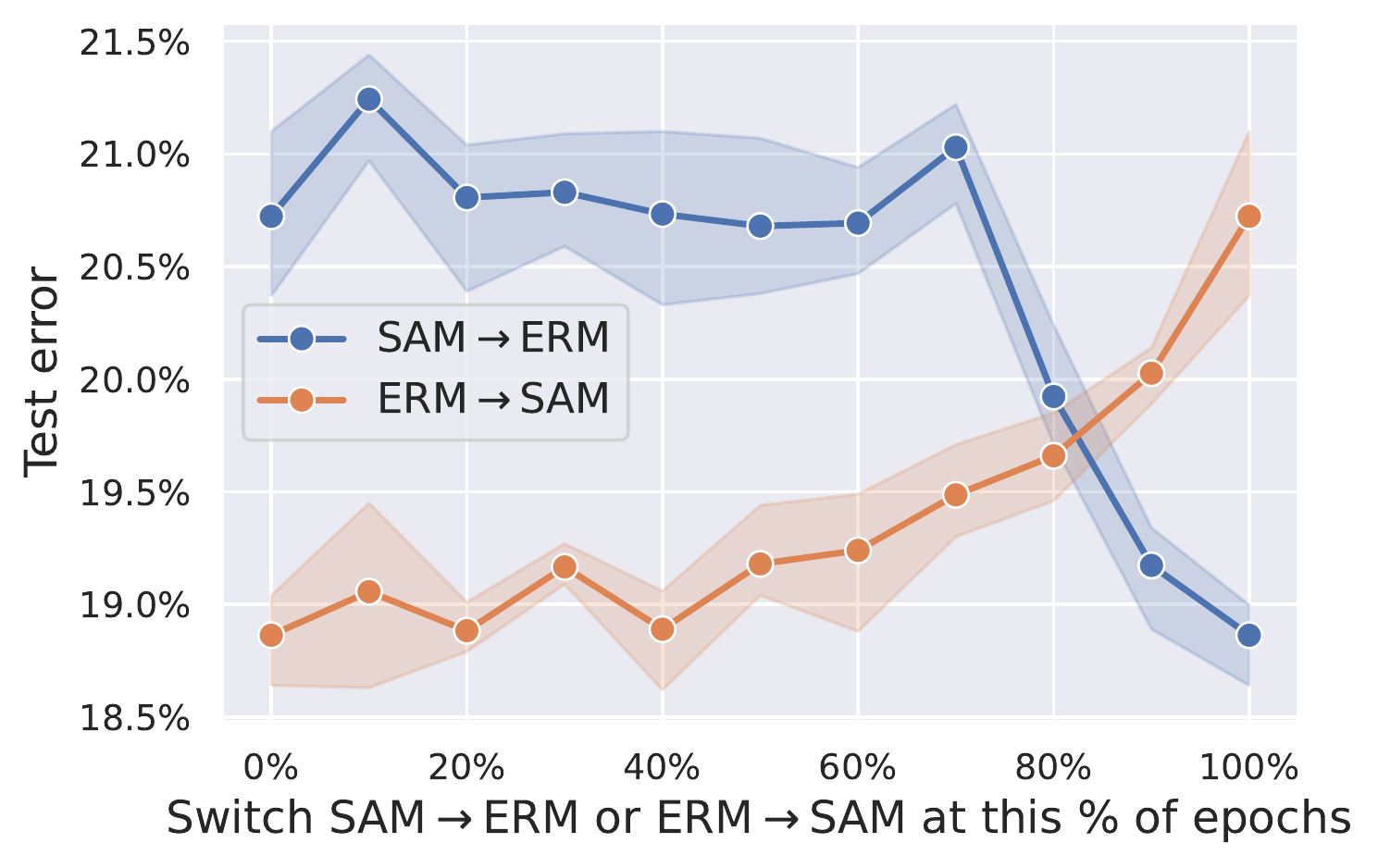}
    \end{subfigure}
    \vspace{-1.5mm}
    \caption{Test error of SAM~$\rightarrow$~ERM and ERM~$\rightarrow$~SAM when the methods are switched at different $\%$ of epochs. For example, for SAM~$\rightarrow$~ERM, 0\% corresponds to ERM and 100\% corresponds to SAM. We observe that a method which is run at the beginning of training has little influence on the final performance.}
    \label{fig:erm_sam_sam_erm}
\end{figure}
To develop a better understanding of the implicit bias of SAM for deep networks, we can analyze at which stages of training using SAM is necessary to get generalization benefits. One could assume, for example, that its effect is important \textit{only} early in training so that the first updates of SAM steer the optimization trajectory towards a better-generalizing minimum. In that case, switching from SAM to ERM would not degrade the performance. To better understand this, we train models first with SAM and then switch to ERM for the remaining epochs (SAM~$\rightarrow$~ERM) and also do a complementary experiment by switching from ERM to SAM (ERM~$\rightarrow$~SAM) and show results in Fig.~\ref{fig:erm_sam_sam_erm}. Interestingly, we observe that a method that is used at the beginning of training has little influence on the final performance. E.g., when SAM is switched to ERM within the first 70\% epochs on CIFAR-100, the resulting model generalizes as well as ERM. Furthermore, we note a high degree of continuity of the test error with respect to the number of epochs at which we switch the methods. This does not support the idea that the models converge to some entirely distinct minima and instead suggests convergence to different minima in a connected valley where some directions generalize progressively better.
Another intriguing observation is that enabling SAM only towards the end of training is sufficient to get a significant improvement in terms of generalization. We discuss this phenomenon next in more detail.

\begin{figure}[t]
    \centering
    \begin{subfigure}[t]{.235\textwidth}
        \caption{\hspace{4mm}\textbf{ResNet-18 on CIFAR-10}}
        \vspace{-2mm}
        \includegraphics[width=1.0\columnwidth]{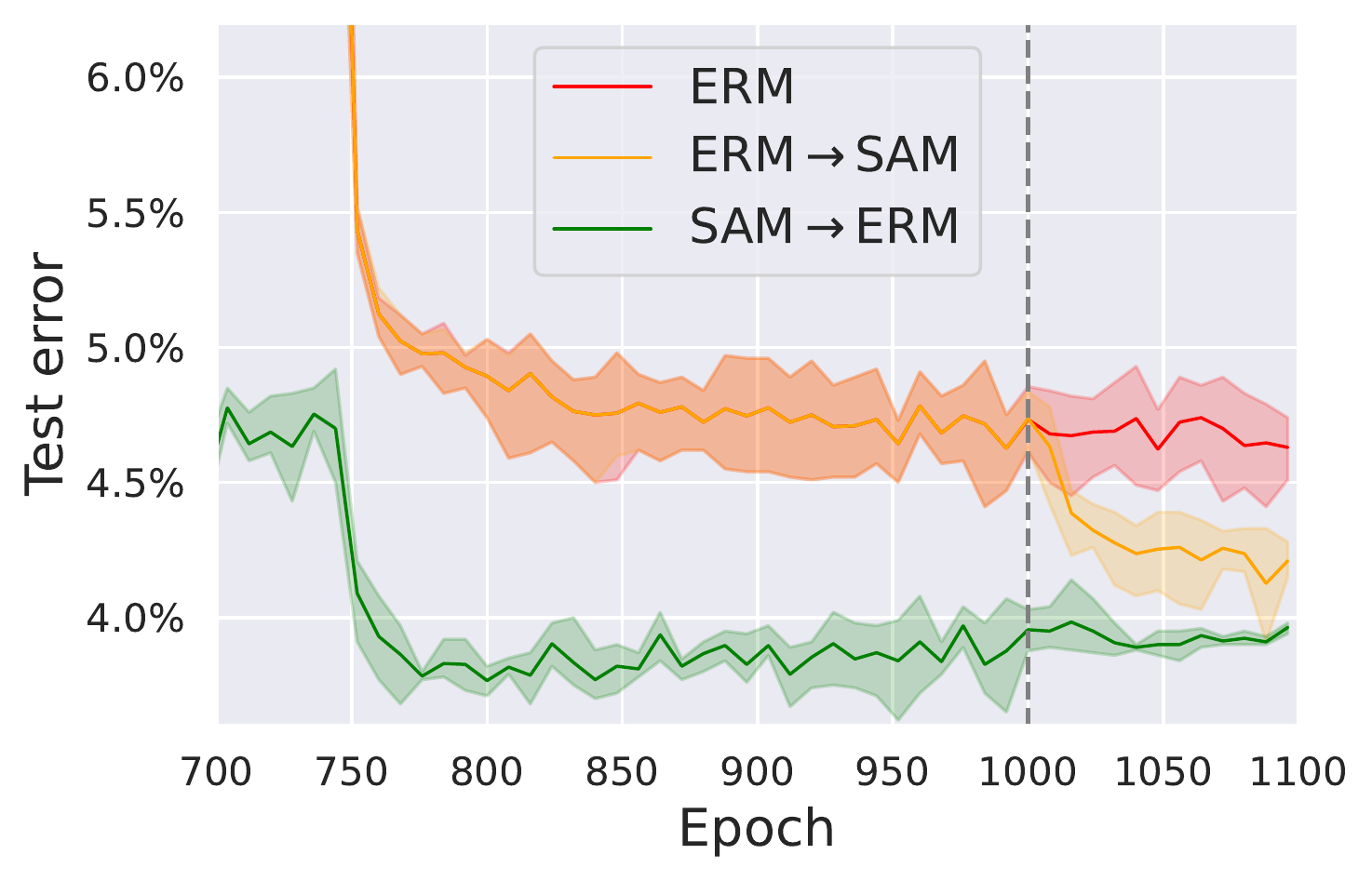}
    \end{subfigure}
    \begin{subfigure}[t]{.235\textwidth}
        \caption{\hspace{4mm}\textbf{ResNet-34 on CIFAR-100}}
        \vspace{-2mm}
        \includegraphics[width=1.0\columnwidth]{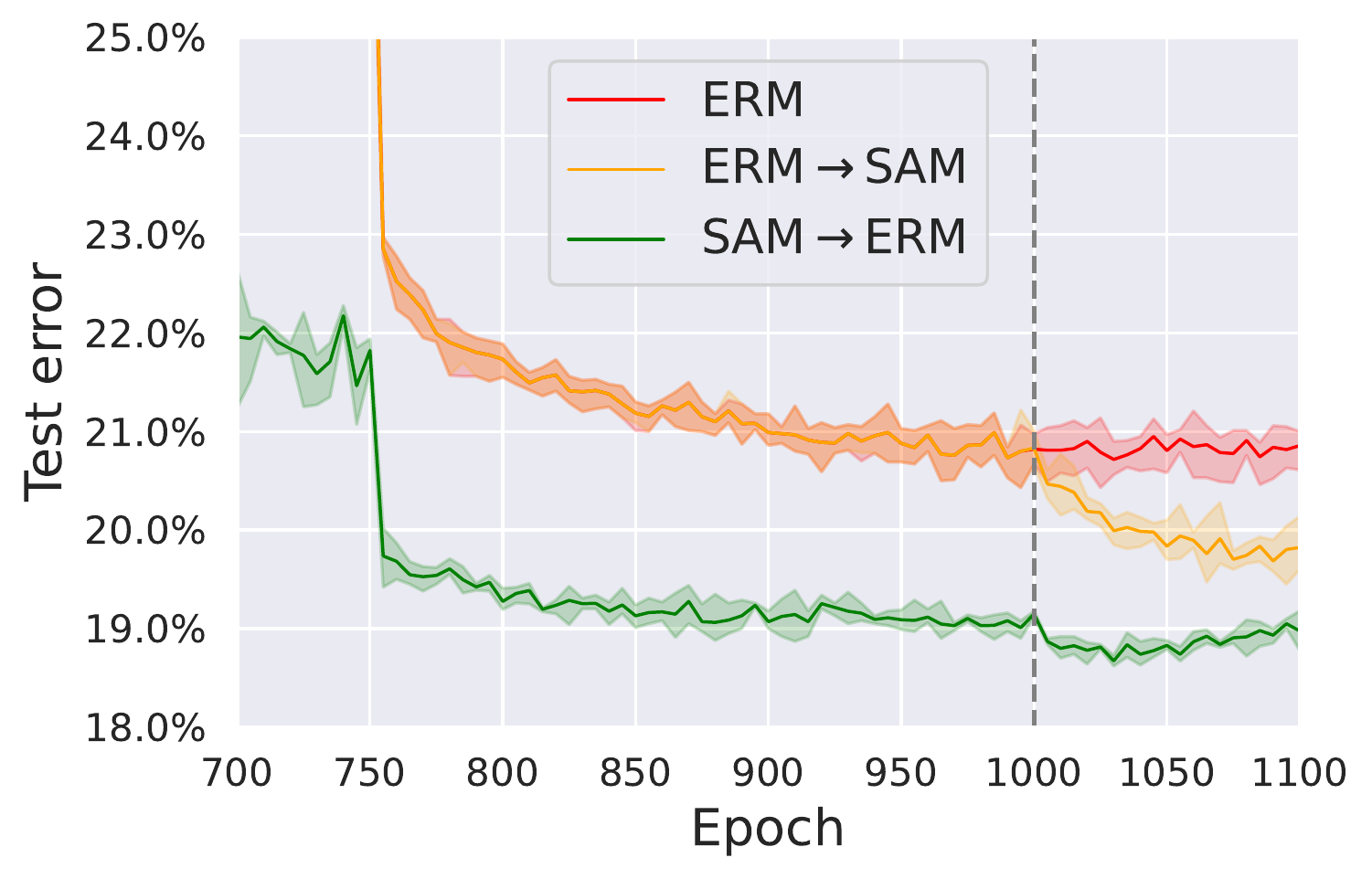}
    \end{subfigure}
    \vspace{-1.5mm}
    \caption{Test error over epochs for ERM compared to ERM~$\rightarrow$~SAM and SAM~$\rightarrow$~ERM training where the methods are switched only at the end of training. In particular, we can see that SAM can gradually escape the worse-generalizing minimum found by ERM.
    }
    \label{fig:erm_sam_finetuning}
\end{figure}
\begin{figure}[t]
    \centering
    \includegraphics[width=.85\columnwidth]{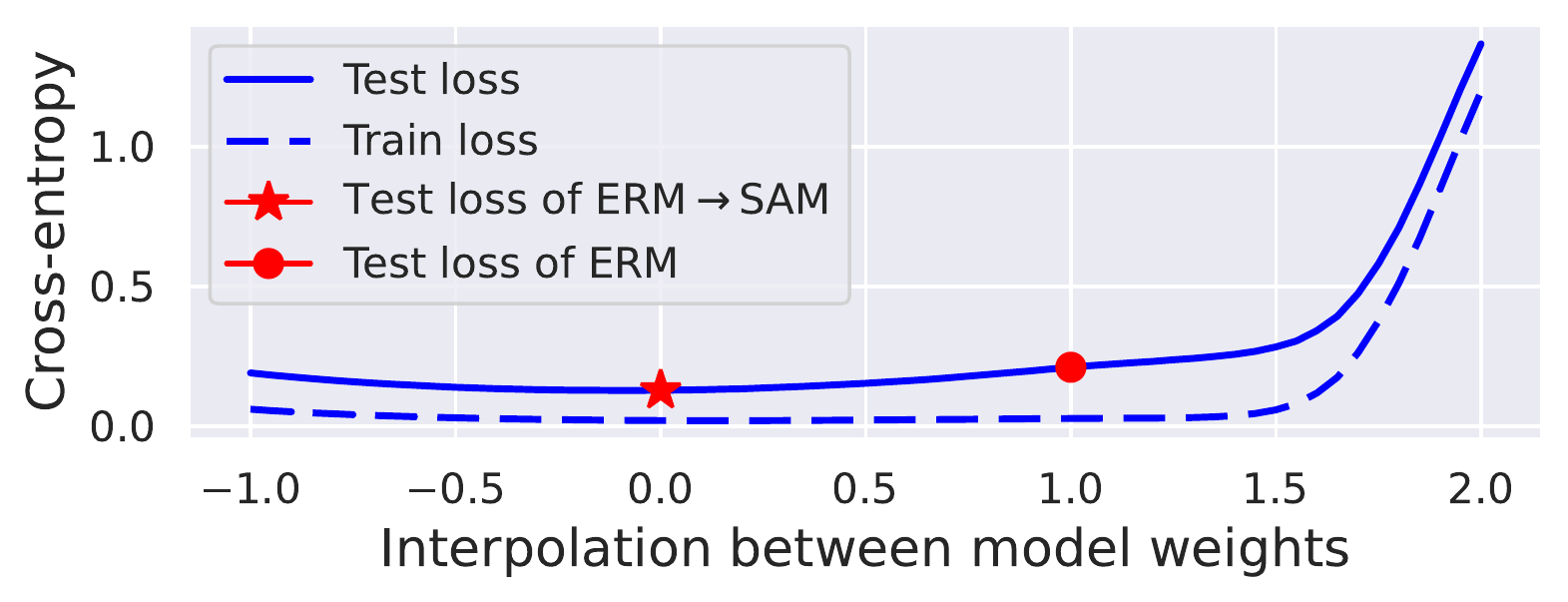}
    \vspace{-2mm}
    \caption{Loss interpolations between $w_{ERM \rightarrow SAM}$ and $w_{ERM}$ for a ResNet-18 trained on CIFAR-10.}
    \label{fig:deep_nets_loss_surfaces}
\end{figure}
\myparagraph{The importance of the implicit bias of SAM at the end of training.} 
We take a closer look on the performance of ERM~$\rightarrow$~SAM and SAM~$\rightarrow$~ERM when we switch between the methods only for the last $\approx10\%$ of epochs in Fig.~\ref{fig:erm_sam_finetuning} where we plot the test error over epochs.
First, we see that for SAM~$\rightarrow$~ERM, once SAM converges to a well-generalizing minimum thanks to its implicit bias, then it is not important whether we continue optimization with SAM or with ERM, and we do not observe significant overfitting when switching to ERM. 
At the same time, for ERM~$\rightarrow$~SAM we observe a different behavior: the test error clearly improves when switching from ERM to SAM. 
This suggests that SAM (using a higher $\rho$ than the standard value, see App.~\ref{sec:app_exp_details}) can gradually escape the worse-generalizing minimum which ERM converged to.
This phenomenon is interesting since it suggests a \textit{practically relevant} fine-tuning scheme that can save computations as we can start from any pre-trained model and substantially improve its generalization. %
Moreover, interestingly, the final point of the ERM~$\rightarrow$~SAM model is situated \textit{in the same basin} as the original ERM model as we show in Fig.~\ref{fig:deep_nets_loss_surfaces} which resembles the asymmetric loss interpolations observed previously for stochastic weight averaging \citep{he2019asymmetric}.

We make very similar observations regarding fine-tuning with SAM and linear connectivity also on a diagonal linear network as shown in App.~\ref{subsec:comparison_1sam_nsam} (Fig.~\ref{fig:diag_net_erm_sam_sam_erm}).
We believe the observations from Fig.~\ref{fig:erm_sam_finetuning} can be explained by our Theorem~\ref{theorem:implicit_bias_main} which shows that for diagonal linear networks, the key quantity determining the magnitude of the implicit bias for SAM is the integral of the loss over the optimization trajectory $w(s)$. In the case of ERM~$\rightarrow$~SAM, the integral is taken only over the last epochs but this can still be sufficient to improve the biasing effect. At the same time, for SAM~$\rightarrow$~ERM, the integral is already large enough due to the first 1000 epochs with SAM and switching back to ERM preserves the implicit bias. We discuss it in more detail in App.~\ref{subsec:comparison_1sam_nsam}.

\section{Understanding the Optimization Aspects of SAM}
\label{sec:convergence_sam} 

The results on the implicit bias of SAM presented above require that the algorithm converges to zero training error.
In the current literature, however, a convergence analysis (even to a stationary point) is missing for SAM.
In particular, we do not know what are the conditions on the training ERM loss, inner step size $\gamma_t$, and perturbation radius $\rho_t$ so that SAM is guaranteed to converge. We also do not know whether SAM converges to a stationary point of the ERM objective. 
To fill in this gap, we first theoretically study convergence of SAM and then relate the theoretical findings with empirical observations on deep networks.

\subsection{Theoretical Analysis of Convergence of SAM}
\label{subsec:convergence_analysis}

Here we show that SAM leads to convergence guarantees in terms of the standard training loss. 
In the following, we analyze the convergence of the $m$-SAM algorithm %
whose update rule is defined in Eq.~(\ref{eq:sambatch}). We make the following assumptions on the training loss $L(w) = \frac{1}{n} \sum_{i=1}^n \ell_i(w)$:
\begin{description}
 \item[\textbf{(A1)}](Bounded variance). \textit{There exists  $\sigma \geq 0$ s.t. $\E[ \| \nabla \ell_i(w)-\nabla L (w)\|^2]\leq \sigma^2$ for all $i \sim \mathcal U( \llbracket1,n \rrbracket)$ and $w\in \mathbb R ^d$.}
\item[\textbf{(A2)}] (Individual $\beta$-smoothness). \textit{There exists  $\beta \geq 0$ s.t. $ 
\| \nabla \ell_i (w) -\nabla \ell_i (v) \| \leq \beta \| w-v \|$ for all $w,v  \in \mathbb R ^d$ and $i\in \llbracket1,n \rrbracket$.}
\item [\textbf{(A3)}] (Polyak-Lojasiewicz). \textit{There exists $\mu > 0$ s.t. $\frac{1}{2}\| \nabla L (w) \|^2 \geq \mu (L(w)-L_*)$ for all $w,v  \in \mathbb R ^d$.}
\end{description}
Both assumptions \textbf{(A1)} and \textbf{(A2)} are standard in the optimization literature and should hold for neural networks with smooth activations and losses (such as cross-entropy). The assumption \textbf{(A2)} requires the inputs to be bounded but this is typically satisfied (e.g., images are all in $[0, 1]^d$). The assumption \textbf{(A3)} corresponds to easier problems (e.g., strongly convex ones) for which global convergence can be proven. 
We have the following convergence result:  
\begin{theorem}
    \label{theorem:samsto}
    Assume~\textbf{(A1)} and \textbf{(A2)} for the iterates~(\ref{eq:sambatch}). Then for any number of iterations $T\geq 0$, batch size $b$, and step sizes $\gamma_t=\frac{1}{\sqrt{T}\beta}$ and $\rho_t=\frac{1}{T^{1/4}\beta}$, we have: 
    \begin{align*}
        \frac{1}{T}\E \left[ \sum_{t=0}^{T-1}\| \nabla L (w_t) \|^2 \right] \leq \frac{4\beta}{\sqrt{T}} (L(w_0)-L_*) +\frac{8 \sigma^2}{b \sqrt{T}},
    \end{align*}
    In addition, under \textbf{(A3)}, with step sizes $\gamma_t=\min\{ \frac{8t+4}{3\mu(t+1)^2}, \frac{1}{2\beta}\}$ and $\rho_t=\sqrt{\gamma_t/\beta}$:
    \begin{align*}
        \E \left[L(w_T)\right] -L_* \leq \frac{3\beta^2( L(w_0) -L_*)}{\mu^2T^2} + \frac{22\beta\sigma^2}{ \mu^2 bT}.
    \end{align*}
\end{theorem}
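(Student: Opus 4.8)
Note first that $L=\tfrac1n\sum_i\ell_i$ is $\beta$-smooth by \textbf{(A2)}, so the strategy is a descent-lemma argument on the SAM iterates, treating the inner ascent step as a controlled perturbation of plain mini-batch SGD. Write $g_t:=\tfrac1b\sum_{j\in I_t}\nabla\ell_j(w_t)$ for the mini-batch gradient at $w_t$ (with $b=|I_t|$), $\tilde w_t:=w_t+\rho_t g_t$ for the ascent point, and $\tilde g_t:=\tfrac1b\sum_{i\in I_t}\nabla\ell_i(\tilde w_t)$, so that the update \eqref{eq:sambatch} reads simply $w_{t+1}=w_t-\gamma_t\tilde g_t$. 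Applying the smoothness inequality to $L$,
\begin{align*}
L(w_{t+1})\le L(w_t)-\gamma_t\langle\nabla L(w_t),\tilde g_t\rangle+\tfrac{\beta\gamma_t^2}{2}\|\tilde g_t\|^2.
\end{align*}
The only difference from a textbook SGD analysis is that $\tilde g_t$ is a \emph{biased} estimate of $\nabla L(w_t)$: it is evaluated at $\tilde w_t\neq w_t$, and the same batch $I_t$ is used for the ascent and the descent. I would handle this by writing $\tilde g_t=g_t+(\tilde g_t-g_t)$ and bounding, pathwise in $I_t$, $\|\tilde g_t-g_t\|\le\tfrac1b\sum_{i\in I_t}\|\nabla\ell_i(\tilde w_t)-\nabla\ell_i(w_t)\|\le\beta\rho_t\|g_t\|$ via individual smoothness \textbf{(A2)}; bounding before taking any expectation is what makes the shared-batch issue harmless.

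\textbf{One-step inequality.} Substituting the decomposition, the cross term splits into $-\gamma_t\langle\nabla L(w_t),g_t\rangle$ plus an error of size at most $\gamma_t\beta\rho_t\|\nabla L(w_t)\|\,\|g_t\|$, and $\|\tilde g_t\|^2\le(1+\beta\rho_t)^2\|g_t\|^2$. The crucial observation is the coupling between the prescribed step sizes: $\rho_t=\sqrt{\gamma_t/\beta}$ (indeed $\tfrac1{T^{1/4}\beta}=\sqrt{\gamma_t/\beta}$ when $\gamma_t=\tfrac1{\sqrt T\beta}$, and $\rho_t=\sqrt{\gamma_t/\beta}$ also in the PL schedule), hence $\beta^2\rho_t^2=\beta\gamma_t$ and $\beta\rho_t=\sqrt{\beta\gamma_t}\le1$ for the given $\gamma_t$. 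Young's inequality then turns the error term into $\tfrac{\gamma_t}{2}\|\nabla L(w_t)\|^2+\tfrac{\beta\gamma_t^2}{2}\|g_t\|^2$ and the last term into $\tfrac{\beta\gamma_t^2}{2}\|\tilde g_t\|^2\le2\beta\gamma_t^2\|g_t\|^2$ — i.e.\ the ascent step costs only $O(\gamma_t^2)$, the same order as the usual SGD variance term, rather than the $O(\gamma_t\rho_t)$ it would cost without the coupling. Taking $\E[\,\cdot\mid w_t]$ and using $\E[g_t\mid w_t]=\nabla L(w_t)$ together with the mini-batch version of \textbf{(A1)}, $\E[\|g_t-\nabla L(w_t)\|^2\mid w_t]\le\sigma^2/b$, one obtains (once $\beta\gamma_t$ is small enough, which the stated schedules ensure)
\begin{align*}
\E[L(w_{t+1})\mid w_t]\le L(w_t)-\tfrac{\gamma_t}{4}\|\nabla L(w_t)\|^2+c\,\beta\gamma_t^2\tfrac{\sigma^2}{b}
\end{align*}
for an absolute constant $c$.

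\textbf{Deriving the two bounds.} For the first claim, set $\gamma_t\equiv\gamma=\tfrac1{\sqrt T\beta}$, sum the one-step inequality over $t=0,\dots,T-1$, take total expectation, telescope, and use $\E[L(w_T)]\ge L_*$; rearranging gives $\tfrac1T\sum_t\E\|\nabla L(w_t)\|^2\le\tfrac{4(L(w_0)-L_*)}{\gamma T}+4c\beta\gamma\tfrac{\sigma^2}{b}$, and plugging in $\gamma$ yields the announced $O\!\big(\beta(L(w_0)-L_*)/\sqrt T+\sigma^2/(b\sqrt T)\big)$ bound (recovering the exact constants $4$ and $8$ needs only marginally tighter bookkeeping than sketched here). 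For the second claim, apply \textbf{(A3)}, i.e.\ $\|\nabla L(w_t)\|^2\ge2\mu(L(w_t)-L_*)$, to the one-step inequality, obtaining the linear recursion
\begin{align*}
\E[L(w_{t+1})-L_*]\le\big(1-c_1\mu\gamma_t\big)\E[L(w_t)-L_*]+c\,\beta\gamma_t^2\tfrac{\sigma^2}{b},
\end{align*}
where the $\min$ with $\tfrac1{2\beta}$ in the schedule guarantees both that the one-step inequality applies and that $\beta\rho_t\le1$. The choice $\gamma_t=\min\{\tfrac{8t+4}{3\mu(t+1)^2},\tfrac1{2\beta}\}$ is precisely the tuned step size used in standard analyses of SGD under PL / strong convexity; unrolling the recursion — a constant-step geometric-contraction phase of length $\Theta(\beta/\mu)$ followed by a $\Theta(1/(\mu t))$ tail, with the appropriate weighted telescoping — produces the stated $O\!\big(\beta^2(L(w_0)-L_*)/(\mu^2T^2)+\beta\sigma^2/(\mu^2bT)\big)$ rate.

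\textbf{Expected main obstacle.} The telescoping and the PL recursion lemma are routine; the one genuinely SAM-specific difficulty is controlling the bias of $\tilde g_t$, and everything there hinges on the coupling $\rho_t=\sqrt{\gamma_t/\beta}$, which makes $\beta\rho_t$ behave like $\sqrt{\gamma_t}$ and so downgrades the ascent-induced error from first to second order in the step size — without it nothing faster than an $O(T^{-1/4})$ rate would come out, and one must also check (as above) that reusing a single batch for both gradient evaluations is benign.
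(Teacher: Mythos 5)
Your proposal is correct and follows essentially the same route as the paper's proof (Lemmas~\ref{lem:gradsamstos} and \ref{lem:smoothdecreasesn}, then Theorem~\ref{theorem:samsto_detailed}): a descent lemma on the $\beta$-smooth full loss, control of the shared-batch ascent perturbation via the \emph{individual} smoothness assumption \textbf{(A2)}, Young's inequality to absorb the resulting bias at the cost of a constant factor in front of $\|\nabla L(w_t)\|^2$, the coupling $\beta^2\rho_t^2=\beta\gamma_t$ to make the ascent-induced error second order in $\gamma_t$, and finally telescoping (resp.\ the standard PL recursion à la Gower et al.) for the two bounds. The only cosmetic difference is that you compare the SAM gradient directly to the mini-batch gradient $g_t$ at $w_t$, whereas the paper routes through the intermediate point $w_t+\rho\nabla L(w_t)$ and a separate alignment lemma; both yield the same one-step inequality up to constants.
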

We provide the proof in App.~\ref{sec:app_theory_convergencesto} and make several remarks: %
\begin{itemize}
\item We recover the rates of SGD with the usual condition on the step size $\gamma_t$~\citep{ghadimi2013stochastic, karimi2016linear}. 
\item The ascent step size $\rho_t$, however, has to be $O(\sqrt{\gamma_t})$ to ensure convergence, i.e., it tolerates a slower decrease than $\gamma_t$. This finding is aligned with the observation that the ascent step size should not be decreased as drastically as the descent step size when training neural networks (see Fig.~\ref{fig:test_err_over_decreasing_rho} in App.~\ref{subsec:app_decreasing_rho}). 
\item On the technical side, the proof relies on the bound $\langle  \nabla L( w_t + \eta \nabla L (w_t) ), \nabla L (w_t) \rangle \geq (1- \eta \beta ) \|  \nabla L (w_t) \|^2 $ which shows that SAM-step is well aligned with the gradient step (see Lemma~\ref{lem:gradsamstos} in App.~\ref{sec:app_theory_convergencesto}).
\end{itemize}

\subsection{Convergence of SAM for Deep Networks}
\label{subsec:opt_deep_networks}
Here we relate the convergence analysis to empirical observations for deep learning tasks.

\myparagraph{Both ERM and SAM converge for deep networks.}
\begin{figure*}[t]
    \centering
    \begin{subfigure}[t]{.37\textwidth}
        \caption{\hspace{8mm}\textbf{ResNet-18 on CIFAR-10}}
        \vspace{-2mm}
        \includegraphics[width=1.0\columnwidth]{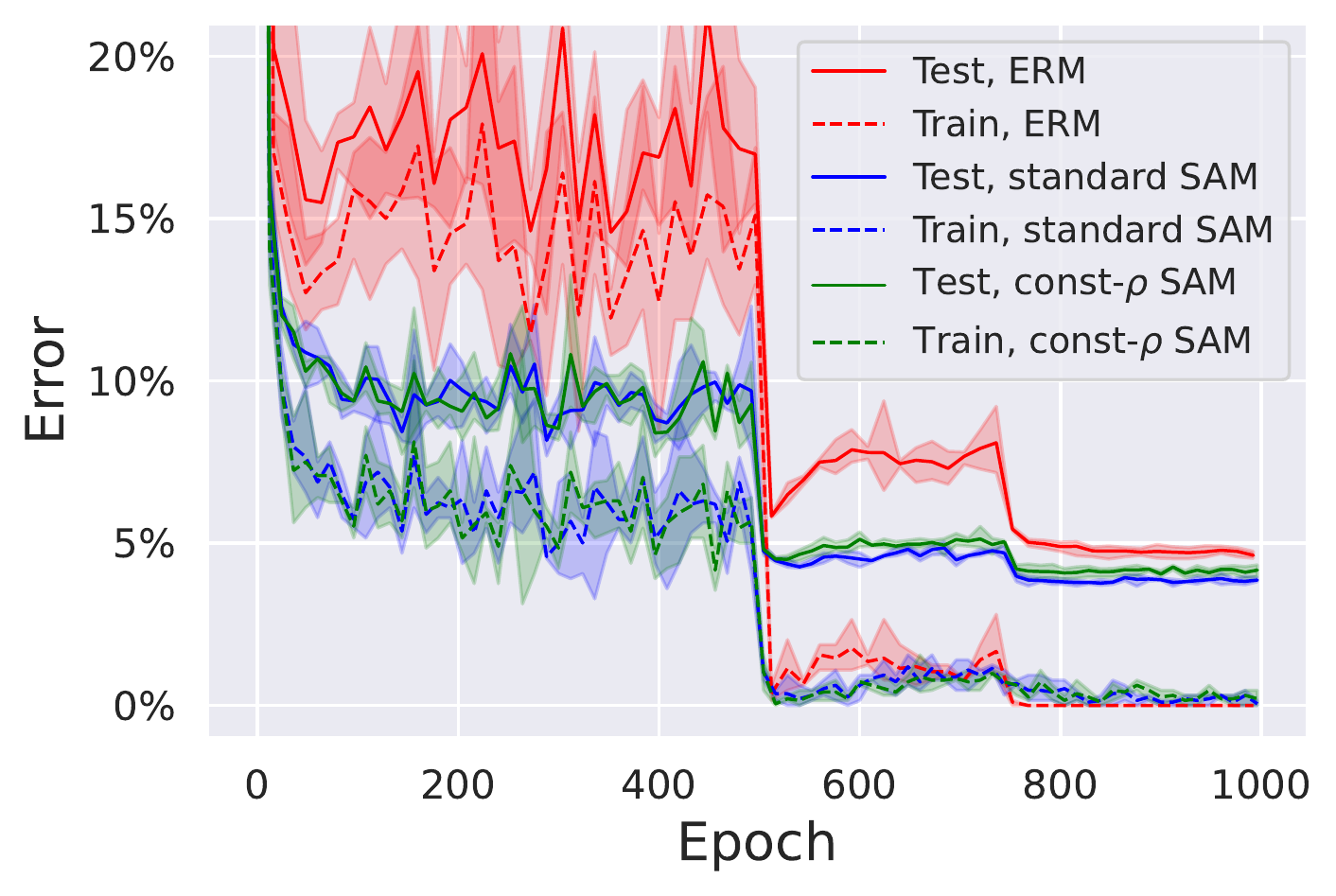}
    \end{subfigure}
    \quad \quad 
    \begin{subfigure}[t]{.37\textwidth}
        \caption{\hspace{8mm}\textbf{ResNet-34 on CIFAR-100}}
        \vspace{-2mm}
        \includegraphics[width=1.0\columnwidth]{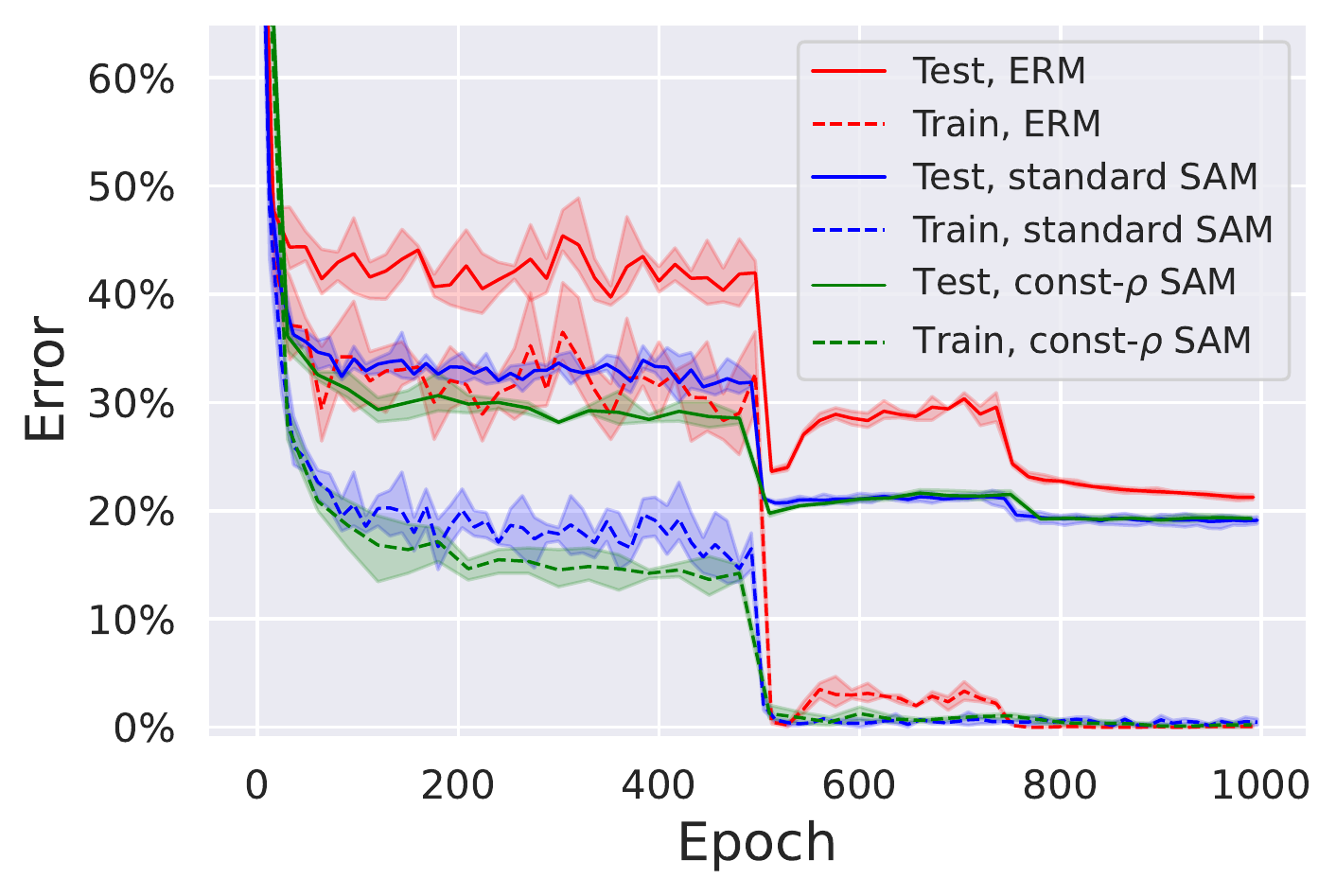}
    \end{subfigure}
    \vspace{-3mm}
    \caption{Training and test error of ERM, standard SAM, and SAM with a constant step size $\rho$ (i.e., without gradient normalization) over epochs. We can see that both ERM and SAM converge to zero training error and the gradient normalization is not crucial for SAM.
    }
    \label{fig:err_sam_vs_erm}
\end{figure*}
We compare the behavior of ERM and SAM by training a ResNet-18 on CIFAR-10 and CIFAR-100 for 1000 epochs 
(see App.~\ref{sec:app_exp_details} for experimental details) and plot the results over epochs in Fig.~\ref{fig:err_sam_vs_erm}. 
We observe that not only the ERM model but also the model trained with SAM fits all the training points and converges to a \textit{nearly zero training loss}: $0.0013\pm0.00002$ for ERM vs $0.0034\pm0.0004$ for SAM on CIFAR-10. 
However, the SAM model has significantly better generalization performance due to its implicit bias: $4.75\%\pm0.14\%$ vs. $3.94\%\pm0.09\%$ test error. %
Moreover, we observe no noticeable overfitting throughout training: the best and last model differ by at most 0.1\% test error for both methods. Finally, we note that the behavior of ERM vs. SAM on CIFAR-100 is qualitatively similar. %

\myparagraph{Performance of SAM with constant step sizes $\rho_t$.} Our convergence proof in Sec.~\ref{subsec:convergence_analysis} for non-convex objectives relies on constant step sizes $\rho_t$. However, the standard SAM algorithm as introduced in \citet{foret2021sharpnessaware} uses step sizes $\rho_t$ inversely proportional to the gradient norm. Thus, one can wonder if such step sizes are important for achieving better convergence or generalization.
Fig.~\ref{fig:err_sam_vs_erm} shows that on CIFAR-10 and CIFAR-100, both methods converge to zero training error at a similar speed. 
Moreover, they achieve similar improvements in terms of generalization: $3.94\%\pm0.09\%$ test error for standard SAM vs. $4.15\%\pm0.16\%$ for SAM with constant $\rho_t$ on CIFAR-10. For CIFAR-100, the test error matches almost exactly: $19.22\%\pm0.38\%$ vs. $19.30\%\pm0.38\%$. 
We also note that the optimal $\rho$ differs for both formulations: $\rho_t=0.2/\norm{\nabla}_2$ with normalization vs. $\rho_t=0.3$ without normalization, so simply removing the gradient normalization without doing a new grid search over $\rho_t$ can lead to suboptimal results.

\myparagraph{Is it always beneficial for SAM to converge to zero loss?}
\begin{figure}[t]
    \vspace{-2mm}
    \centering
    \begin{subfigure}[t]{.235\textwidth}
        \caption{\hspace{4mm}\textbf{ResNet-18 on CIFAR-10}}
        \vspace{-2mm}
        \includegraphics[width=1.0\columnwidth]{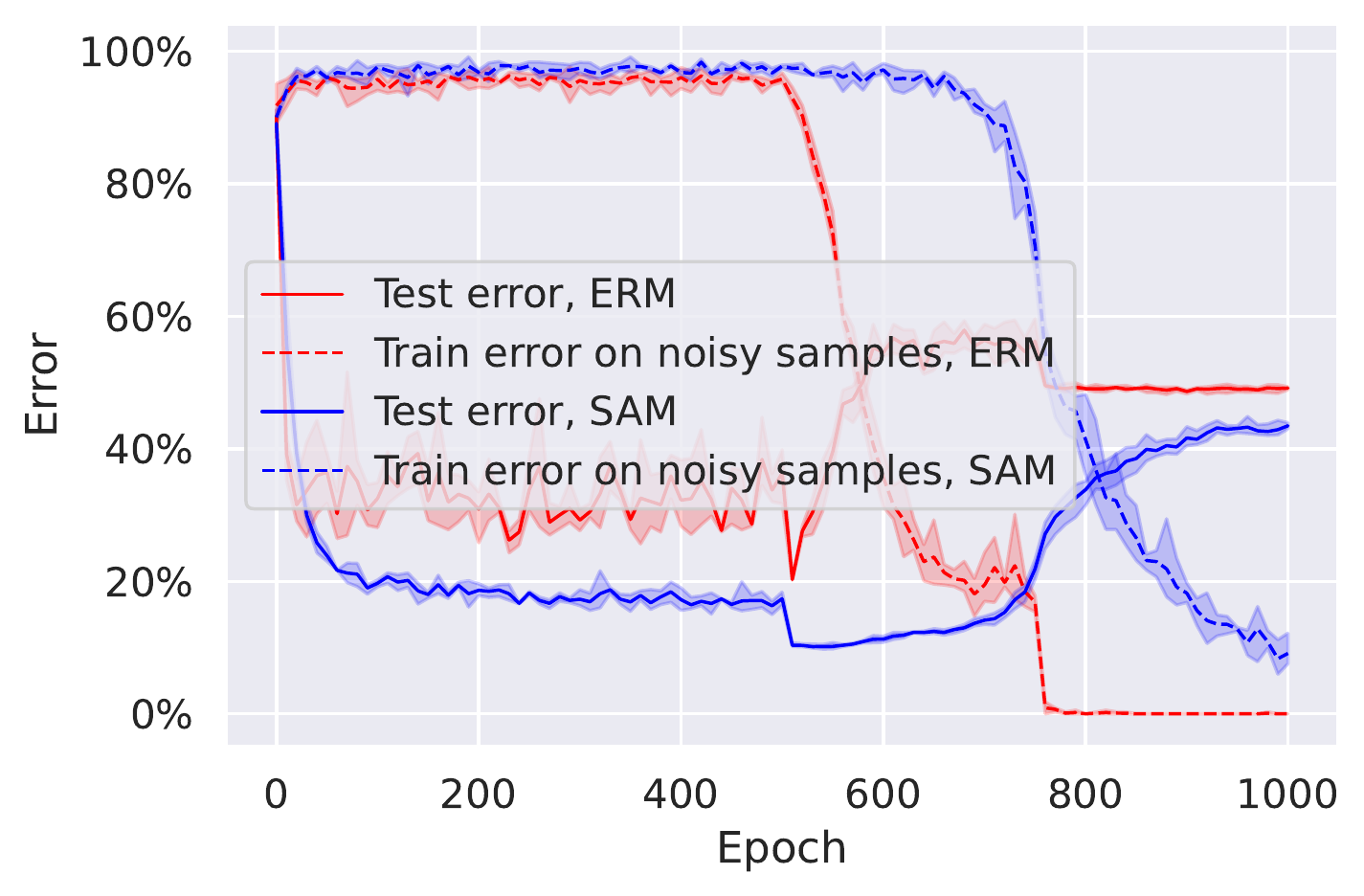}
    \end{subfigure}
    \begin{subfigure}[t]{.235\textwidth}
        \caption{\hspace{4mm}\textbf{ResNet-34 on CIFAR-100}}
        \vspace{-2mm}
        \includegraphics[width=1.0\columnwidth]{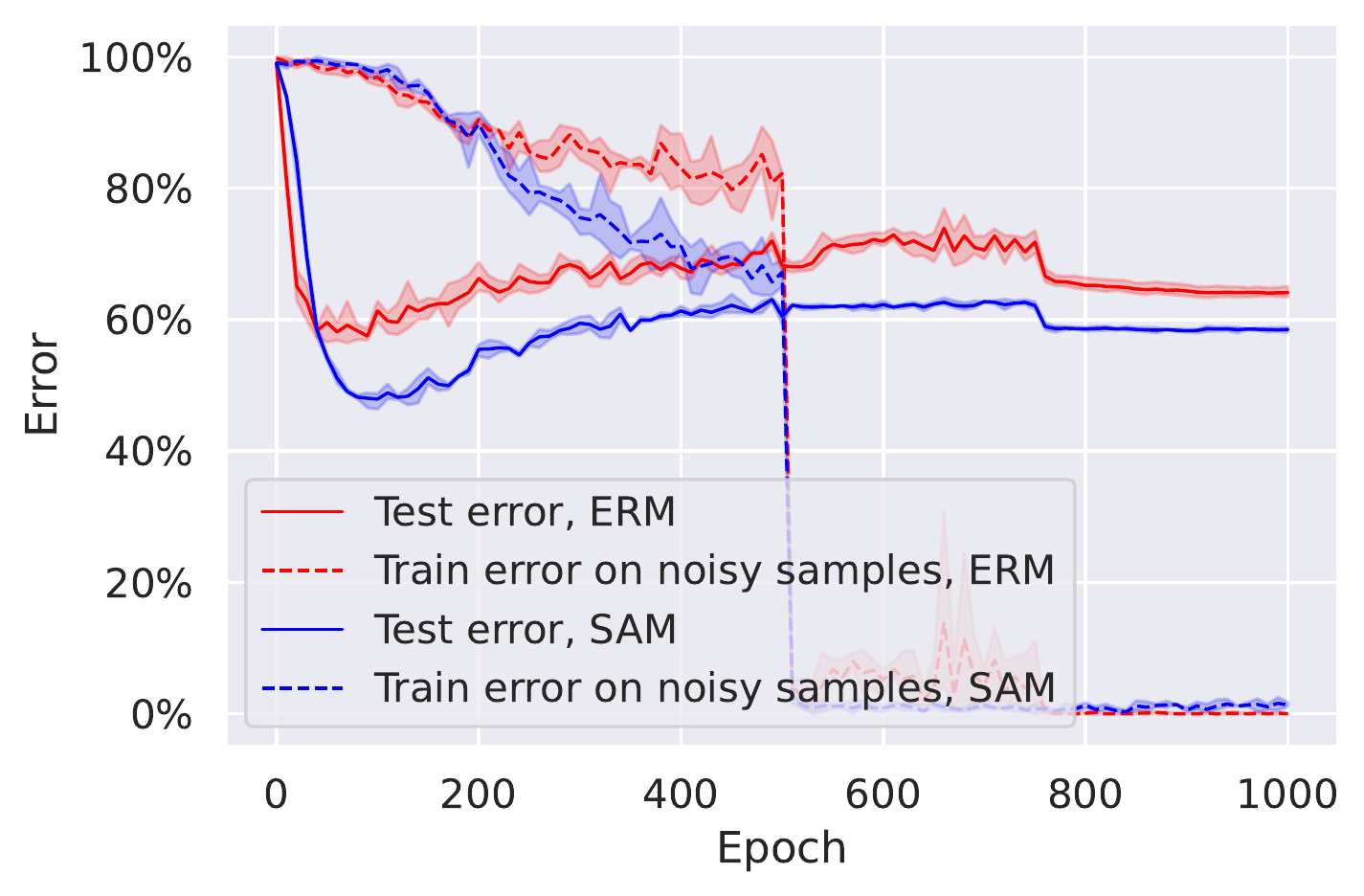}
    \end{subfigure}
    \vspace{-1.5mm}
    \caption{Error rates of ERM and SAM over epochs on CIFAR-10 and CIFAR-100 \textit{with 60\% label noise}. We see that the test error increases when the models fit the noisy samples.}
    \label{fig:label_noise_plots}
    \vspace{-2mm}
\end{figure}
Here we consider the setting of uniform label noise, i.e., when a fraction of the training labels is changed to random labels and kept fixed throughout the training. 
This setting differs from the standard noiseless case (typical for many vision datasets such as CIFAR-10) as converging to nearly zero training loss is harmful for ERM and leads to substantial overfitting.
Thus, one could assume that the beneficial effect of SAM in this setting can come from preventing convergence and avoiding fitting the label noise.
We plot test error and training error on noisy samples for a ResNet-18 trained on CIFAR-10 and CIFAR-100 with 60\% label noise in Fig.~\ref{fig:label_noise_plots}. We see that SAM noticeably improves generalization over ERM, although later in training SAM also starts to fit the noisy points which is in agreement with the convergence analysis. 
In App.~\ref{subsec:app_const_rho_sam_ln}, we confirm the same findings for SAM with constant $\rho_t$. 
Thus, SAM also requires early stopping either explicitly via a validation set or implicitly via restricting the number of training epochs as done, e.g., in \citet{foret2021sharpnessaware}. 
Interestingly, this experiment also suggests that the beneficial effect of SAM is observed not only close to a minimum but also along the whole optimization trajectory. 
Overall, we conclude that SAM can easily overfit and its convergence in terms of the training loss can be a negative feature for datasets with noisy labels.

\section{Conclusions} 
We showed why the existing justifications for the success of $m$-SAM based on generalization bounds and the idea of convergence to flat minima are incomplete. We hypothesized that there exists some other quantity which is responsible for the improved generalization of $m$-SAM which is implicitly minimized. We analyzed the implicit bias of $1$-SAM and $n$-SAM for diagonal linear networks showing that the implicit quantity which is minimized is related to the $\l_1$-norm of the resulting linear predictor, and it is stronger for $1$-SAM than for $n$-SAM. We further studied the properties of the implicit bias on non-linear networks empirically where we showed that fine-tuning an ERM model with SAM can lead to significant generalization improvements. Finally, we provided convergence results of SAM for non-convex objectives when used with stochastic gradient which we confirmed empirically for deep networks and discussed its relation to the generalization behavior of SAM.

\bibliographystyle{abbrvnat}
\bibliography{literature.bib}

\clearpage

\appendix
\onecolumn

\begin{center}
    \ \\
	\Large\textbf{Appendix}
\end{center}

\section*{Organization of the appendix}
The appendix is organized as follows:
\begin{itemize}
 \item Sec.~\ref{sec:app_fullbatch}:   implementations in the full-batch setting of $1$-SAM and $n$-SAM.
	\item Sec.~\ref{sec:app_implicit_bias}: proofs related to the implicit bias of $1$-SAM and $n$-SAM.
	\item Sec.~\ref{sec:app_theory_convergence}: proofs related to the convergence of different variants of SAM.
	\item Sec.~\ref{sec:app_exp_details}: experimental details for the experiments with deep networks and linear models.
	\item Sec.~\ref{sec:app_additional_experiments}: additional experiments complementary to the experiments in the main part.
\end{itemize}

\section{Implementations of the SAM Algorithm in the Full-Batch Setting}
\label{sec:app_fullbatch}
We define here the implementations of the $m$-SAM algorithm in the \textit{full-batch} setting for the two extreme values of $m$ we consider, i.e., $m=1$ and $m=n$. They correspond to the following objectives:
\begin{align}
    \text{\textbf{$n$-SAM}:} \ \minop_{\wv \in \R^{|w|}} \maxop_{\norm{\deltav}_2 \leq \rho}  \frac{1}{n}\sum_{i=1}^{n} \l_i(\wv + \deltav), \quad \quad
    \text{\textbf{$1$-SAM}:} \ \minop_{\wv \in \R^{|w|}} \frac{1}{n}\sum_{i=1}^{n} \maxop_{\norm{\deltav}_2 \leq \rho} \l_i(\wv + \deltav).
\end{align} 

The update rule of the SAM algorithm for these objectives amounts to a variant of gradient descent with step size $\gamma_t$ where the gradients are taken at intermediate points $w_{t+1/2}^i$, i.e., $w_{t+1} = w_t - \frac{\gamma_t}{n}\sum_{i=1}^n \nabla \ell_i (w_{t+1/2}^i)$. 
The updates, however, differ in how the points $w_{t+1/2}^i$ are computed since they approximately maximize different functions with inner step sizes $\rho_t$: 
\begin{align} \label{eq:sam_full_batch_update_rules}
    \text{\textbf{$n$-SAM:}} \ \ w_{t+1/2}^i = w_t + \frac{\rho_t}{n} \sum_{j=1}^n \nabla \ell_j(w_t), \quad \quad
    \text{\textbf{$1$-SAM:}} \ \ w_{t+1/2}^i = w_t + \rho_t \nabla \ell_i(w_t).
\end{align}

To make the SAM algorithm practical, \citet{foret2021sharpnessaware} propose to combine SAM with stochastic gradients which corresponds to the $m$-SAM algorithm defined in Eq.~(\ref{eq:sambatch}) in the main part.

\section{Theoretical Analysis of the Implicit Bias for Diagonal Linear Networks}
\label{sec:app_implicit_bias}

To understand why $m$-SAM is generalizing better than ERM, we consider the simpler problem of noiseless regression with $2$-layer diagonal linear network for which we can precisely characterize the implicit bias of different optimization algorithms.

\myparagraph{Optimization algorithms.}
We consider minimizing the training loss $L(w)$ using the following optimization algorithms: 
\begin{itemize}
\item Gradient descent with an infinitesimally small step size, i.e., the gradient flow limit:
    \begin{align}\label{eq:gf}
        \dot w_t = -\nabla L(w_t). 
    \end{align}
\item The $n$-SAM algorithm from Eq.~(\ref{eq:sam_full_batch_update_rules}) taken with an infinitesimally small outer step size and inner step size $\rho\geq0$:
    \begin{align}\label{eq:flowmaxsum}
        \dot w_t = -\nabla L(w_t+\rho \nabla L(w_t)).
    \end{align}
\item The $1$-SAM algorithm from Eq.~(\ref{eq:sam_full_batch_update_rules}) taken with an infinitesimally small outer step size and inner step size $\rho\geq0$: 
    \begin{align}\label{eq:flowsummax}
        \dot w_t &= -\frac{1}{n} \sum_{i=1}^n \nabla \ell_i(w_t+\rho \nabla \ell_i(w_t)).
    \end{align}
\end{itemize}

\myparagraph{Previous work: implicit bias of the gradient flow.} We first define the function $\phi_\alpha$ for $\alpha\in\mathbb R^d$ which will be very useful to precisely characterize the implicit bias of the optimization algorithms we consider: 
\begin{equation}\label{eq:qalpha}
    \phi_\alpha  (\beta) =  \sum_{i=1}^d \alpha_i^2 q(\beta_i/\alpha_i^2) \text{ where }  q(z)=\int_0^z \arcsinh(u/2)du = 2-\sqrt{4+z^2}+z\arcsinh(z/2).
\end{equation}
Following \citet{woodworth2020kernel}, one can show the following result for the gradient flow dynamics in Eq.~(\ref{eq:gf}). 
\begin{theorem}[Theorem 1 of \citet{woodworth2020kernel}]
    If the  solution $\beta_\infty$ of the gradient flow~(\ref{eq:gf}) started from $w_+ = w_- = \alpha \in \R_{>0}^d$ for the squared parameter problem in Eq.~(\ref{eq:quadpro}) satisfies $X\beta_{\infty}=y$, then 
    \begin{equation}\label{eq:optpro}
        \beta_{\infty} = \arg \min_{\beta\in \mathbb R^d} \phi_\alpha (\beta) \ \ \text{s.t.} \ \ X\beta=y, 
    \end{equation}
    where $\phi_\alpha $ is defined in Eq.~(\ref{eq:qalpha}).
\end{theorem}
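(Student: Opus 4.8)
The plan is to integrate the gradient flow in closed form, exploiting the fact that for the squared parametrization the per-coordinate gradients are \emph{multiplicative} in $w_\pm$, and then to match the resulting expression for $\beta_\infty$ against the first-order optimality conditions of the convex program~(\ref{eq:optpro}).

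First I would write out the dynamics. With the residual $r(t) := X(w_+(t)^2 - w_-(t)^2) - y \in \R^n$, a direct computation gives $\nabla_{w_+} L = \tfrac{1}{n}\, w_+ \odot (X^\top r)$ and $\nabla_{w_-} L = -\tfrac{1}{n}\, w_- \odot (X^\top r)$, so the flow~(\ref{eq:gf}) reads $\dot w_+ = -\tfrac{1}{n}\, w_+ \odot (X^\top r)$ and $\dot w_- = \tfrac{1}{n}\, w_- \odot (X^\top r)$. Introducing the single auxiliary path $z(t) := -\tfrac{1}{n}\int_0^t X^\top r(s)\,ds \in \R^d$, both equations become $\dot w_\pm = \pm\, w_\pm \odot \dot z$, which integrate coordinatewise (using $z(0)=0$ and $w_+(0)=w_-(0)=\alpha>0$) to
\begin{equation*}
w_+(t) = \alpha \odot e^{z(t)}, \qquad w_-(t) = \alpha \odot e^{-z(t)},
\end{equation*}
so in particular both iterates stay strictly positive, and hence $\beta(t) = w_+(t)^2 - w_-(t)^2 = 2\,\alpha^2 \odot \sinh\!\big(2 z(t)\big)$. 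Since $z(t)$ is an integral of vectors of the form $X^\top r(s)$, it lies in $\mathrm{row}(X)$ for every $t$; because $z \mapsto 2\alpha^2 \odot \sinh(2z)$ is a coordinatewise strictly increasing bijection, convergence of $\beta(t)$ to $\beta_\infty$ (which the hypothesis presupposes) forces $z(t)$ to converge to some $z_\infty$ lying in the closed subspace $\mathrm{row}(X)$, whence $\beta_\infty = 2\,\alpha^2 \odot \sinh\!\big(X^\top(2\nu)\big)$ for some $\nu \in \R^n$.

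Next I would write the optimality conditions for $\min_\beta \phi_\alpha(\beta)$ subject to $X\beta = y$. From $q'(z) = \arcsinh(z/2)$ one gets $\nabla\phi_\alpha(\beta)_i = \arcsinh\!\big(\beta_i / (2\alpha_i^2)\big)$, and $q''(z) = (4+z^2)^{-1/2} > 0$, so $\phi_\alpha$ is strictly convex; hence a feasible $\beta$ is \emph{the} minimizer if and only if the Lagrange stationarity condition $\nabla\phi_\alpha(\beta) \in \mathrm{row}(X)$ holds, i.e.\ $\beta_i = 2\alpha_i^2 \sinh\!\big((X^\top\lambda)_i\big)$ for some $\lambda \in \R^n$. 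Comparing with the explicit limit above, $\lambda = 2\nu$ does the job, and feasibility $X\beta_\infty = y$ holds by assumption; therefore $\beta_\infty$ satisfies the necessary and sufficient KKT conditions of the convex program and equals $\arg\min_{X\beta=y}\phi_\alpha(\beta)$.

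The main obstacle is not this algebra — which is essentially forced once one spots the common time-change $z(t)$ — but the statements the theorem wisely leaves as hypotheses: that the flow is globally well defined and that $\beta(t)$ converges to an interpolating limit. Establishing the latter would require a separate argument (monotonicity of $L$ along the flow, boundedness of the trajectory, and ruling out convergence to a non-interpolating point); granting the hypothesis, the remaining work is the bookkeeping above together with the elementary regularity facts about $q$ (its antiderivative form, derivative $\arcsinh(\cdot/2)$, and strict convexity) that justify the $\sinh$/$\arcsinh$ inversions and the sufficiency of KKT.
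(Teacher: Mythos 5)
Your proposal is correct and follows essentially the same route as the paper's (sketched) argument: integrate the multiplicative flow to get $w_\pm(t)=\alpha\odot e^{\pm z(t)}$ with $z(t)\in\mathrm{row}(X)$, hence $\beta_\infty=2\alpha^2\odot\sinh(X^\top\lambda)$, and match this against the KKT stationarity condition $\nabla\phi_\alpha(\beta)=\arcsinh\bigl(\beta/(2\alpha^2)\bigr)\in\mathrm{row}(X)$ of the convex program. Your extra care about why $z(t)$ converges into the closed subspace $\mathrm{row}(X)$, and about strict convexity giving uniqueness of the minimizer, is a welcome tightening of details the paper leaves implicit.
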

It is worth noting that the implicit regularizer $\phi_\alpha$ interpolates between the $\ell_1$ and $\ell_2$ norms \citep[see][Theorem 2]{woodworth2020kernel}. Therefore the scale of the initialization determines the implicit bias of the gradient flow. The algorithm, started from $\alpha$, converges to the minimum $\ell_1$-norm interpolator for small $\alpha$ and to the minimum $\ell_2$-norm interpolator for large $\alpha$.
The proof follows from (a) the KKT condition for the optimization problem~(\ref{eq:optpro}): $\nabla \phi_\alpha (w) = X^\top \nu$ for a Lagrange multiplier $\nu$ and (b) the closed form solution obtained by integrating the gradient flow, $w= b(X^\top \nu)$ for some function $b$ and some vector $\nu$. Identifying $\nabla \phi_\alpha (w) = b^{-1}(w)$ leads to the solution.
Considering the same proof technique, we now derive the implicit bias for the $n$-SAM and $1$-SAM algorithms.

\subsection{Implicit Bias of the $\bm{n}$-SAM Algorithm.}
We start from characterizing the implicit bias of the $n$-SAM dynamics~(\ref{eq:flowmaxsum}) in the following theorem using the function $\phi_\alpha$ defined in Eq.~(\ref{eq:qalpha}). We will also make use of this notation: a parameter vector $w = \left[\begin{smallmatrix}w_+\\w_-\end{smallmatrix}\right] \in \R^{2 d}$, a concatenation of matrices $\tilde X = [X\ \ -X] \in \R^{n \times 2d}$ and a residual vector $r(t) = \tilde Xw(t)^2-y$.
\begin{theorem}\label{theorem:biasmaxsum}
    If the solution $\beta_\infty$ of the $n$-SAM gradient flow~(\ref{eq:flowmaxsum}) started from $w_+ = w_- = \alpha \in \R_{>0}^d$ for the squared parameter problem in Eq.~(\ref{eq:quadpro}) satisfies $X\beta_{\infty}=y$, then 
    \begin{equation*}
        \beta_{\infty} = \arg \min_{\beta} \phi_{\alpha_{\text{n-SAM}}} (\beta) \ \ \text{s.t.} \ \ X\beta=y, 
    \end{equation*}
    where $\alpha_{\text{n-SAM}} = \alpha \odot \exp\left(- \frac{2\rho}{n^2} \int_0^\infty (X^\top  r_s)^2ds +O(\rho^2)\right)$.
\end{theorem}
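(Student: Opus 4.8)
I would follow the route of the proof of Theorem~1 of \citet{woodworth2020kernel} recalled above: integrate the $n$-SAM flow~(\ref{eq:flowmaxsum}) in multiplicative (log) coordinates to obtain a closed form for $w(\infty)$, extract from it a stationarity condition of the form $\nabla\phi_{\alpha_{\text{n-SAM}}}(\beta_\infty)=X^\top\nu$ for some $\nu\in\R^n$, and then combine it with the hypothesis $X\beta_\infty=y$ and the strict convexity of $\phi_{\alpha_{\text{n-SAM}}}$ to conclude that $\beta_\infty$ is the unique minimizer of problem~(\ref{eq:optpro}) with $\alpha$ replaced by $\alpha_{\text{n-SAM}}$. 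The only novelty compared with the gradient-flow case is an extra $O(\rho)$ term in the drift, which will be seen to leave both the constraint set and the functional form of $\phi$ untouched and only to rescale the initialization entering $\phi_\alpha$.

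\textbf{The computation.} Writing $r=\tilde Xw^2-y$, we have $\nabla L(w)=\tfrac1n\,w\odot(\tilde X^\top r)$, so the inner perturbed point $v:=w+\rho\nabla L(w)$ satisfies $v_{\pm,j}=w_{\pm,j}\bigl(1\pm\tfrac\rho n(X^\top r(w))_j\bigr)$ for $j\in\{1,\dots,d\}$, with $\pm$ indexing the two blocks $w_+,w_-$. Substituting this into $\dot w=-\nabla L(v)$ and dividing coordinate-wise by $w$ — legitimate because, exactly as for the gradient flow, the flow preserves $\sign(w)$ and keeps $w$ away from $0$ — a short computation yields the two identities
\begin{align*}
    \tfrac{d}{dt}\bigl(\log w_{+,j}-\log w_{-,j}\bigr)&=-\tfrac2n\,(X^\top r(v))_j,\\
    \tfrac{d}{dt}\bigl(\log w_{+,j}+\log w_{-,j}\bigr)&=-\tfrac{2\rho}{n^2}\,(X^\top r(v))_j\,(X^\top r(w))_j.
\end{align*}
The first identity is exact in $\rho$; integrating it from $w_+=w_-=\alpha$ gives $\log\!\bigl(w_{+,j}(t)/w_{-,j}(t)\bigr)=(X^\top\nu(t))_j$ with $\nu(t)=-\tfrac2n\int_0^t r(v(s))\,ds\in\R^n$. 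Since $v=w+O(\rho)$ along the flow — so that $(X^\top r(v))_j(X^\top r(w))_j=(X^\top r_s)_j^2+O(\rho)$, writing $r_s:=r(w(s))$ — integrating the second identity gives $\log\!\bigl(w_{+,j}(t)w_{-,j}(t)\bigr)=\log\alpha_j^2-\tfrac{2\rho}{n^2}\int_0^t(X^\top r_s)_j^2\,ds+O(\rho^2)$. Thus the scale $w_{+,j}w_{-,j}$ — which equals $\alpha_j^2$ at initialization and is precisely the quantity that appears as ``$\alpha_j^2$'' in $\phi_\alpha$ of Eq.~(\ref{eq:qalpha}) — is multiplied along the flow by $\exp\!\bigl(-\tfrac{2\rho}{n^2}\int_0^\infty(X^\top r_s)^2\,ds+O(\rho^2)\bigr)$, which is exactly how $\alpha_{\text{n-SAM}}$ in the statement differs from $\alpha$.

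\textbf{Optimality.} Using $\beta_j=w_{+,j}^2-w_{-,j}^2=2\,(w_{+,j}w_{-,j})\sinh\!\bigl(\log(w_{+,j}/w_{-,j})\bigr)$ at $t=\infty$ together with the two formulas above, $\beta_{\infty,j}=2\,(\alpha_{\text{n-SAM}})_j^2\,\sinh\!\bigl((X^\top\nu)_j\bigr)$ with $\nu:=\nu(\infty)$. Since $q'(z)=\arcsinh(z/2)$ by Eq.~(\ref{eq:qalpha}), we have $\nabla\phi_{\alpha_{\text{n-SAM}}}(\beta)_j=\arcsinh\!\bigl(\beta_j/(2(\alpha_{\text{n-SAM}})_j^2)\bigr)$, so the previous display rearranges exactly to $\nabla\phi_{\alpha_{\text{n-SAM}}}(\beta_\infty)=X^\top\nu$. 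Each term $\alpha_i^2 q(\,\cdot\,/\alpha_i^2)$ of $\phi_{\alpha_{\text{n-SAM}}}$ is strictly convex (its second derivative in $\beta_i$ equals $\bigl(\alpha_i^2\sqrt{4+(\beta_i/\alpha_i^2)^2}\bigr)^{-1}>0$) and $\{\beta:X\beta=y\}$ is affine, so this stationarity condition together with the assumed feasibility $X\beta_\infty=y$ constitute the full (sufficient) KKT conditions for problem~(\ref{eq:optpro}) with $\alpha\leftarrow\alpha_{\text{n-SAM}}$; hence $\beta_\infty$ is its unique solution, which is the claim.

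\textbf{Main obstacle.} The delicate part is analytic rather than algebraic: passing to $t=\infty$ requires that the integrals $\int_0^\infty r(v(s))\,ds$ and $\int_0^\infty(X^\top r_s)^2\,ds$ converge and that the $O(\rho^2)$ remainder be uniform over the infinite time horizon. As in \citet{woodworth2020kernel}, both rest on the assumed global convergence $X\beta_\infty=y$ (i.e.\ $r_s\to0$) together with a decay estimate for $r_s$ along the flow — e.g.\ from the monotone decrease of $L$ and a local quadratic-growth bound near the interpolation manifold — which makes the time-integrals finite and controls the deviation of the $n$-SAM flow from the gradient flow by $O(\rho)$ uniformly in $t$, so that the neglected terms are genuinely $O(\rho^2)$. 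One must also check, exactly as in the unperturbed case, that the flow preserves $\sign(w)$ so that the log-coordinates used above remain well-defined throughout; the rest is routine calculus.
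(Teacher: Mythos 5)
Your proposal is essentially the paper's own proof: the paper likewise integrates the flow multiplicatively to obtain $w(t)=\alpha\odot\exp\bigl(-\tfrac1n\tilde X^\top\!\int_0^t r_{sam}\bigr)\odot\exp\bigl(-\tfrac{\rho}{n^2}\!\int_0^t(\tilde X^\top r_{sam})\odot(\tilde X^\top r)\bigr)$, reads off $\beta_\infty=2\tilde\alpha^2\odot\sinh(X^\top\nu)$ with $\nu=-\tfrac2n\int_0^\infty r_{sam}(s)\,ds$, matches this to the KKT condition $\nabla\phi_{\tilde\alpha}(\beta_\infty)=X^\top\nu$, and expands $r_{sam}=r+O(\rho)$ to get the stated exponent, so your sum/difference-of-logs bookkeeping is just a coordinate-wise rewriting of that computation, and your explicit attention to integrability, sign preservation, and uniformity of the $O(\rho^2)$ remainder only makes precise what the paper leaves implicit. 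One shared wrinkle worth noting: since $w_{+,j}w_{-,j}$ plays the role of $\alpha_j^2$ (not $\alpha_j$) in $\phi_\alpha$, the factor $\exp\bigl(-\tfrac{2\rho}{n^2}\int_0^\infty(X^\top r_s)^2ds\bigr)$ you correctly derive for $w_+w_-$ should yield $\alpha_{\text{n-SAM}}=\alpha\odot\exp\bigl(-\tfrac{\rho}{n^2}\int_0^\infty(X^\top r_s)^2ds+O(\rho^2)\bigr)$ — a harmless factor of two in the exponent that the paper's own statement and proof also carry.
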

We note that for a small enough $\rho$, the implicit bias parameter $\alpha_{\text{n-SAM}}$ is smaller than $\alpha$. The scale of the vector $\frac{1}{n^2}\int_0^\infty (X^\top  r_s)^2ds$ which influences the implicit bias effect is related to the loss integral $\frac{d}{n}\int_0^\infty L(w(s))ds$ since $\| r_s\|^2 =n L(w(s))$  (see intuition in Eq.~(\ref{eq:lossmaxsum})). Thereby the speed of convergence of the loss controls the magnitude of the biasing effect. However in the case of $n$-SAM, as explained in Sec.~\ref{subsec:comparison_1sam_nsam}, this effect is typically negligible because of the extra prefactor $\frac{d}{n}$ and this implementation behaves similarly as ERM as shown in the experiments in Sec.~\ref{sec:impmain}.
\begin{proof} We follow the proof technique of \citet{woodworth2020kernel}. We denote the intermediate step of $n$-SAM as $w_{sam}(t)= w(t)+\rho \nabla L(w(t))$ and the residual of $w_{sam}(t)$ as $r_{sam}(t) = \tilde X w_{sam}(t)^2-y$. We start from deriving the equation satisfied by the flow
\begin{align*}
    \dot w(t) 
    & = -\nabla L(w_{sam}(t)) \\ 
    & = -\frac{1}{n} \tilde X^\top r_{sam}(t) \odot w_{sam}(t) \\
    & = -\frac{1}{n} \tilde X^\top r_{sam}(t) \odot \left( w(t) + \frac{\rho}{n} \left(\tilde X^\top r(t)\right) \odot w(t) \right).
\end{align*}
Now we can directly integrate this ODE to obtain an expression for $w(t)$: 
\begin{align*}
    w(t) = w(0)\odot \exp \left( -\frac{1}{n}\tilde X ^\top \int_0^t r_{sam}(s)ds \right)\odot \exp \left(-\frac{\rho}{n^2}  \int_0^t \left( \tilde X ^\top r_{sam}(s)\right) \odot \left(\tilde X ^\top r(s)\right) ds \right). 
\end{align*}
Using that the flow is initialized at $w(0)=\alpha$ and the definition of $\beta(t)$ yields to
\begin{align*}
    \beta(t) & = w_+(t)^2-w_-(t)^2 \\
    & = \alpha^2 \odot \exp \left( -\frac{2}{n} X^\top \int_0^t r_{sam}(s)ds \right) \odot \exp \left( -\frac{2\rho}{n^2}  \int_0^t \left(X ^\top r_{sam}(s)\right) \odot \left(X ^\top r(s)\right) ds \right) \\
    & \quad \quad- \alpha^2 \odot \exp \left( \frac{2}{n} X ^\top \int_0^t r_{sam}(s)ds \right) \odot \exp \left( -\frac{2\rho}{n^2}  \int_0^t \left(X ^\top r_{sam}(s)\right) \odot \left(X ^\top r(s)\right) ds \right) \\
    & = 2\alpha^2 \odot \exp \left( -\frac{2\rho}{n^2}  \int_0^t \left(X ^\top r_{sam}(s)\right) \odot \left(X ^\top r(s)\right) ds \right) \odot \sinh \left( -\frac{2}{n} X ^\top \int_0^t r_{sam}(s)ds \right). 
\end{align*}
Recall that we are assuming that $\beta_\infty$ is a global minimum of the loss, i.e., $X \beta_\infty = y$. Thus, $\beta_\infty$ has to simultaneously satisfy 
\begin{align*}
    X \beta_\infty = y \text{\ \ and\ \ } \beta_\infty = b_{\alpha_\text{n-SAM}}(X^\top\nu),
\end{align*}
where $b_\alpha(z) = 2 \alpha^2 \odot \sinh(z) $ and  $\nu = -\frac{2}{n} \int_0^\infty r_{sam}(s)ds$, and 
\begin{equation}\label{eq:alpha_n_sam}
 \alpha_\text{n-SAM} = \alpha \odot \exp \left(-\frac{2\rho}{n^2}  \int_0^\infty (X^\top r_{sam}(s)) \odot (X^\top r(s)) ds \right).
\end{equation} 
Next we combine the flow expression $b_{\alpha_\text{n-SAM}}^{-1}(\beta_\infty) = X^\top\nu$ with a KKT condition $\nabla \phi_\alpha (w) = X^\top \nu$ and get that
\begin{equation*}
    \nabla \phi_\alpha (\beta) = b_\alpha^{-1}(\beta) = \arcsinh \left( \frac{1}{2\alpha^2} \odot \beta \right).
\end{equation*}
Integration of this equation leads to $\phi_\alpha(\beta) = \sum_{i=1}^d \alpha_i^2 q(\beta_i/\alpha_i^2)$ where $q(z)=\int_0^z \arcsinh(u/2)du = 2-\sqrt{4+z^2}+z\arcsinh(z/2)$, i.e., exactly the potential function defined in Eq.~(\ref{eq:qalpha}). Thus, we conclude that $\beta_\infty$ satisfies the KKT conditions $X \beta_\infty = y$ and $\nabla \phi_\alpha (\beta_\infty) = X^\top\nu$ for the minimum norm interpolator problem:
\begin{equation*}
    \min_{\beta \in \R^d} \phi_{\alpha} (\beta) \ \ \ \text{s.t.} \ \ \ X\beta=y,
\end{equation*}
which proves the first part of the result. 

Now to get the expression for $\alpha_{\text{n-SAM}}$, we apply the definition of $r_{sam}(s)$ and obtain 
\begin{align*}
    r_{sam}(t) 
    &= \tilde X w_{sam}(t)^2 -y \\
    &= \tilde X \left( w(t) + \frac{\rho}{n} \left(\tilde X^\top r(t)\right) \odot w(t) \right)^2 - y  \\
    &= r(t) + \frac{2\rho}{n} \tilde X \left(\tilde X^\top r(t)\right) \odot w(t) + \frac{\rho^2}{n^2} \tilde X \left(\tilde X^\top r(t)\right)^2 \odot w(t)^2  \\
    &= r(t) + \frac{2\rho}{n} X \left(X^\top r(t)\right)\odot (w_+(t) + w_-(t)) + \frac{\rho^2}{n^2} X \left(X^\top r(t)\right)^2\odot (w_+(t)^2 + w_-(t)^2).
\end{align*}
Thus we conclude that $X^\top r_{sam}(t) = X^\top r(t) + O(\rho)$ which we plug in Eq.~(\ref{eq:alpha_n_sam}) to obtain the second part of the theorem:
\begin{align*}
    \alpha_{\text{n-SAM}} = \alpha \odot \exp \left(- \frac{2\rho}{n^2} \int_0^\infty (X^\top  r_s)^2 ds +O(\rho^2)\right).
\end{align*}
\end{proof}

\subsection{Implicit Bias of the $\bm{1}$-SAM Algorithm}
We characterize similarly the implicit bias of the $1$-SAM dynamics~(\ref{eq:flowsummax}) in the following theorem using the function $\phi_\alpha$ defined in Eq.~(\ref{eq:qalpha}).
\begin{theorem} \label{theorem:biassummax}
    If the solution $\beta_\infty$ of the $1$-SAM gradient flow~(\ref{eq:flowsummax}) started from $w_+ = w_- = \alpha \in \R_{>0}^d$ for the squared parameter problem in Eq.~(\ref{eq:quadpro}) satisfies $X\beta_{\infty}=y$, then 
    \begin{equation*}
        \beta_{\infty} = \arg \min_{\beta} \phi_{\alpha_{\text{1-SAM}}} (\beta) \ \ \text{s.t.} \ \ X\beta=y, 
    \end{equation*}
    where $\alpha_{\text{1-SAM}} = \alpha \odot \exp\left(- \frac{8\rho}{n} \int_0^\infty  \sum_{i=1}^n x_i^2 (x_i^\top \beta(s)-y_i)^2 ds + O(\rho^2)\right)$.
    
    In addition, assume that there exist $R,B\geq 0$ such that almost surely (1) the inputs are bounded $\|x\|_2 \leq R$ and (2) the trajectory of the flow is bounded $\|\beta(t)\|_2 \leq B$ for all $t\geq 0$. Then for all $\rho \leq \frac{1}{4 R^2 \sqrt{ B(B+\|\beta_*\|_2)}}$, we have that $\alpha_{\text{1-SAM},i} \leq \alpha_i$ for $i \in \{1, \dots, d\}$.
\end{theorem}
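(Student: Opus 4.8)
The plan is to mirror the argument already used for the $n$-SAM dynamics in the proof of Theorem~\ref{theorem:biasmaxsum}, adapting it to the per-example structure of the $1$-SAM flow~(\ref{eq:flowsummax}), and then to supply a dedicated sign argument for the second claim. I would first set up per-example notation: write $r_i(t) = x_i^\top\beta(t) - y_i$ for the $i$-th residual, $w_{sam}^i(t) = w(t) + \rho\,\nabla\ell_i(w(t))$ for the $i$-th SAM intermediate point, and $r_{sam}^i(t) = \tilde x_i^\top (w_{sam}^i(t))^2 - y_i$ for its residual. The key structural observation is that $\nabla\ell_i(w) = r_i\,\tilde x_i\odot w$ is coordinatewise proportional to $w$, so $w_{sam}^i = w\odot(\1 + \rho\,r_i\,\tilde x_i)$ and hence $\nabla\ell_i(w_{sam}^i) = r_{sam}^i\,(\tilde x_i + \rho\,r_i\,\tilde x_i^2)\odot w$, where $\tilde x_i^2 = [x_i^2;\,x_i^2]$ is identical on the two blocks. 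Plugging into $\dot w = -\tfrac1n\sum_i\nabla\ell_i(w_{sam}^i)$ turns the flow into the coordinatewise linear ODE $\dot w(t) = -w(t)\odot\big(\tfrac1n\sum_i r_{sam}^i(t)\,\tilde x_i + \tfrac{\rho}{n}\sum_i r_{sam}^i(t) r_i(t)\,\tilde x_i^2\big)$, exactly as in the gradient-flow and $n$-SAM cases.

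Next I would integrate this ODE in closed form from $w(0)=\alpha$. Because the first vector field changes sign between the $w_+$ and $w_-$ blocks while the second does not, forming $\beta(t) = w_+(t)^2 - w_-(t)^2$ gives $\beta(t) = 2\alpha^2\odot\exp\!\big(-\tfrac{2\rho}{n}\int_0^t\sum_i r_{sam}^i(s) r_i(s)\,x_i^2\,ds\big)\odot\sinh\!\big(-\tfrac2n\int_0^t\sum_i r_{sam}^i(s)\,x_i\,ds\big)$. Sending $t\to\infty$ and assuming $X\beta_\infty = y$, this is of the form $\beta_\infty = b_{\alpha_{\text{1-SAM}}}(X^\top\nu)$ with $b_\alpha(z) = 2\alpha^2\odot\sinh(z)$, $\nu = -\tfrac2n\int_0^\infty r_{sam}(s)\,ds$, and $\alpha_{\text{1-SAM}} = \alpha\odot\exp\!\big(-\tfrac{\rho}{n}\int_0^\infty\sum_i r_{sam}^i(s) r_i(s)\,x_i^2\,ds\big)$. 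From here the variational characterization is verbatim the Woodworth-et-al.\ step reused in the proof of Theorem~\ref{theorem:biasmaxsum}: since $b_\alpha^{-1}(\beta) = \arcsinh(\beta/(2\alpha^2)) = \nabla\phi_\alpha(\beta)$, the identities $X\beta_\infty = y$ and $\nabla\phi_{\alpha_{\text{1-SAM}}}(\beta_\infty) = X^\top\nu$ are precisely the KKT conditions of the convex program $\min_\beta\phi_{\alpha_{\text{1-SAM}}}(\beta)$ s.t.\ $X\beta = y$, which yields the first displayed claim. To get the explicit leading-order form, I would Taylor-expand $(w_{sam}^i)^2 = w^2\odot(\1 + \rho r_i\tilde x_i)^2$ to obtain $r_{sam}^i(s) = r_i(s) + O(\rho)$ uniformly along the bounded trajectory and substitute, which (tracking the loss normalization) gives $\alpha_{\text{1-SAM}} = \alpha\odot\exp\!\big(-\tfrac{8\rho}{n}\int_0^\infty\sum_i x_i^2 (x_i^\top\beta(s)-y_i)^2\,ds + O(\rho^2)\big)$.

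For the second, quantitative claim, note that by the exact expression for $\alpha_{\text{1-SAM}}$ above it suffices to show the integrand $\sum_i r_{sam}^i(s) r_i(s)\,x_i^2$ is coordinatewise nonnegative for all $s$, i.e.\ $r_{sam}^i(s) r_i(s)\ge 0$ for every $i$ and $s$ — intuitively, a single ascent step of size $\rho$ is too small to flip the sign of any residual. I would establish this by writing the residual \emph{exactly} as $r_{sam}^i = r_i\big[1 + 2\rho\sum_k x_{ik}^2(w_{+,k}^2 + w_{-,k}^2) + \rho^2 r_i\sum_k x_{ik}^3\beta_k\big]$ (no higher-order terms appear because $(1+t)^2$ terminates): the $O(\rho)$ correction is manifestly nonnegative, and for the $O(\rho^2)$ term I would use $\|x_i\|_2\le R$, $\|\beta(s)\|_2\le B$ and realizability $y = X\beta_*$ — hence $|r_i|\le R(B+\|\beta_*\|_2)$ and, by Cauchy--Schwarz, $|\sum_k x_{ik}^3\beta_k|\le\|x_i^{\odot2}\|_2\,\|x_i\odot\beta\|_2\le R^3 B$ — to bound it in absolute value by $\rho^2 R^4 B(B+\|\beta_*\|_2)\le\tfrac1{16}$ whenever $\rho\le \tfrac{1}{4R^2\sqrt{B(B+\|\beta_*\|_2)}}$. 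Thus $r_{sam}^i r_i\ge\tfrac{15}{16}r_i^2\ge0$, so every coordinate of the exponent defining $\alpha_{\text{1-SAM}}$ is $\le0$ and $\alpha_{\text{1-SAM},i}\le\alpha_i$, as claimed.

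The bulk of the work — the closed-form integration and the KKT identification — is a routine transcription of the $n$-SAM computation, since replacing the averaged residual $X^\top r$ by the per-example residuals $r_i x_i$ does not destroy the coordinatewise integrability (each $\nabla\ell_i$ is still entrywise proportional to $w$). I expect the main obstacle to be the last part: one must control the product $r_{sam}^i r_i$ along the \emph{entire} trajectory rather than merely near the limit, which is exactly where the a priori boundedness hypotheses and the precise threshold on $\rho$ enter and where the cubic term $\rho^2 r_i\sum_k x_{ik}^3\beta_k$ has to be estimated with care; a secondary technical point is justifying the uniformity of the $O(\rho)$ and $O(\rho^2)$ remainders, which again rests on the trajectory being bounded.
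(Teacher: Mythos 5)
Your proposal is correct and follows essentially the same route as the paper's proof: integrate the coordinatewise-linear ODE in closed form, identify $b_{\alpha_{\text{1-SAM}}}^{-1}=\nabla\phi_{\alpha_{\text{1-SAM}}}$ with the KKT conditions, expand $r_{sam}^i$ exactly (the square terminates), observe the $O(\rho)$ correction is nonnegative, and control the cubic $O(\rho^2)$ term via Cauchy--Schwarz under the boundedness hypotheses to get the threshold on $\rho$. The only discrepancy is constant bookkeeping (your convention $\nabla\ell_i=r_i\,\tilde x_i\odot w$ versus the paper's $4\rho r_i\tilde x_i$ in the ascent step, which propagates to the $8\rho/n$ prefactor and the $1/16$ in the final bound), which you flag and which traces back to how the $1/4$ in Eq.~(\ref{eq:quadpro}) is apportioned to $\ell_i$.
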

\begin{proof}
The proof follows the same lines as the proof of Theorem~\ref{theorem:biasmaxsum}. 
We denote a concatenation of positive and negative copies of the $i$-th training example as $\tilde x_i = \left[\begin{smallmatrix}x_i\\-x_i\end{smallmatrix}\right]\in \R^{2d}$, the intermediate step of $1$-SAM based on the $i$-th training example as $w_{sam}^{(i)}(t) \in \R^d$, the residuals of $w(t)$ and $w_{sam}^{(i)}(t)$ on the $i$-th training example as $r_i(t) = \tilde x_i^\top w(t)^2-y_i$ and $r_{sam,i}(t) = \tilde x_i^\top w_{sam}^{(i)}(t)^2-y_i$.
Then we have that the dynamics of the flow~(\ref{eq:flowsummax}) satisfies
\begin{align*}
    \dot w(t) 
    & = -\frac{1}{n} \sum_{i=1}^n \nabla \ell_i(w_{sam}^{(i)}(t)) \\
    & = -\frac{1}{n} \sum_{i=1}^n r_{sam,i}(t) \cdot \tilde x_i \odot w_{sam}^{(i)}(t) \\
    & = -\frac{1}{n} \sum_{i=1}^n r_{sam,i}(t) \cdot \tilde x_i \odot w(t) \odot \left(\mathbf{1} + 4\rho r_{i}(t) \tilde x_i \right).
\end{align*}
Integration of this ODE leads to
\begin{align*}
     w(t) 
     =  w(0)\odot \exp \left( -\frac{1}{n} \tilde X^\top \int_0^t r_{sam}(s)ds\right) \odot \exp \left( -\frac{4\rho}{n} \sum_{i=1}^n  \tilde x_i^2  \int_0^t  r_{sam,i}(s) r_{i}(s) ds\right).
\end{align*}
The rest of the proof is similar to the one of Theorem~\ref{theorem:biasmaxsum} and we directly obtain that 
\begin{equation}
     \alpha_\text{1-SAM} = \alpha \odot \exp \left(-\frac{8\rho}{n} \sum_{i=1}^n  \tilde x_i^2  \int_0^t  r_{sam,i}(s) r_{i}(s) ds \right).
\end{equation}
Using the definition of $r_{sam,i}(t)$ we have 
\begin{align*}
    r_{sam,i}(t) &= \tilde x_i^\top w_{sam}(t)^2-y_i \\
    & = \tilde x_i^\top w(t)^2\odot \left(\mathbf{1} + 4\rho r_i(t)\tilde x_i\right)^2 -y_i \\
    & = \tilde x_i^\top w(t)^2\odot \left(\mathbf{1} + 8\rho r_i(t) \tilde x_i + 16\rho^2 r_i(t)^{2} \tilde x_i^2\right) -y_i \\
    & = r_i(t) + 8 \rho r_i(t) \left(w_+(t)^2 + w_-(t)^2\right)^\top x_i^2 + 16 \rho^2 r_i(t)^2 \left(w_+(t)^2 - w_-(t)^2\right)^\top x_i^3 \\
    & = r_i(t) + 8 \rho r_i(t) \left(w_+(t)^2 + w_-(t)^2\right)^\top x_i^2 + 16 \rho^2 r_i(t)^2 \beta(t)^\top x_i^3
\end{align*}
And therefore 
\begin{align}\label{eq:main_part_of_exp_one_sam}
    x_i^2  r_{sam,i}(t)r_{i}(t)= r_{i}(t)^2 x_i^2 \odot \left( \mathbf{1}  + 8 \rho \left(w_+(t)^2+w_-(t)^2\right)^\top x_i^2 + 16 \rho^2 r_{i}(t) \beta(t)^\top x_i^3 \right)
\end{align}
This leads to the result stated in the theorem
\begin{equation}
     \alpha_{\text{1-SAM}} = \alpha \odot \exp\left(- \frac{8\rho}{n} \int_0^\infty  \sum_{i=1}^n x_i^2 (x_i^\top \beta(s)-y_i)^2 ds + O(\rho^2)\right).
\end{equation}

Additionally, from Eq.~(\ref{eq:main_part_of_exp_one_sam}) we can conclude that having $\rho$ such that $1 + 16 \rho^2 r_{i}(t) \beta(t)^\top x_i^3 \geq 0$ is sufficient to guarantee that $\alpha_{\text{1-SAM},i} \leq \alpha_i$ for every $i$. We can use Cauchy-Schwarz inequality twice to upper bound $|r_{i}(t) \beta(t)^\top x_i^3|$:
\begin{align*}
    |r_{i}(t) \beta(t)^\top x_i^3| 
    &= |x_i^\top (\beta - \beta_*) \beta(t)^\top x_i^3| \leq \| x_i\|_2 \|\beta(t) - \beta_*\|_2 \|\beta(t)\|_2 \|x_i^3\|_2 \\
    &\leq \| x_i\|_2^4 (\|\beta(t)\|_2+\|\beta_*\|_2)\|\beta(t)\|_2 \leq R^4 (B + \|\beta_*\|_2) B
\end{align*}
Thus, we have that $\rho^2 r_{i}(t) \beta(t)^\top x_i^3 \geq -\rho^2 R^4 (B + \|\beta_*\|_2) B \geq -\frac{1}{16}$ which leads to the upper bound stated in the theorem $\rho \leq \frac{1}{4 R^2 \sqrt{ B(B+\|\beta_*\|_2)}}$.
\end{proof}

\subsection{Comparison between $\bm{1}$-SAM and $\bm{n}$-SAM}
\label{subsec:comparison_1sam_nsam}

\myparagraph{Theoretical comparison.}
We wish to compare the two leading terms of the exponents in $\alpha_{\text{n-SAM}}$ and $\alpha_{\text{1-SAM}}$:
\begin{align*}
    I_{\text{n-SAM}}(t)= \frac{1}{n^2} \left(X^\top r(t)\right)^2 = \frac{1}{n^2}\left(\sum_{i=1}^n x_i r_i(t) \right)^2 
    \text{\ \ and\ \ }
    I_{\text{1-SAM}}(t)=\frac{1}{n} \sum_{i=1}^n  x_i^2  r_{i}(t)^2,
\end{align*} 
and relate them to the loss values at $w(t)$.

We first note that using Cauchy-Schwarz inequality can directly imply that $I_{\text{1-SAM},i}(t) \geq I_{\text{n-SAM},i}(t)$. However, we aim at obtaining a more quantitative result, even though the following derivations will be informal. Comparing the $\ell_1$-norms of $I_{\text{n-SAM}}(t)$ and $I_{\text{1-SAM}}(t)$ amounts to compare the following two quantities: 
\begin{align*}
    \|I_{\text{n-SAM}}(t)\|_1 &= (w(t)-w_*)^\top \left[\frac{1}{n}\sum_{i=1}^n x_i x_i^\top\right]^2 (w(t)-w_*), \\
    \|I_{\text{1-SAM}}(t)\|_1 &= (w(t)-w_*)^\top \left[\frac{1}{n}\sum_{i=1}^n \|x_i\|_2^2 x_i x_i^\top\right] (w(t)-w_*).
\end{align*}
We can compare the typical operator norms of the random matrices that define the two quadratic forms. If we assume that $x_i \sim \mathcal{N}(0, I_d)$, then following the Bai-Yin’s law, the operator norm of a Wishart matrix is with high probability $\| \frac{1}{n}\sum_{i=1}^n x_ix_i^\top  \|_{op} \approx \frac{d}{n}$ and that with high probability, the squared norm of a Gaussian vector is  $\| x_i\|_2^2 \approx d $. Therefore we obtain that 
\begin{align*}
    \left\| \left[\frac{1}{n}\sum_{i=1}^n x_i x_i^\top\right]^2 \right\|_{op} &=  \left\| \frac{1}{n}\sum_{i=1}^n x_i x_i^\top \right\|_{op}^2 \approx \frac{d^2}{n^2}, \\
    \left\| \frac{1}{n}\sum_{i=1}^n \|x_i\|^2x_ix_i^\top \right\|_{op} &\approx d \left\|  \frac{1}{n}\sum_{i=1}^n x_ix_i^\top \right\|_{op} \approx \frac{d^2}{n}.
\end{align*}
Therefore in the overparametrized regime ($d>>n$), we typically have that $\frac{\left\| I_{\text{1-SAM}}(t)\right\|_1}{\left\|I_{\text{n-SAM}}(t)\right\|_1} \approx n$ and the biasing effect of $1$-SAM would tend to be $O(n)$ times better compared to $n$-SAM.

However, this first insight only enables to compare $I_{\text{n-SAM}}(t)$ and $I_{\text{1-SAM}}(t)$. It is not informative on the intrinsic biasing effect of $n$-SAM and $1$-SAM. With this aim, we would like to relate the  quantities $I_{\text{n-SAM}}(t)$ and $I_{\text{1-SAM}}(t)$ to the loss function evaluated in $w(t)$. 
Using the concentration of Wishart matrices, i.e., $\frac{1}{d}[ X X^\top] \approx I$ for large dimension $d$, we have with high probability
\begin{align}
    \|I_{\text{n-SAM}}(t)\|_1& =\frac{1}{n^2 } (w(t)-w_*)^\top X^\top X X^\top  X (w(t)-w_*) \nonumber \\
    &= \frac{d}{n^2 } (w(t)-w_*)^\top  X^\top\frac{1}{d}[ X X^\top]  X (w(t)-w_*) \nonumber \\
    &\approx \frac{d}{n} (w(t)-w_*)^\top \frac{1}{n} [X^\top   X] (w(t)-w_*)\nonumber \\
    &= \frac{d}{n}L(w(t)) \label{eq:losssummax}.
\end{align}
And using the concentration of Gaussian vectors, we also have that
\begin{align}
    \|I_{\text{1-SAM}}(t)\|_1 &= (w(t)-w_*)^\top \frac{1}{n}\sum_{i=1}^n \|x_i\|^2x_ix_i^\top (w(t)-w_*)\nonumber \\
    &\approx d (w(t)-w_*)^\top \frac{1}{n}\sum_{i=1}^n x_ix_i^\top (w(t)-w_*) \nonumber \\
    &= d L(w(t)). \label{eq:lossmaxsum}
\end{align}
These approximations provide some intuition on why the biasing effect of $1$-SAM and $n$-SAM can be related to the integral of the loss and that typically the difference is on the order of $n$. We let a formal derivation of these results as future work. 
\begin{figure}[t]
    \centering
    \begin{minipage}[c]{.37\linewidth}
        \includegraphics[clip, trim=8mm 65mm 20mm 70mm, width=\linewidth]{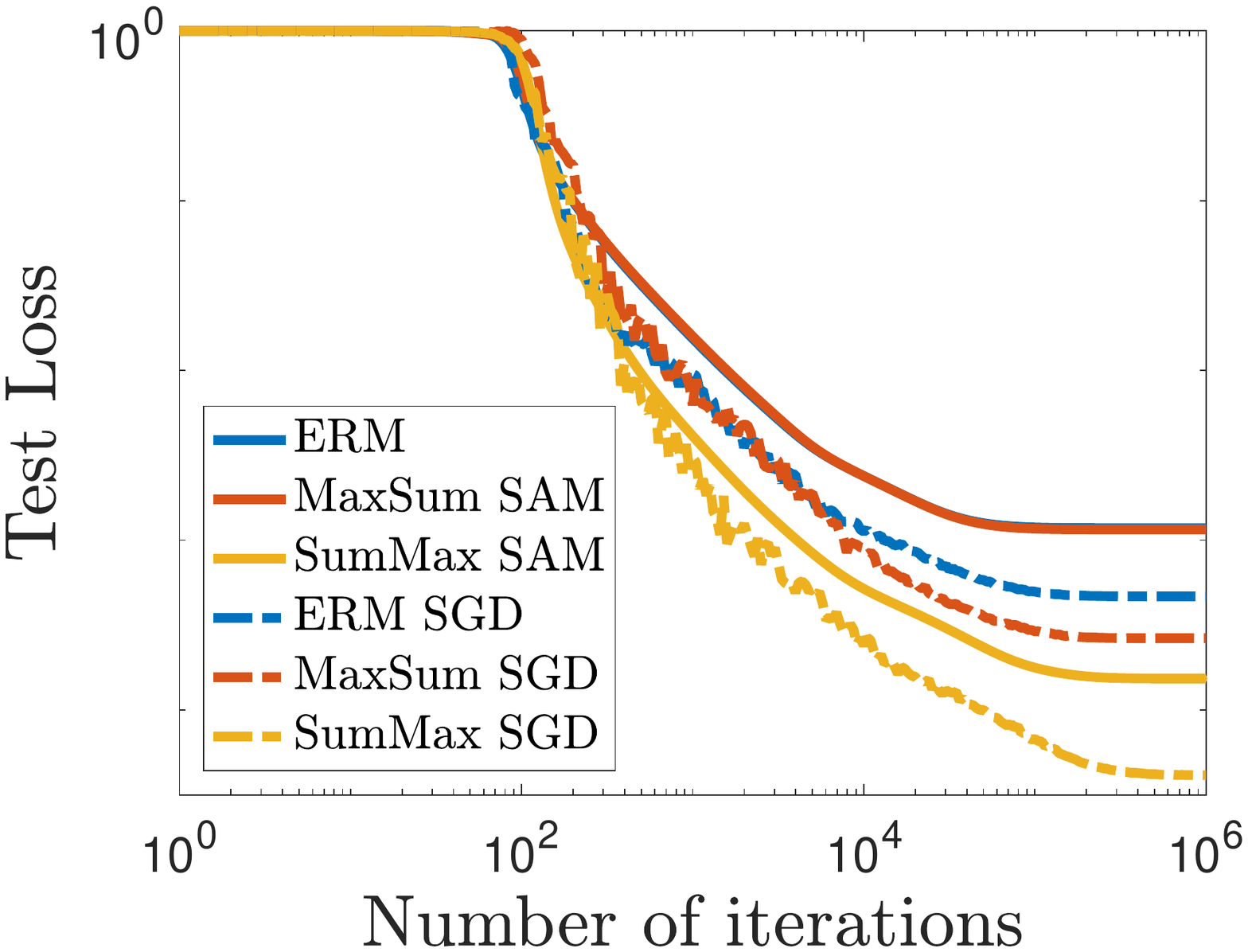}
    \end{minipage}
    \quad \quad
    \begin{minipage}[c]{.37\linewidth}
        \includegraphics[clip, trim=8mm 65mm 20mm 70mm, width=\linewidth]{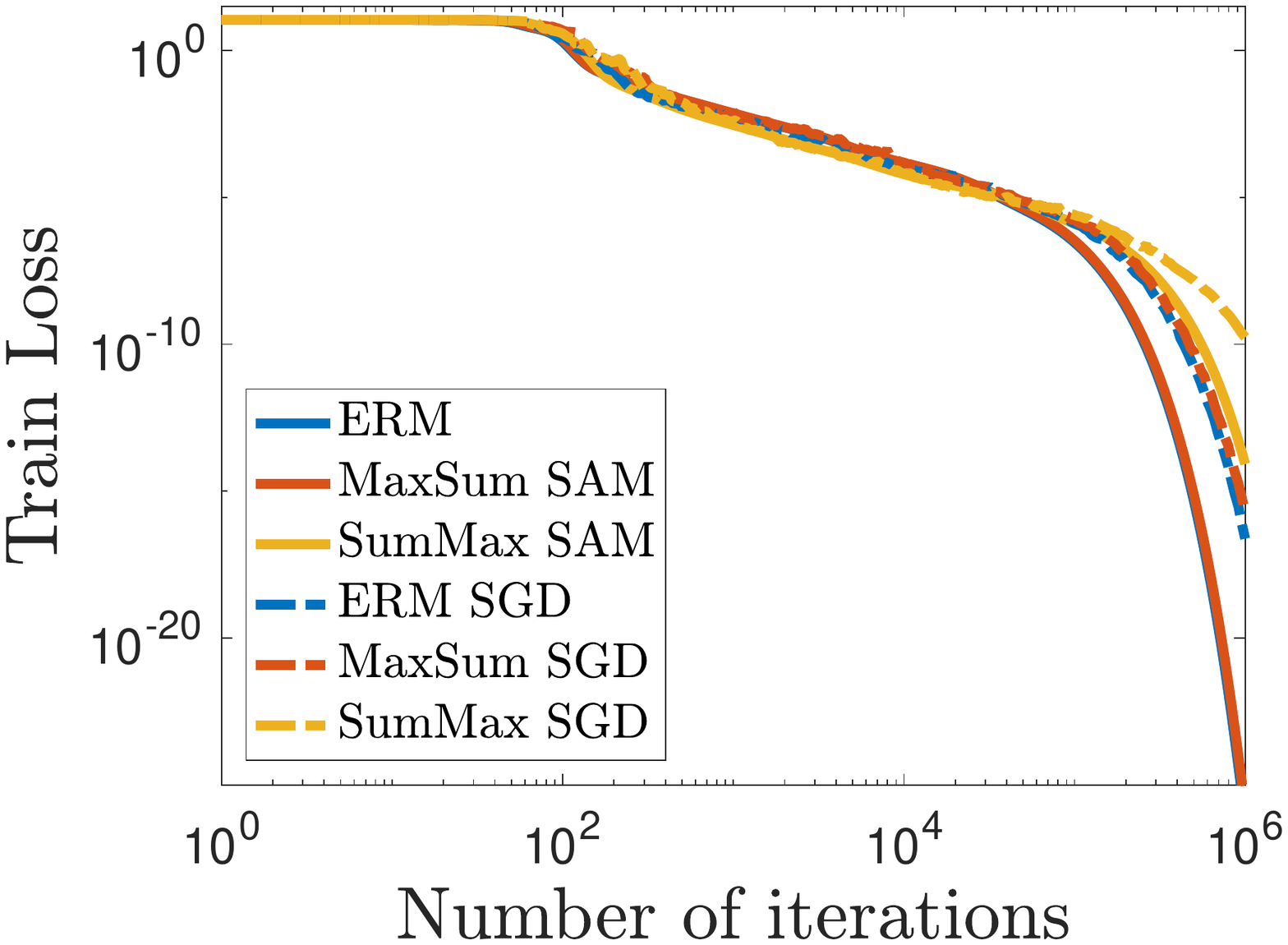}
    \end{minipage}
    \caption{Implicit bias of SAM on a sparse regression problem using a diagonal linear network with $d=30$, $n=20$, $x_i\sim \mathcal{N}(0,I)$, $\kappa=\|\beta_*\|_0=3$, $y_i=x_i^\top \beta_*$. All methods are initialized at $\alpha= 0.01$ and used with step size $\gamma = 1/d$ and $\rho=1/d$. We can see that $1$-SAM (SumMax) SGD converges to a solution which generalizes better (left plot) and enjoys a different implicit bias from the other methods. At the same time, all algorithms converge to a global minimum of $f$ at linear rate (right plot). The convergence speed is inversely proportional to the biasing effect.
    }
   \label{fig:implicit_bias_sto}
\end{figure}

\myparagraph{Experiments with stochastic ERM, $\bm{n}$-SAM, $\bm{1}$-SAM.} 
We provide an additional experiment to investigate the performance of stochastic implementations of the ERM, $n$-SAM and $1$-SAM. As explained by~\citet{pesme2021implicit}, we observe in Fig.~\ref{fig:implicit_bias_sto} that the stochastic implementations enjoy a better implicit bias than their deterministic counterparts.  We note that the fact that small batch versions generalize better than full batch version is commonly observed in practice for deep networks~\citet{keskar2016large}. We let the characterization of the implicit bias of these stochastic implementations as future works.

\myparagraph{Grid search over $\bm{\rho}$ for $\bm{n}$-SAM vs. $\bm{1}$-SAM.}
\begin{figure}[t]
    \centering
    \includegraphics[width=.38\columnwidth]{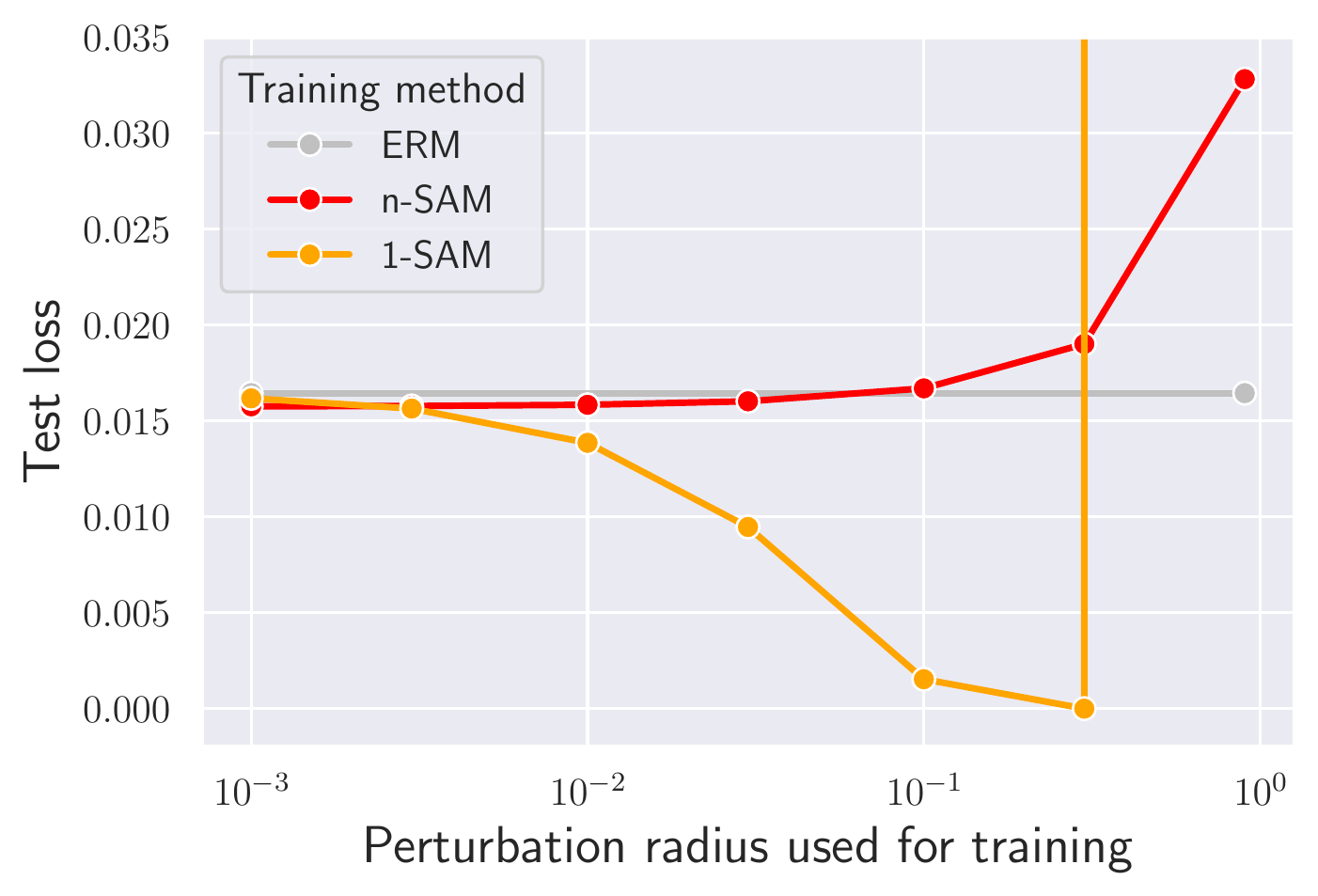}
    \caption{A grid search over $\rho$ for full-batch $n$-SAM vs. $1$-SAM ($\alpha=0.05$, $\gamma=15/d$ for all methods). We can see that even with the optimal $\rho$, $n$-SAM generalizes much worse than $1$-SAM which is coherent with our deep learning experiments in Fig.~\ref{fig:gen_bound_is_not_predictive}.}
    \label{fig:diag_net_gs_nsam_1sam}
\end{figure}
We note that for Fig.~\ref{fig:implicit_bias_sam} and Fig.~\ref{fig:implicit_bias_sto}, we used a fixed $\rho$ which was the same for both $n$-SAM and $1$-SAM. Tuning $\rho$ for each method separately can help to achieve a better test loss for both methods as shown in Fig.~\ref{fig:diag_net_gs_nsam_1sam}. We can see that $1$-SAM still significantly outperforms ERM and $n$-SAM for the optimally chosen radius $\rho$ and that $n$-SAM leads only to marginal improvements.

\myparagraph{Connection to the ERM~$\rightarrow$~SAM and SAM~$\rightarrow$~ERM experiment.}
\begin{figure}[t]
\centering
 \begin{minipage}[c]{.33\linewidth}
    \includegraphics[width=\linewidth]{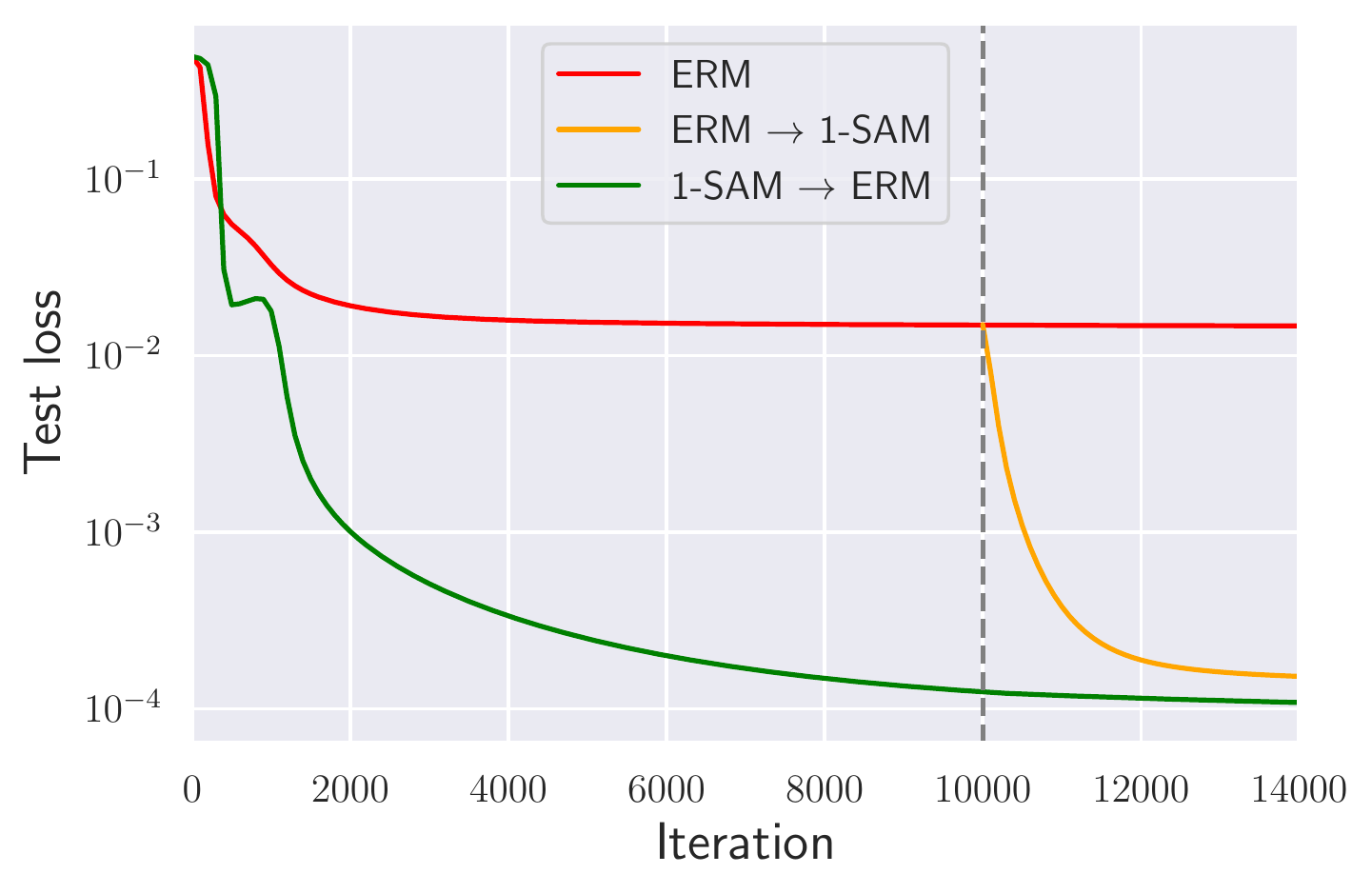}
    \subcaption{(a) Test loss over epochs}
 \end{minipage}
 \begin{minipage}[c]{.33\linewidth}
    \includegraphics[width=\linewidth]{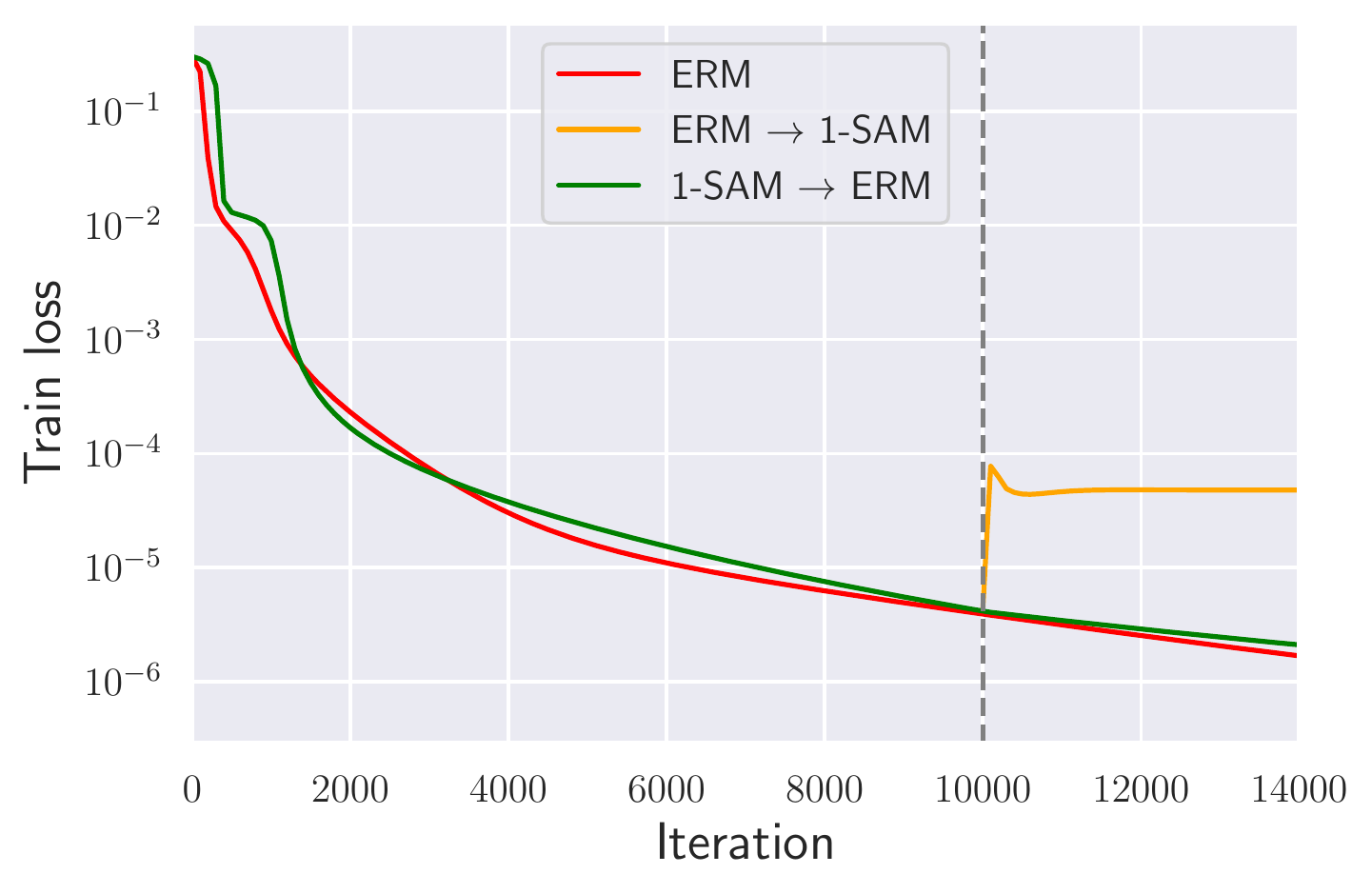}
    \subcaption{(b) Training loss over epochs}
 \end{minipage}
 \begin{minipage}[c]{.3175\linewidth}
    \includegraphics[width=\linewidth]{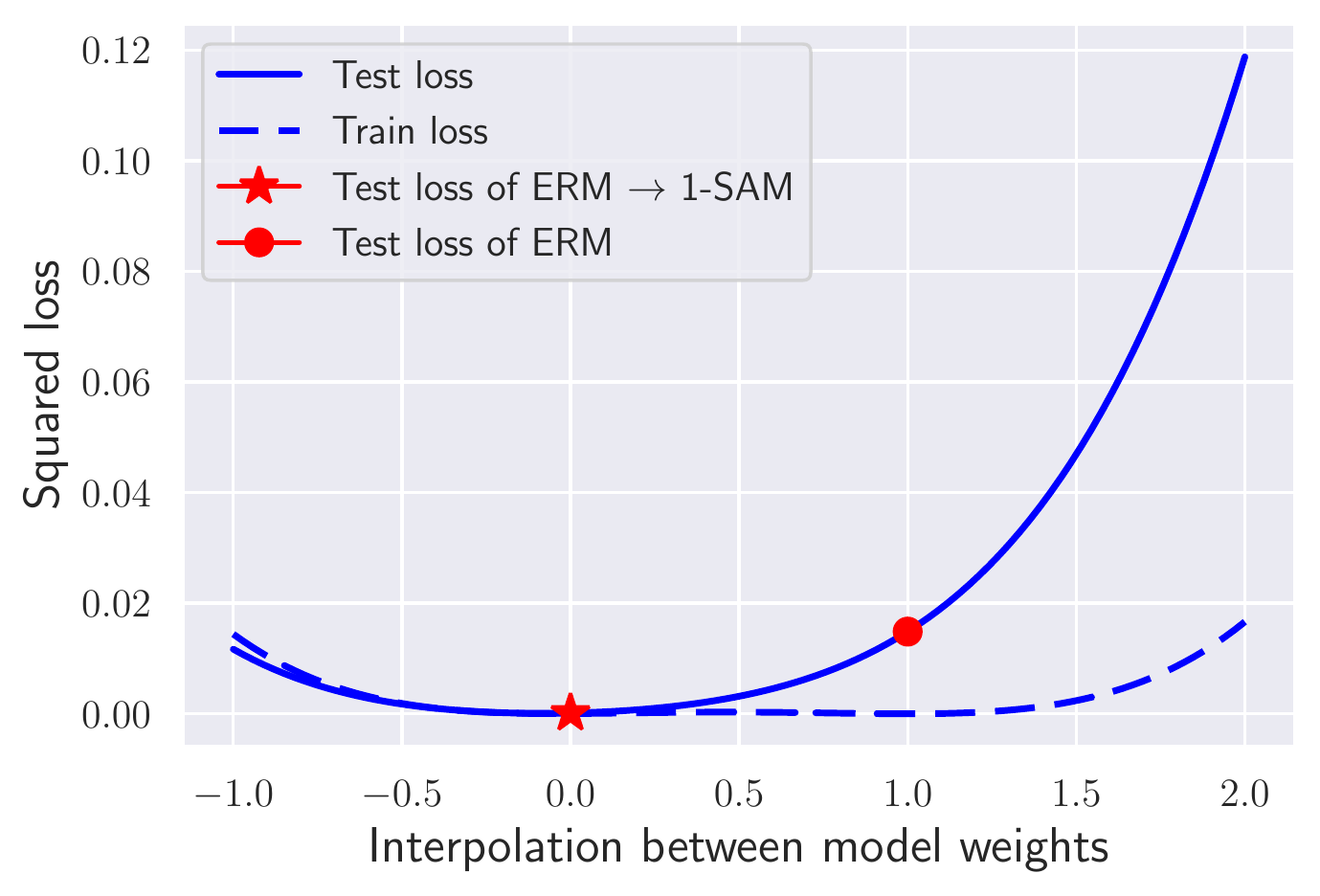}
    \subcaption{(c) Loss interpolations}
 \end{minipage}
  \caption{Test loss (a) and training loss (b) for full-batch ERM compared to ERM $\rightarrow$ $1$-SAM and $1$-SAM $\rightarrow$ ERM on a diagonal linear network where we switch between the methods after 10k iterations. We can see that $1$-SAM can quickly escape the worse-generalizing minimum found by ERM. Moreover, in (c) we show loss interpolations between ERM $\rightarrow$ $1$-SAM and ERM that show that they are linearly connected and situated in the same basin.}
   \label{fig:diag_net_erm_sam_sam_erm}
\end{figure}
Here we provide further details on the connection between Theorem~\ref{theorem:implicit_bias_main} and the empirical results in Fig.~\ref{fig:erm_sam_finetuning}.
First of all, we show in Fig.~\ref{fig:diag_net_erm_sam_sam_erm} that the same observations as we observed for deep networks also hold on a diagonal linear network. In this experiment, we used the initialization scale $\alpha=0.05$, $\rho_{\text{1-SAM}}=0.175$, and $\rho_{\text{GD} \rightarrow \text{1-SAM}}=10.0$. We note that we had to take $\rho_{\text{GD} \rightarrow \text{1-SAM}}$ significantly larger than $\rho_{\text{1-SAM}}$ since after running GD, we are already near a global minimum where the gradients (which are also used for the ascent step of SAM) are very small so we need to increase the inner step size $\rho_{\text{GD} \rightarrow \text{1-SAM}}$ to observe a difference. In addition, a loss interpolation between $w_{\text{GD} \rightarrow \text{1-SAM}}$ and $w_{\text{GD}}$ reveals linear connectivity between the two found minima suggesting that both minima are situated in the same asymmetric basin, similarly to what we observed for deep networks in Fig.~\ref{fig:deep_nets_loss_surfaces}.

First we note that Theorem~\ref{theorem:implicit_bias_main} can be trivially adapted to the case where SAM is used with varying inner step size $\rho_t$, and would therefore show that for diagonal linear networks, the key quantity determining the magnitude of the implicit bias for SAM is the integral of the step size $\rho_s$ times the loss over the optimization trajectory $w(s)$, i.e., $\|\Delta_{\text{1-SAM-}\rho_s}\|_1\approx d \int_0^\infty \rho_s L(w(s))ds$ which leads to a smaller value in the exponent $\alpha_{\text{1-SAM-}\rho_s} = \alpha e^{- \rho \Delta_{\text{1-SAM-}\rho_s} +O(\rho^2)}$, thus decreasing the effective $\alpha$ and biasing the flow to a sparser solution.

In the case of ERM~$\rightarrow$~$1$-SAM, it amounts to consider a step size $\rho_s = 0$ if $s<t$ and $\rho_s = \rho$ after the switch. Therefore the integral is taken only over the last epochs, and  $\|\Delta_{\text{1-SAM-t-}\infty}\|_1\approx d \int_t^\infty L(w(s))ds$ where the integral starts at the time step $t$. The resulting $\|\Delta_{\text{1-SAM-t-}\infty}\|_1$ is smaller than $\|\Delta_{\text{1-SAM}}\|_1$ but it can still be sufficient (especially, when using a higher $\rho$ as we do for Fig.~\ref{fig:diag_net_erm_sam_sam_erm}) to improve the biasing effect so that it leads to noticeable improvements in generalization.

At the same time, for $1$-SAM~$\rightarrow$~ERM, which amounts to consider a step size $\rho_s = \rho$ if $s<t$ and $\rho_s =0$ after the switch, the integral is already large enough due to the first 1000 epochs with SAM, leading to a term $\|\Delta_{\text{1-SAM-0-t}}\|_1\approx d \int_0^t L(w(s))ds$ and switching back to ERM preserves the implicit bias due to a low enough effective $\alpha$. This explains why switching back to ERM does not negatively affect generalization of the model.

\section{Convergence of the SAM Algorithm}
\label{sec:app_theory_convergence}
In this section we provide proofs of convergence for SAM. We consider first the full-batch SAM algorithm and then its stochastic version.

\subsection{Convergence of Full-Batch $\bm{n}$-SAM}
\label{sec:appoptdet}
We first consider the full-batch version of SAM, i.e., the following update rule:
\begin{align}\label{eq:samgd}
     w_{t+1} = w_t - \gamma \nabla L\left( w_t + \rho \nabla L(w_t)\right).
\end{align}
We note that this update rule is reminiscent of the extra-gradient algorithm \citep{korpelevich1977extragradient} but with an \textit{ascent} in the inner step instead of a \textit{descent}. Moreover, this update rule can also be seen as a realization of the general extrapolated gradient descent framework suggested in \citet{lin2020extrapolation}. However, taking an \textit{ascent} step for extrapolation is not discussed there, and the convergence properties of the update rule from Eq.~(\ref{eq:samgd}), to the best of our knowledge, have not been proven.

\myparagraph{Summary of the convergence results.}
Let us first recall the definition of $\beta$-smoothness which we will use in our proofs. 
\begin{description}
\item[\textbf{(A2')}] ($\beta$-smoothness). \textit{There exists  $\beta\!>\!0$ such that $ 
\| \nabla L (w) -\nabla L (v) \| \leq \beta \| w-v \|$ for all $w,v  \in \mathbb R ^d$.}
\end{description}
When the function $L$ is $\beta$-smooth, convergence to stationary points can be obtained. 
\begin{theorem}
\label{theorem:samdet}
Assume~\textbf{(A2')}. For any  $\gamma<1/\beta$ and $\rho< 1/\beta$,  the iterates~(\ref{eq:samgd}) satisfy for all $T\geq 0$:
\begin{align*}
    \frac{1}{T} \sum_{t=0}^{T-1}\| \nabla L (w_t) \|^2 \leq \frac{2}{\gamma (1-\rho \beta) T} (L(w_0)-L_*),
\end{align*}
If, in addition, the function $L$ satisfies \textbf{(A3)}, then:
\begin{align*}
    L(w_T) -L_* \leq \left( 1-\frac{\gamma (1-\rho \beta) \mu}{2}\right)^T( L(w_0) -L_*).
\end{align*}
\end{theorem}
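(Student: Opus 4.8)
The plan is to combine the standard descent lemma for $\beta$-smooth functions with the observation that the SAM ascent step stays well aligned with the true gradient. Write the update as $w_{t+1} = w_t - \gamma g_t$ with $g_t \defeq \nabla L(w_t + \rho \nabla L(w_t))$. First I would apply $\beta$-smoothness \textbf{(A2')} between $w_t$ and $w_{t+1}$:
\[
    L(w_{t+1}) \le L(w_t) - \gamma \langle \nabla L(w_t), g_t \rangle + \tfrac{\beta \gamma^2}{2}\|g_t\|^2 .
\]
The crucial manipulation is \emph{not} to bound $\|g_t\|^2$ crudely by $(1+\rho\beta)^2\|\nabla L(w_t)\|^2$, but instead to use the polarization identity $\langle \nabla L(w_t), g_t \rangle = \tfrac12\|\nabla L(w_t)\|^2 + \tfrac12\|g_t\|^2 - \tfrac12\|\nabla L(w_t) - g_t\|^2$, which turns the bound into
\[
    L(w_{t+1}) \le L(w_t) - \tfrac{\gamma}{2}\|\nabla L(w_t)\|^2 + \tfrac{\gamma}{2}\|\nabla L(w_t) - g_t\|^2 - \tfrac{\gamma(1-\gamma\beta)}{2}\|g_t\|^2 .
\]
Since $\gamma < 1/\beta$ the last term is nonpositive and can be dropped, and the error term is controlled by \textbf{(A2')} once more: $\|\nabla L(w_t) - g_t\| = \|\nabla L(w_t) - \nabla L(w_t + \rho\nabla L(w_t))\| \le \beta\rho\|\nabla L(w_t)\|$ (this is precisely the alignment estimate mentioned in the remarks). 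This yields the one-step inequality
\[
    L(w_{t+1}) \le L(w_t) - \tfrac{\gamma(1-\beta^2\rho^2)}{2}\|\nabla L(w_t)\|^2 ,
\]
valid whenever $\gamma < 1/\beta$ and $\rho < 1/\beta$.

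From here both parts follow by routine steps. Telescoping over $t = 0, \dots, T-1$ and using $L(w_T) \ge L_*$ gives $\tfrac{\gamma(1-\beta^2\rho^2)}{2}\sum_{t=0}^{T-1}\|\nabla L(w_t)\|^2 \le L(w_0) - L_*$; dividing by $T$ and relaxing $1-\beta^2\rho^2 = (1-\beta\rho)(1+\beta\rho) \ge 1-\beta\rho$ (as $\beta\rho \ge 0$) produces the stated bound $\tfrac1T\sum_t\|\nabla L(w_t)\|^2 \le \tfrac{2(L(w_0)-L_*)}{\gamma(1-\rho\beta)T}$. For the second part, insert the Polyak-Lojasiewicz inequality \textbf{(A3)}, i.e. $\|\nabla L(w_t)\|^2 \ge 2\mu(L(w_t) - L_*)$, into the one-step inequality to obtain $L(w_{t+1}) - L_* \le \big(1 - \gamma\mu(1-\beta^2\rho^2)\big)(L(w_t) - L_*)$; iterating and relaxing $1-\beta^2\rho^2 \ge \tfrac12(1-\beta\rho)$ (since $1+\beta\rho \ge \tfrac12$) — noting that the contraction factor lies in $(0,1)$ because $\gamma\mu < \mu/\beta \le 1$ — gives $L(w_T) - L_* \le \big(1 - \tfrac{\gamma(1-\rho\beta)\mu}{2}\big)^T(L(w_0)-L_*)$.

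The one genuinely delicate point is the one-step inequality. A naive argument that combines $\langle \nabla L(w_t), g_t\rangle \ge (1-\rho\beta)\|\nabla L(w_t)\|^2$ with $\|g_t\|^2 \le (1+\rho\beta)^2\|\nabla L(w_t)\|^2$ would force $\beta\gamma$ to be a small absolute constant rather than merely $\beta\gamma < 1$, and so would not prove the theorem under the stated step-size condition. The polarization step is what makes the positive $\tfrac{\gamma}{2}\|g_t\|^2$ arising from the inner product exactly cancel the smoothness error term $\tfrac{\beta\gamma^2}{2}\|g_t\|^2$ as soon as $\gamma \le 1/\beta$; everything after that (telescoping, invoking \textbf{(A3)}, and passing to the looser constants displayed in the statement) is bookkeeping.
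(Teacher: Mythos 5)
Your proof is correct and follows essentially the same route as the paper's (Lemma~\ref{lem:smoothdecrease} followed by Theorems~\ref{theorem:samdetnoncvx} and~\ref{theorem:samdetnoncvxpl}): both hinge on the descent lemma combined with what the paper calls the ``binomial squares'' identity --- your polarization step --- which is precisely what allows the $\|g_t\|^2$ term coming from smoothness to be absorbed for every $\gamma\le 1/\beta$ rather than only for small $\gamma\beta$. The only cosmetic difference is that you solve the identity for the inner product instead of for $\|g_t\|^2$, which lets you replace the paper's separate alignment estimate (Lemma~\ref{lem:gradsam}) by the single Lipschitz bound $\|g_t-\nabla L(w_t)\|\le\beta\rho\|\nabla L(w_t)\|$; your per-step coefficient $\tfrac{\gamma}{2}(1-\beta^2\rho^2)$ and the paper's $\gamma(1-\rho\beta)\bigl(1-\tfrac{\gamma\beta}{2}(1-\rho\beta)\bigr)$ both dominate $\tfrac{\gamma}{2}(1-\rho\beta)$, so both yield the stated constants.
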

We can make the following remarks:
\begin{itemize}
    \item We recover the rates of gradient descent but with constants increasing with the ascent step size $\rho$. 
    \item The condition $\rho<1/\beta$ is necessary since the point $w+1/\beta\nabla L(w)$ can be a local maximum of $L$. Such $w$ would be a fixed point of the algorithm without being a stationary point of $L$. 
    \item The proof crucially relies on the bound $\langle  \nabla L( w_t + \rho \nabla L (w_t) ), \nabla L(w_t) \rangle \geq (1- \rho \beta ) \|  \nabla L (w_t) \|^2 $ which shows that the SAM step is well-aligned with the gradient step (see Lemma~\ref{lem:gradsam}) and on a descent inequality similar to the classical one for gradient descent (see Lemma~\ref{lem:smoothdecrease}). 
    \item For non-convex functions, full details are provided in Theorem~\ref{theorem:samdetnoncvx}. When the function satisfies in addition Polyak-Lojasiewicz inequality, a stronger result holds which is stated in Theorem~\ref{theorem:samdetnoncvxpl}.
    \item For convex functions, $\langle  \nabla L( w_t + \rho \nabla L(w_t) ), \nabla L (w_t) \rangle \geq \|  \nabla L (w_t) \|^2 $  and convergence holds for any step size $\rho$ given that $\gamma \rho$ is small enough. Details are provided in Theorem~\ref{theorem:convex}. %
\end{itemize}

\myparagraph{Auxiliary Lemmas.}
The following lemma shows that the SAM update is well correlated with the gradient $\nabla L (w)$ and will be a cornerstone to our proof.
\begin{lemma} \label{lem:gradsam}
Let $L$ be a differentiable function and $w\in\R^d$. We have the following bound for any $\rho\geq 0$:
\begin{align*}
  \langle  \nabla L( w + \rho \nabla L (w) ), \nabla L (w) \rangle \geq (1+\alpha \rho ) \|  \nabla L(w) \|^2 \, 
\text{ where }  
    \alpha = \begin{cases}
-\beta  &\text{ if $L$ is $\beta$-smooth},\\
0 &\text{ if $L$ is convex}\\
\mu &\text{ if $L$ is $\mu$-strongly convex}.
\end{cases}
\end{align*}
\end{lemma}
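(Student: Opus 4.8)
The plan is to write $g := \nabla L(w)$ and split the inner product additively:
\begin{align*}
    \langle \nabla L(w + \rho g), g \rangle = \|g\|^2 + \langle \nabla L(w + \rho g) - \nabla L(w), g \rangle,
\end{align*}
so that everything reduces to lower-bounding the cross term $\langle \nabla L(w + \rho g) - \nabla L(w), g \rangle$ by $\alpha \rho \|g\|^2$ in each of the three regimes. The case $\rho = 0$ is trivial for all three, so I would assume $\rho > 0$ throughout.

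For the $\beta$-smooth case, I would apply Cauchy--Schwarz followed by the Lipschitz bound on $\nabla L$:
\begin{align*}
    \langle \nabla L(w + \rho g) - \nabla L(w), g \rangle \geq -\|\nabla L(w+\rho g) - \nabla L(w)\|\,\|g\| \geq -\beta\|\rho g\|\,\|g\| = -\beta \rho \|g\|^2,
\end{align*}
which gives $\alpha = -\beta$. For the convex and strongly convex cases, I would instead invoke monotonicity of the gradient: for convex $L$ one has $\langle \nabla L(u) - \nabla L(v), u - v\rangle \geq 0$, and for $\mu$-strongly convex $L$ one has $\langle \nabla L(u) - \nabla L(v), u - v\rangle \geq \mu\|u-v\|^2$. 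Taking $u = w + \rho g$ and $v = w$, so that $u - v = \rho g$, yields $\langle \nabla L(w+\rho g) - \nabla L(w), \rho g\rangle \geq 0$ respectively $\geq \mu \rho^2 \|g\|^2$; dividing by $\rho > 0$ gives the cross-term bound with $\alpha = 0$ respectively $\alpha = \mu$. Substituting back into the additive split produces $\langle \nabla L(w + \rho g), g\rangle \geq (1 + \alpha\rho)\|g\|^2$ in all cases.

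There is no substantial obstacle here: the argument is a one-line decomposition plus either the smoothness/Cauchy--Schwarz estimate or the standard monotonicity inequality for (strongly) convex functions. The only points requiring a little care are isolating the $\rho = 0$ degenerate case and remembering that in the convex/strongly convex branches it is monotonicity of $\nabla L$ — not a descent-type inequality — that does the work, which is exactly what makes these branches valid for arbitrarily large $\rho$.
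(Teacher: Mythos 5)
Your proof is correct and follows essentially the same route as the paper: the same additive split isolating the cross term $\langle \nabla L(w+\rho g)-\nabla L(w), g\rangle$, bounded via the monotonicity-type inequality $\langle \nabla L(w_1)-\nabla L(w_2), w_1-w_2\rangle \geq \alpha\|w_1-w_2\|^2$ in each regime (your Cauchy--Schwarz plus Lipschitz step in the smooth case is just the standard derivation of that inequality for $\alpha=-\beta$). Your explicit handling of $\rho=0$ is a small tidiness improvement over the paper's division by $\rho$, but does not change the argument.
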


\begin{proof}
We simply add and subtract a term $\|\nabla L(w) \|^2$ in order to make use of classical inequalities bounding $\langle \nabla L (w_1)-\nabla L (w_2),w_1-w_2\rangle$ by $\| w_1-w_2\|^2$ for smooth or convex functions and $w_1,w_2\in\R^d$. 
\begin{align*}
   \langle  \nabla L( w + \rho \nabla L (w) ), \nabla L (w) \rangle & =   \langle  \nabla L( w + \rho \nabla L (w) )-  \nabla L (w) , \nabla L (w) \rangle ) +   \|  \nabla L (w) \|^2\\
   &= 1/\rho  \langle  \nabla L( w + \rho \nabla L (w) )-  \nabla L (w) , \rho \nabla L (w) \rangle +  \|  \nabla L (w) \|^2 \\
   &\geq (1+\alpha \rho)  \|  \nabla L (w) \|^2,
\end{align*}
where the last inequality is using that 
\begin{align*}
  \langle \nabla L(w_1) -\nabla L(w_2) , w_1-w_2\rangle \geq \alpha \| w_2-w_1\|^2 , \text{ where } 
  \alpha = \begin{cases}
-\beta  &\text{ if $L$ is $\beta$-smooth},\\
0 &\text{ if $L$ is convex}\\
\mu &\text{ if $L$ is $\mu$-strongly convex}.
\end{cases}
\end{align*}
\end{proof}

The next lemma shows that the decrease of function values of the SAM algorithm defined in Eq.~(\ref{eq:samgd}) can be controlled similarly as in the case of gradient descent~\citep{Nes04}. 
\begin{lemma}
    \label{lem:smoothdecrease}
    Assume $\textbf{(A2')}$. For any $\gamma \leq 1/\beta$, the iterates~(\ref{eq:samgd}) satisfy for all $t \geq 0$: 
    \begin{align*}
    L(w_{t+1}) \leq L(w_t) -\gamma (1 - \rho \beta) \Big(1 -\frac{\gamma \beta }{2}(1-\rho \beta)\Big)\| \nabla L ( w_t) \|^2.  
    \end{align*}
    If, in addition, the function $L$ satisfies $\textbf{(A3)}$ with potentially $\mu=0$, then for all $\gamma,\rho\geq 0$ such that $\gamma \beta (2-\rho \beta) \leq 2$, we have
    \begin{align*}
    L(w_{t+1}) \leq L(w_t) - \gamma \Big(1 - \frac{\gamma \beta}{2}  + \rho \mu \big(1 - \gamma \beta - \frac{\gamma \rho \beta^2}{2} \big)   \Big)\| \nabla L ( w_t) \|^2.
    \end{align*}
\end{lemma}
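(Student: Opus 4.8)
Both inequalities come from the classical descent lemma for $\beta$-smooth functions applied at $w_t$, combined with an analysis of how the SAM direction $\tilde g_t := \nabla L(w_t + \rho \nabla L(w_t))$ deviates from the plain gradient $g_t := \nabla L(w_t)$. Since $w_{t+1} = w_t - \gamma \tilde g_t$, assumption \textbf{(A2')} gives $L(w_{t+1}) \le L(w_t) - \gamma \langle g_t, \tilde g_t\rangle + \tfrac{\gamma^2\beta}{2}\|\tilde g_t\|^2$. Writing $\tilde g_t = g_t + d_t$ with $d_t := \nabla L(w_t+\rho g_t) - \nabla L(w_t)$ and expanding both inner products, this becomes $L(w_{t+1}) \le L(w_t) - \gamma\big(1-\tfrac{\gamma\beta}{2}\big)\|g_t\|^2 - \gamma(1-\gamma\beta)\langle g_t, d_t\rangle + \tfrac{\gamma^2\beta}{2}\|d_t\|^2$, so everything reduces to bounding the two remainder terms $\langle g_t, d_t\rangle$ and $\|d_t\|^2$ while keeping track of the sign of the factor $1-\gamma\beta$.

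For the first claim I would use only smoothness on the remainder: $\|d_t\| \le \beta\rho\|g_t\|$ (hence $\|d_t\|^2 \le \beta^2\rho^2\|g_t\|^2$) and $\langle g_t, d_t\rangle \ge -\beta\rho\|g_t\|^2$ (Cauchy--Schwarz, or equivalently Lemma~\ref{lem:gradsam} with the smooth constant $\alpha=-\beta$). Since $\gamma\le 1/\beta$ makes $1-\gamma\beta\ge 0$, substituting the lower bound on $\langle g_t,d_t\rangle$ and the upper bound on $\|d_t\|^2$ gives $L(w_{t+1}) \le L(w_t) - \gamma\big[\,1-\tfrac{\gamma\beta}{2}-\beta\rho(1-\gamma\beta)-\tfrac{\gamma\rho^2\beta^3}{2}\,\big]\|g_t\|^2$, and a one-line algebraic identity shows the bracket equals $(1-\rho\beta)\big(1-\tfrac{\gamma\beta}{2}(1-\rho\beta)\big)$, which is the stated bound.

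For the second claim I would keep the same expansion but exploit the extra curvature that \textbf{(A3)} (or the convexity hypotheses behind Lemma~\ref{lem:gradsam}) provides: replace the crude cross-term estimate by the sharper $\langle g_t, d_t\rangle \ge \mu\rho\|g_t\|^2$ and bound $\|d_t\|^2$ by a co-coercivity-type inequality $\|d_t\|^2 \le \beta\rho\,\langle g_t, d_t\rangle$, so that the $\|d_t\|^2$- and $\langle g_t, d_t\rangle$-terms collapse into the single term $-\gamma\big(1-\gamma\beta-\tfrac{\gamma\rho\beta^2}{2}\big)\langle g_t, d_t\rangle$. Under the step-size restriction $\gamma\beta(2-\rho\beta)\le 2$ (and $\rho\beta\le 1$ where needed) the coefficients $1-\gamma\beta$ and $1-\gamma\beta-\tfrac{\gamma\rho\beta^2}{2}$ keep a definite sign, which tells me in which direction to invoke the bound on $\langle g_t, d_t\rangle$; collecting terms then yields the coefficient $1-\tfrac{\gamma\beta}{2}+\rho\mu\big(1-\gamma\beta-\tfrac{\gamma\rho\beta^2}{2}\big)$.

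The algebra of matching constants is routine; the delicate part is the sign bookkeeping in the second claim. Whether $1-\gamma\beta-\tfrac{\gamma\rho\beta^2}{2}$ is nonnegative under the stated constraint decides whether $\langle g_t, d_t\rangle$ should be replaced by its lower bound $\mu\rho\|g_t\|^2$ or handled in the opposite direction, and one must check that $\gamma\beta(2-\rho\beta)\le 2$ — rather than the more restrictive $\gamma\beta(2+\rho\beta)\le 2$ that a naive argument would prefer — still suffices in every case; reconciling this case split with the precise curvature inequality consistent with \textbf{(A3)} is what I expect to be the main obstacle.
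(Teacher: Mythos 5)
Your proposal is correct and follows essentially the same route as the paper's proof: the descent lemma plus the ``binomial squares'' decomposition into $\|g_t\|^2$, $\langle g_t,d_t\rangle$ and $\|d_t\|^2$ terms, with the smoothness bounds $\|d_t\|\le\beta\rho\|g_t\|$ and $\langle g_t,d_t\rangle\ge-\beta\rho\|g_t\|^2$ for the first claim, and co-coercivity $\|d_t\|^2\le\beta\rho\langle g_t,d_t\rangle$ together with $\langle g_t,d_t\rangle\ge\mu\rho\|g_t\|^2$ collapsing into the coefficient $1-\gamma\beta-\tfrac{\gamma\rho\beta^2}{2}$ for the second. The sign issue you flag is genuine but is a defect of the statement rather than of your argument: the paper's own proof applies the lower bound on $\langle g_t,d_t\rangle$ without checking the sign of that coefficient, and the step actually requires $\gamma\beta(2+\rho\beta)\le 2$, so the stated condition $\gamma\beta(2-\rho\beta)\le 2$ appears to be a sign typo you need not reconcile.
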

We note that the constraints on the step size are different depending on the assumptions on the function $L$. In the non-convex case, $\rho$ has to be smaller than $1/\beta$, whereas in the convex case, it has to be smaller than $2/\beta$. 
\begin{proof}
Let us define by $w_{t+1/2} = w_t + \rho \nabla L (w_t)$ the SAM ascent step.
Using the smoothness of the function $L$ (Assumption \textbf{(A2')}),  we obtain
\begin{align*}
L(w_{t+1}) \leq L(w_t) - \gamma \langle \nabla L( w_{t+1/2}  ) , \nabla L (w_t) \rangle + \frac{\gamma^2 \beta}{2} \| \nabla L (w_{t+1/2}) \|^2.
\end{align*}
The main trick is to use the binomial squares
\begin{align*}
   \| \nabla L (w_{t+1/2}) \|^2  =   - \| \nabla L ( w_t) \|^2  + \| \nabla L (w_{t+1/2}) - \nabla L(w_t) \|^2  +2 \langle \nabla L( w_{t+1/2}  ) , \nabla L (w_t) \rangle,
\end{align*}
to bound 
\begin{align*}
L(w_{t+1}) &\leq L(w_t) - \gamma \langle \nabla L( w_{t+1/2}  ) , \nabla L (w_t) \rangle + \frac{\gamma^2 \beta}{2} \| \nabla L (w_{t+1/2}) \|^2 
\\
           &= L(w_t) -  \frac{\gamma^2 \beta}{2} \| \nabla L ( w_t) \|^2  +\frac{\gamma^2 \beta}{2} \| \nabla L (w_{t+1/2}) - \nabla L(w_t) \|^2  - \gamma (1 - \gamma \beta )\langle \nabla L( w_{t+1/2}  ) , \nabla L (w_t) \rangle \\
           & \leq L(w_t) -\gamma [1 - \rho  \beta -\frac{\gamma \beta }{2}(1-\rho  \beta)^2]\| \nabla L ( w_t) \|^2,  
\end{align*}
where we have used Lemma~\ref{lem:gradsam} and that $\| \nabla L (w_{t+1/2}) - \nabla L(w_t) \|^2 \leq \beta^2 \| w_{t+1/2} - w_t \|^2\leq \beta^2 \rho^2 \| \nabla L (w_t)\|^2$.

If, in addition, the function $L$ is convex then we can use its co-coercivity~\citep{Nes04} to bound $\| \nabla L (w_{t+1/2}) - \nabla L(w_t) \|^2\leq \beta  \langle \nabla L (w_{t+1/2}) - \nabla L(w_t), w_{t+1/2} -w_t \rangle $ and obtain a tighter bound:
\begin{align*}
L(w_{t+1}) &\leq L(w_t) - \gamma \langle \nabla L( w_{t+1/2}  ) , \nabla L (w_t) \rangle + \frac{\gamma^2 \beta}{2} \| \nabla L (w_{t+1/2}) \|^2  \\
           &= L(w_t) -  \frac{\gamma^2 \beta}{2} \| \nabla L ( w_t) \|^2  +\frac{\gamma^2 \beta}{2} \| \nabla L (w_{t+1/2}) - \nabla L(w_t) \|^2  - \gamma (1 - \gamma \beta )\langle \nabla L( w_{t+1/2}  ) , \nabla L (w_t) \rangle \\
           & \leq L(w_t) -  \gamma (1 - \frac{\gamma \beta}{2} )\| \nabla L ( w_t) \|^2   - \gamma (1 - \gamma \beta - \frac{\gamma \rho  \beta^2}{2} ) \langle \nabla L( w_{t+1/2}  ) -\nabla L(w_t), \nabla L (w_t) \rangle \\ 
            & \leq L(w_t) -  \gamma (1 - \frac{\gamma \beta}{2}  + \rho \mu (1 - \gamma \beta - \frac{\gamma \rho  \beta^2}{2} )   )\| \nabla L ( w_t) \|^2,   
\end{align*}
where we have used Lemma~\ref{lem:gradsam}.
\end{proof}

\myparagraph{Convergence proofs.}
Using the previous Lemma~\ref{lem:smoothdecrease} recursively, we can bound the average gradient value of the iterates~(\ref{eq:samgd}) of SAM algorithm and ensure convergence to stationary points. 
\begin{theorem}
    \label{theorem:samdetnoncvx}
    Assume~\textbf{(A2')}. For any  $\gamma<1/\beta$ and $\rho< 1/\beta$,  the iterates~(\ref{eq:samgd}) satisfies for all $T\geq 0$:
    \begin{align*}
        \frac{1}{T}\sum_{t=0}^T \| \nabla L ( w_t) \|^2    \leq \frac{L(w_0)-L(w_T)}{T\gamma   (1 - \rho  \beta) [1 -\frac{\gamma \beta }{2}(1-\rho  \beta)] }. 
    \end{align*}
\end{theorem}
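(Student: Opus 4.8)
The plan is to obtain the estimate by a routine telescoping argument resting entirely on the one-step decrease already established in Lemma~\ref{lem:smoothdecrease}. First I would invoke that lemma in its non-convex form: under \textbf{(A2')} and for $\gamma \leq 1/\beta$, every iterate of~(\ref{eq:samgd}) satisfies
\begin{align*}
L(w_{t+1}) \leq L(w_t) - \gamma(1-\rho\beta)\Big(1 - \tfrac{\gamma\beta}{2}(1-\rho\beta)\Big)\|\nabla L(w_t)\|^2 .
\end{align*}
Write $c := \gamma(1-\rho\beta)\big(1 - \tfrac{\gamma\beta}{2}(1-\rho\beta)\big)$ for the coefficient.

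Second, I would verify that $c>0$ under the stated hypotheses $\gamma<1/\beta$ and $\rho<1/\beta$. Since $\rho\beta<1$, the factor $1-\rho\beta$ is strictly positive; and since $\gamma\beta<1$ together with $0<1-\rho\beta<1$, we get $\tfrac{\gamma\beta}{2}(1-\rho\beta)<\tfrac12<1$, so the bracketed factor is strictly positive as well. Hence $c>0$, and the one-step inequality rearranges into $c\,\|\nabla L(w_t)\|^2 \leq L(w_t)-L(w_{t+1})$, with the direction of the inequality preserved.

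Third, I would sum this over $t=0,\dots,T-1$. The right-hand side telescopes to $L(w_0)-L(w_T)$, giving $c\sum_{t=0}^{T-1}\|\nabla L(w_t)\|^2 \leq L(w_0)-L(w_T)$, and dividing by $cT$ yields
\begin{align*}
\frac{1}{T}\sum_{t=0}^{T-1}\|\nabla L(w_t)\|^2 \leq \frac{L(w_0)-L(w_T)}{T\gamma(1-\rho\beta)\big(1-\tfrac{\gamma\beta}{2}(1-\rho\beta)\big)} ,
\end{align*}
which is the claimed bound (the displayed statement sums up to index $T$; this is the same estimate up to a harmless relabelling of the summation range, and one can equally note that $\|\nabla L(w_T)\|^2$ is already controlled by the decrease from step $T$ to $T+1$).

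There is essentially no obstacle: all the real work sits in Lemma~\ref{lem:smoothdecrease}, which in turn relies on the alignment bound of Lemma~\ref{lem:gradsam} (namely $\langle\nabla L(w+\rho\nabla L(w)),\nabla L(w)\rangle\geq(1-\rho\beta)\|\nabla L(w)\|^2$ in the $\beta$-smooth case) and the standard smoothness descent estimate. The only point requiring a moment of care is the sign of $c$, i.e., checking that the constraints $\gamma<1/\beta$ and $\rho<1/\beta$ are precisely what makes both $(1-\rho\beta)$ and $(1-\tfrac{\gamma\beta}{2}(1-\rho\beta))$ strictly positive, so that summing and dividing through by $c$ is legitimate.
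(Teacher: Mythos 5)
Your proof is correct and is essentially identical to the paper's: both invoke Lemma~\ref{lem:smoothdecrease}, telescope the one-step decrease over $t=0,\dots,T-1$, and divide by the (positive) coefficient. Your explicit check that the coefficient is strictly positive under $\gamma<1/\beta$, $\rho<1/\beta$, and your remark on the summation-index mismatch in the theorem statement, are both sensible additions but do not change the argument.
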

\begin{proof}
    Using the Lemma~\ref{lem:smoothdecrease} we obtain 
    \begin{align*}
       \gamma (1 - \rho \beta) \Big(1 -\frac{\gamma \beta }{2}(1-\rho \beta)\Big)\| \nabla L ( w_t) \|^2\leq L(w_t) - L(w_{t+1}) .  
    \end{align*}
    And summing these inequalities for $t=0,\dots,T-1$ yields 
    \begin{align*}
        \frac{1}{T}\sum_{t=0}^{T-1} \| \nabla L ( w_t) \|^2    \leq \frac{L(w_0)-L(w_T)}{T\gamma   (1 - \rho \beta) [1 -\frac{\gamma \beta }{2}(1-\rho \beta)] }. 
    \end{align*}
\end{proof}

When the function $L$ additionally satisfies a Polyak-Lojasiewicz condition \textbf{(A3)}, linear convergence of the function value to the minimum function value can be obtained. This is the object of the following theorem: 
\begin{theorem}
\label{theorem:samdetnoncvxpl}
Assume \textbf{(A2')} and \textbf{(A3)}. For any $\gamma<1/\beta$ and $\rho< 1/\beta$, the iterates~(\ref{eq:samgd}) satisfies for all $T\geq 0$:
\begin{align*}
    L(w_t) - L_* \leq \Big( 1-2 \gamma\mu  (1 - \rho \beta) \Big(1 -\frac{\gamma \beta }{2}(1-\rho \beta)\Big) \Big)^{t}  ( L(w_{0}) - L_* ).
\end{align*}
\end{theorem}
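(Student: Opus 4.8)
The plan is to combine the one-step decrease of Lemma~\ref{lem:smoothdecrease} with the Polyak-Lojasiewicz condition \textbf{(A3)} and then unroll the resulting geometric recursion. First I would invoke the non-convex part of Lemma~\ref{lem:smoothdecrease}: under \textbf{(A2')} and for any $\gamma \le 1/\beta$ it gives, for every $t \ge 0$,
\[
    L(w_{t+1}) \le L(w_t) - \gamma(1-\rho\beta)\Big(1 - \tfrac{\gamma\beta}{2}(1-\rho\beta)\Big)\|\nabla L(w_t)\|^2 .
\]
Write $c \defeq \gamma(1-\rho\beta)\big(1 - \tfrac{\gamma\beta}{2}(1-\rho\beta)\big)$. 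Under the step-size restrictions $\gamma < 1/\beta$ and $\rho < 1/\beta$ this constant is strictly positive, since $1-\rho\beta > 0$ and $\tfrac{\gamma\beta}{2}(1-\rho\beta) < \tfrac12 < 1$, so both factors multiplying $\gamma$ are positive.

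Next I would subtract $L_*$ from both sides and bound the gradient-norm term from below using \textbf{(A3)} in the equivalent form $\|\nabla L(w_t)\|^2 \ge 2\mu(L(w_t)-L_*)$. This turns the decrease inequality into the one-step contraction
\[
    L(w_{t+1}) - L_* \le \big(1 - 2\mu c\big)\big(L(w_t) - L_*\big),
\]
where $1 - 2\mu c = 1 - 2\gamma\mu(1-\rho\beta)\big(1 - \tfrac{\gamma\beta}{2}(1-\rho\beta)\big)$ is exactly the factor in the statement. A short check shows $1-2\mu c \in [0,1)$: setting $y \defeq \gamma\beta(1-\rho\beta) \in (0,1)$ one gets $2\mu c = (\mu/\beta)\big(1-(1-y)^2\big)$, which is positive and at most $\mu/\beta \le 1$ (the inequality $\mu \le \beta$ following from the descent lemma applied at $w - \tfrac{1}{\beta}\nabla L(w)$ together with \textbf{(A3)}).

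Finally, iterating this contraction from step $0$ yields $L(w_t) - L_* \le (1-2\mu c)^t (L(w_0)-L_*)$, which is the claim. I do not expect a genuine obstacle here: the substantive estimates are already packaged into Lemmas~\ref{lem:gradsam} and~\ref{lem:smoothdecrease}, and all that remains is to verify $c>0$ under the stated step sizes, apply the PL inequality, and note that $2\mu c$ coincides with the rate constant written in the theorem. The only point demanding a little care is keeping the bookkeeping of the $(1-\rho\beta)$ and $\big(1-\tfrac{\gamma\beta}{2}(1-\rho\beta)\big)$ factors consistent with Lemma~\ref{lem:smoothdecrease}.
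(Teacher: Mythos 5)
Your proposal is correct and follows essentially the same route as the paper's proof: invoke the non-convex part of Lemma~\ref{lem:smoothdecrease}, lower-bound $\|\nabla L(w_t)\|^2$ by $2\mu(L(w_t)-L_*)$ via \textbf{(A3)}, and unroll the resulting geometric recursion. The extra verifications you include (positivity of the constant $c$ and the check that $1-2\mu c\in[0,1)$ via $\mu\le\beta$) are correct and slightly more careful than the paper, which states the contraction without them.
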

\begin{proof}
Using the Lemma~\ref{lem:smoothdecrease} and that the function $L$ is $\mu$ Polyak-Lojasiewicz (Assumption \textbf{(A3)}) we obtain 
\begin{align*}
  L(w_{t+1})  \leq L(w_t)  - 2\mu  \gamma (1 - \rho L) \Big(1 -\frac{\gamma \beta }{2}(1-\rho L)\Big)  (L(w_t)-L_*). 
\end{align*}
And subtracting the optimal value $L_*$ we get
\begin{align*}
    L(w_t) - L_* &\leq \Big( 1-2 \gamma\mu  (1 - \rho \beta) \Big(1 -\frac{\gamma \beta }{2}(1-\rho \beta)\Big) \Big) ( L(w_{t-1}) - L_* )\\
  & \leq \Big( 1-2 \gamma\mu  (1 - \rho \beta) \Big(1 -\frac{\gamma \beta }{2}(1-\rho \beta)\Big) \Big)^{t}  ( L(w_{0}) - L_* ).
\end{align*}
\end{proof}

When the function $L$ is convex, convergence of the average of the iterates can be proved. 
\begin{theorem}
\label{theorem:convex}
Assume~\textbf{(A2')} and $L$ convex. For any step sizes $\gamma$ and $\rho$ such that $\gamma \beta ( 1+\rho \beta) <2$, then the averaged $\bar w_T = \frac{1}{T}\sum_{t=0}^{T-1} w_t$ of the iterates~(\ref{eq:samgd}) satisfies for all $T\geq 0$:
\begin{align*}
   L(\bar w_T)-L_* \leq \frac{2\rho \beta +1}{ \gamma (2-\gamma \beta (1+\rho \beta )) T} \| w_0 - w_* \|^2,
\end{align*}
If, in addition, the function $L$ is $\mu$-strongly convex, then:
\begin{align*}
    \|w_T-w_*\|^2 \leq \big( 1-\gamma \mu ( 2-\gamma \beta (1+\rho \beta))  \big)^T( 2\rho+1) \|w_0-w_*\|^2.
\end{align*}
\end{theorem}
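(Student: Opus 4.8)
The plan is to track the squared distance to a minimizer, $\|w_t-w_*\|^2$, rather than the function value: for a merely convex objective this is the quantity whose telescoping yields the $O(1/T)$ rate, whereas the function-value descent inequality of Lemma~\ref{lem:smoothdecrease} alone only produces a gradient-norm guarantee. Writing the update as $w_{t+1}=w_t-\gamma\nabla L(w_{t+1/2})$ with $w_{t+1/2}=w_t+\rho\nabla L(w_t)$, I would first expand
\[
\|w_{t+1}-w_*\|^2=\|w_t-w_*\|^2-2\gamma\langle\nabla L(w_{t+1/2}),\,w_t-w_*\rangle+\gamma^2\|\nabla L(w_{t+1/2})\|^2 .
\]
The key reduction is to bound the inner product from below by a suboptimality gap. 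Since the descent gradient is evaluated at $w_{t+1/2}$, the natural anchor for convexity is $w_{t+1/2}$: decompose $w_t-w_*=(w_{t+1/2}-w_*)-\rho\nabla L(w_t)$ and apply convexity at $w_{t+1/2}$, strengthened by the co-coercivity bound $L_*\ge L(w_{t+1/2})+\langle\nabla L(w_{t+1/2}),w_*-w_{t+1/2}\rangle+\tfrac1{2\beta}\|\nabla L(w_{t+1/2})\|^2$, so that a term $+\tfrac1{2\beta}\|\nabla L(w_{t+1/2})\|^2$ appears to offset part of $\gamma^2\|\nabla L(w_{t+1/2})\|^2$.

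This leaves a residual cross term $\rho\langle\nabla L(w_{t+1/2}),\nabla L(w_t)\rangle$ and leftover gradient-norm-squared terms, which I would control with $\beta$-smoothness: the bound $\|\nabla L(w_{t+1/2})-\nabla L(w_t)\|\le\rho\beta\|\nabla L(w_t)\|$, the alignment estimate of Lemma~\ref{lem:gradsam}, the co-coercivity inequality $\|\nabla L(w)\|^2\le 2\beta(L(w)-L_*)$ applied at both $w_t$ and $w_{t+1/2}$, and the elementary ``ascent moves uphill'' fact $L(w_t)\le L(w_{t+1/2})$ (immediate from convexity: $L(w_t+\rho\nabla L(w_t))\ge L(w_t)+\rho\|\nabla L(w_t)\|^2$). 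Collecting terms should yield a one-step inequality of the shape $\|w_{t+1}-w_*\|^2\le\|w_t-w_*\|^2-c\,(L(w_t)-L_*)$ whose coefficient $c$ is nonnegative precisely when $\gamma\beta(1+\rho\beta)<2$; matching the target then forces $c=\gamma\bigl(2-\gamma\beta(1+\rho\beta)\bigr)/(1+2\rho\beta)$. Summing over $t=0,\dots,T-1$ telescopes the distances, leaving only $\|w_0-w_*\|^2$, and Jensen's inequality $L(\bar w_T)-L_*\le\frac1T\sum_{t}(L(w_t)-L_*)$ turns this into the stated $O(1/T)$ bound.

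For the $\mu$-strongly convex case I would run the identical recursion but replace plain convexity at $w_{t+1/2}$ by strong monotonicity, which adds a $\mu\|w_{t+1/2}-w_*\|^2$ term; then $\|w_{t+1/2}-w_*\|^2\ge\|w_t-w_*\|^2$ (since $\langle\nabla L(w_t),w_t-w_*\rangle\ge 0$ gives $\|w_{t+1/2}-w_*\|^2=\|w_t-w_*\|^2+2\rho\langle\nabla L(w_t),w_t-w_*\rangle+\rho^2\|\nabla L(w_t)\|^2\ge\|w_t-w_*\|^2$) upgrades the telescoping identity into a genuine contraction $\|w_{t+1}-w_*\|^2\le\bigl(1-\gamma\mu(2-\gamma\beta(1+\rho\beta))\bigr)\|w_t-w_*\|^2$, up to the constant prefactor in the statement, and iterating gives the geometric decay. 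The main obstacle is the bookkeeping forced by the two gradients $\nabla L(w_t)$ and $\nabla L(w_{t+1/2})$ living at different points: the convexity/co-coercivity inequalities that lower-bound the progress are anchored at $w_{t+1/2}$, while the descent direction and the quantity to be bounded are tied to $w_t$, so one must juggle Lemma~\ref{lem:gradsam}, co-coercivity at $w_*$ for both gradients, and the two ``uphill'' estimates simultaneously, and then split the cross term $\rho\langle\nabla L(w_{t+1/2}),\nabla L(w_t)\rangle$ at exactly the right rate for the constants to collapse to $\gamma(2-\gamma\beta(1+\rho\beta))/(1+2\rho\beta)$. Everything after that is a routine smooth-convex computation.
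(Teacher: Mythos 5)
Your plan runs the recursion on $\|w_t-w_*\|^2$ alone, and the decomposition $w_t-w_*=(w_{t+1/2}-w_*)-\rho\nabla L(w_t)$ then produces the cross term with an \emph{adverse} sign: $-2\gamma\langle\nabla L(w_{t+1/2}),w_t-w_*\rangle=-2\gamma\langle\nabla L(w_{t+1/2}),w_{t+1/2}-w_*\rangle+2\gamma\rho\langle\nabla L(w_{t+1/2}),\nabla L(w_t)\rangle$, and by the very alignment estimate of Lemma~\ref{lem:gradsam} the extra term is positive, so it fights the descent rather than helping it. The paper's proof is built precisely to avoid this: it tracks the Lyapunov function $V_t=L(w_t)-L_*+\frac{1}{2\rho}\|w_t-w_*\|^2$, in which the smoothness descent of the function-value part contributes $-\gamma\langle\nabla L(w_{t+1/2}),\nabla L(w_t)\rangle$ and this \emph{exactly} cancels the cross term coming from the distance part, collapsing everything to $-\frac{\gamma}{\rho}\langle\nabla L(w_{t+1/2}),w_{t+1/2}-w_*\rangle+\frac{\gamma^2}{2\rho}(1+\rho\beta)\|\nabla L(w_{t+1/2})\|^2$ with no inequality spent on the cross term at all. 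Co-coercivity $\|\nabla L(w_{t+1/2})\|^2\le\beta\langle\nabla L(w_{t+1/2}),w_{t+1/2}-w_*\rangle$ then yields the threshold $\gamma\beta(1+\rho\beta)<2$ cleanly.

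Because your route must instead pay for the cross term via Cauchy--Schwarz/Young plus $\|\nabla L(w_{t+1/2})\|\le(1+\rho\beta)\|\nabla L(w_t)\|$, the surviving coefficient of $\|\nabla L(w_{t+1/2})\|^2$ is of the form $\gamma(\gamma+\rho-\tfrac1\beta)$ (or worse, after converting to $\|\nabla L(w_t)\|^2$, something involving $(1+\rho\beta)^2$), which gives an admissible step-size region strictly smaller than $\gamma\beta(1+\rho\beta)<2$ and different constants. You acknowledge as much when you say that matching the target ``forces'' $c=\gamma(2-\gamma\beta(1+\rho\beta))/(1+2\rho\beta)$ --- that constant is read off from the statement, not derived, and I do not believe the pure-distance recursion produces it. The strongly convex case shows the same symptom: a genuine contraction of $\|w_t-w_*\|^2$ would give prefactor $1$, whereas the stated prefactor $(2\rho+1)$ is exactly the cost of converting $V_T\ge\frac{1}{2\rho}\|w_T-w_*\|^2$ and $V_0\le(\cdot)\|w_0-w_*\|^2$ at the two ends of the Lyapunov recursion. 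So while your approach would yield \emph{a} correct $O(1/T)$ and linear-rate result under a more restrictive step-size condition, it does not prove the theorem as stated; the missing idea is the weighted combination of function value and squared distance that turns the problematic decomposition into an exact identity.
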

The proof is using a different astute Lyapunov function which works for the non-strongly convex case.
\begin{proof}
Let us define by $V_t= [L(w_t)-L(w_*)]+  \frac{1}{2\rho}\| w_t - w_*\|^2$ and by $w_{t+1/2} = w_t + \rho \nabla L (w_t)$ the SAM ascent step.
\begin{align*}
    V_{t+1}-V_t & \leq -\frac{\gamma }{\rho} \langle \nabla L (w_{t+1/2}), w_t-w_*\rangle - \gamma \langle \nabla L (w_{t+1/2}), \nabla L ( w_t)\rangle + \frac{\gamma^2}{2\rho}(1+\rho \beta)  \| \nabla L ( w_{t+1/2}) \|^2 \\
    & =-\frac{\gamma }{\rho} \langle \nabla L (w_{t+1/2}), w_t+\rho  \nabla L ( w_t) -w_*\rangle + \frac{\gamma^2}{2\rho}(1+\rho \beta)  \| \nabla L ( w_{t+1/2}) \|^2 \\ 
    & = -\frac{\gamma }{\rho} \langle \nabla L (w_{t+1/2}), w_{t+1/2} -w_*\rangle + \frac{\gamma^2}{2\rho}(1+\rho \beta)  \| \nabla L ( w_{t+1/2}) \|^2 \\ 
    & \leq  -\frac{\gamma }{\rho} (1-\frac{\gamma \beta}{2}(1+\rho \beta))  \langle \nabla L (w_{t+1/2}), w_{t+1/2} -w_*\rangle. 
\end{align*}

If $L$ is convex then $L(w_{t+1/2}) - L(w_*) \leq \langle \nabla L (w_{t+1/2}), w_{t+1/2} -w_*\rangle $ and therefore we obtain 
\begin{align*}
 \frac{\gamma }{\rho} \left(1-\frac{\gamma \beta}{2}(1+\rho \beta)\right)  \left(L(w_{t+1/2}) - L(w_*)\right) \leq  V_t - V_{t+1}.
\end{align*}
Using the definition of $w_{t+1/2}$ we always have that $ L(w_{t+1/2})\geq L(w_t) +\rho \| \nabla L ( w_t)\|^2 $ therefore 
\begin{align*}
 \frac{\gamma }{\rho} \left(1-\frac{\gamma \beta}{2}(1+\rho \beta)\right)  \left(L( w_t) - L(w_*)\right) \leq  V_t - V_{t+1}.
\end{align*}
And taking the sum and using Jensen inequality we finally obtain:
\begin{align*}
 L( \frac{1}{T} \sum_{t=0}^T w_t) - L(w_*) \leq \frac{V_0 - V_{T+1}}{T \frac{\gamma }{\rho} (1-\frac{\gamma \beta}{2}(1+\rho \beta)) }.
\end{align*}

If $L$ is $\mu$-strongly convex, we use that $ \langle \nabla L (w_{t+1/2}), w_{t+1/2} -w_*\rangle  \geq \mu \| w_{t+1/2} - w_*\|^2  $ to obtain
\begin{align*}
    \| w_{t+1/2} - w_*\|^2 =  \|  w_t + \rho \nabla L (w_t) - w_*\|^2 &= \| w_t -w_*\|^2 + 2 \rho  \langle \nabla L(w_t), w_t-w_*\rangle + \rho^2 \| \nabla L (w_t) \|^2 \\
  &\geq    \| w_t -w_*\|^2 + 2 \rho  \langle \nabla L(w_t), w_t-w_*\rangle \\
  &\geq  \| w_t -w_*\|^2 + 2 \rho [L(w_t) -L(w_*)] \\
  &\geq 2\rho V_t.
\end{align*}

Therefore we have 
\begin{align*}
    V_{t+1}\leq  \left(1 - \gamma \mu  (2-\gamma \beta (1+\rho \beta))\right)  V_t \leq \left(1- \gamma \mu  (2-\gamma \beta (1+\rho \beta)) \right)^{t+1} V_0.
\end{align*}
\end{proof}

\subsection{Convergence of Stochastic SAM}
\label{sec:app_theory_convergencesto}

\subsubsection{Convergence of $n$-SAM}
When the SAM algorithm is implemented with the $n$-SAM objective as optimization objective, two different batches are used in the ascent and descent steps. We obtain the $n$-SAM algorithm defined as
\begin{equation}\label{eq:samstod}
    w_{t+1} = w_t -\frac{\gamma_t }{b}\sum_{i\in I_t} \nabla \ell_i \big(  w_t +\frac{\rho_t }{b} \sum_{i\in J_t} \nabla \ell_i(w_t) \big),
\end{equation}
where $I_t$ and $J_t$ are two \textit{different} mini-batches of data of size $b$. For this variant of the SAM algorithm, we obtain the following convergence result.
 \begin{theorem}
 \label{theorem:samstodiffn}
Assume~\textbf{(A1)}, \textbf{(A2')} for the iterates~(\ref{eq:samstod}). For any $T\geq 0$ and for step sizes $\gamma_t= \frac{1}{\sqrt{T}\beta}$ and $\rho_t=\frac{1}{T^{1/4}\beta}$, we have: 
\begin{align*}
    \frac{1}{T}\E \left[\sum_{t=0}^{T-1}\| \nabla L (w_t) \|^2 \right] \leq \frac{4}{\beta \sqrt{T}} (L(w_0)-L_*) +\frac{8 \sigma^2}{b \sqrt{T}},
\end{align*}
In addition, under \textbf{(A2)}, with step sizes $\gamma_t=\min\{ \frac{8t+4}{3\mu(t+1)^2}, \frac{1}{2\beta}\}$ and $\rho_t=\sqrt{\gamma_t/\beta}$:
\begin{align*}
    \E \left[L(w_T)\right] - L_* \leq \frac{3\beta^2( L(w_0) -L_*)}{\mu^2T^2} + \frac{22\beta\sigma^2}{b \mu^2T}
\end{align*}
\end{theorem}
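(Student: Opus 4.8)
The plan is to treat the $n$-SAM iteration~(\ref{eq:samstod}) as a perturbed stochastic gradient step and run the standard smoothness-plus-PL machinery, the only genuinely new ingredient being control of the bias created by the \emph{stochastic} ascent step. Write $\tilde g_t=\frac1b\sum_{j\in J_t}\nabla\ell_j(w_t)$ for the ascent estimator and $g_t=\frac1b\sum_{i\in I_t}\nabla\ell_i(w_t+\rho_t\tilde g_t)$ for the full SAM gradient, so $w_{t+1}=w_t-\gamma_t g_t$. By \textbf{(A1)}, $\E[\tilde g_t\mid\mathcal F_t]=\nabla L(w_t)$ and $\E[\|\tilde g_t-\nabla L(w_t)\|^2\mid\mathcal F_t]\le\sigma^2/b$; and since $I_t$ is drawn independently of $J_t$, conditioning first on $J_t$ gives $\E[g_t\mid\mathcal F_t,J_t]=\nabla L(w_t+\rho_t\tilde g_t)$ with $\E[\|g_t-\nabla L(w_t+\rho_t\tilde g_t)\|^2\mid\mathcal F_t,J_t]\le\sigma^2/b$.

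First I would write the $\beta$-smoothness descent inequality for $L$ (\textbf{(A2')}), $L(w_{t+1})\le L(w_t)-\gamma_t\langle\nabla L(w_t),g_t\rangle+\tfrac{\gamma_t^2\beta}{2}\|g_t\|^2$, and take conditional expectation. For the inner product I would prove a stochastic analogue of Lemma~\ref{lem:gradsam} (i.e.\ Lemma~\ref{lem:gradsamstos}): decompose $\nabla L(w_t+\rho_t\tilde g_t)=\nabla L(w_t)+[\nabla L(w_t+\rho_t\tilde g_t)-\nabla L(w_t)]$, bound the bracket by $\beta\rho_t\|\tilde g_t\|$ via smoothness, then use $\E[\|\tilde g_t\|\mid\mathcal F_t]\le\|\nabla L(w_t)\|+\sigma/\sqrt b$ and Young's inequality to get $\E[\langle\nabla L(w_t),g_t\rangle\mid\mathcal F_t]\ge(\tfrac34-\beta\rho_t)\|\nabla L(w_t)\|^2-c\,\beta^2\rho_t^2\sigma^2/b$ — so as long as $\beta\rho_t$ stays below $3/4$, the SAM step remains positively correlated with $\nabla L(w_t)$ up to an $O(\beta^2\rho_t^2\sigma^2/b)$ error. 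For the second moment I would use a double variance decomposition (over $I_t$, then over $J_t$) together with $\|\nabla L(w_t+\rho_t\tilde g_t)\|^2\le 2\|\nabla L(w_t)\|^2+2\beta^2\rho_t^2\|\tilde g_t\|^2$ to obtain $\E[\|g_t\|^2\mid\mathcal F_t]\le(2+2\beta^2\rho_t^2)\|\nabla L(w_t)\|^2+(1+2\beta^2\rho_t^2)\sigma^2/b$.

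Combining the three estimates and taking total expectation yields a one-step recursion of the shape $\E[L(w_{t+1})]\le\E[L(w_t)]-\gamma_t(c_1-c_2\beta\rho_t-c_3\gamma_t\beta)\E\|\nabla L(w_t)\|^2+c_4\gamma_t\beta^2\rho_t^2\sigma^2/b+c_5\gamma_t^2\beta\sigma^2/b$. For the non-convex statement I would plug in $\gamma_t=1/(\sqrt T\beta)$ and $\rho_t=1/(T^{1/4}\beta)$, so $\beta\rho_t=T^{-1/4}$ and $\gamma_t\beta=T^{-1/2}$ are $\le1$, make the $\E\|\nabla L(w_t)\|^2$ coefficient at least $\gamma_t/4$, telescope over $t=0,\dots,T-1$, divide by $T\gamma_t/4$, and note $\gamma_t\beta^2\rho_t^2\sigma^2/b$ and $\gamma_t^2\beta\sigma^2/b$ are both $O(\sigma^2/(b\sqrt T))$; tracking the constants gives the claimed bound. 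For the PL part I would instead invoke \textbf{(A3)}: here $\rho_t=\sqrt{\gamma_t/\beta}$ makes $\beta\rho_t^2=\gamma_t$, so the ascent-error term $\gamma_t\beta^2\rho_t^2\sigma^2/b=\beta\gamma_t^2\sigma^2/b$ is of the same order as the SGD noise term, and the recursion becomes $\E[L(w_{t+1})]-L_*\le(1-c\gamma_t\mu)(\E[L(w_t)]-L_*)+C\beta\gamma_t^2\sigma^2/b$; unrolling this with the prescribed schedule $\gamma_t=\min\{(8t+4)/(3\mu(t+1)^2),1/(2\beta)\}$ via the standard weighted-induction argument for SGD under PL (\citet{karimi2016linear} and follow-ups) converts it into the stated $O(1/T^2)+O(1/(bT))$ bound.

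The main obstacle is the second paragraph: because $n$-SAM perturbs by a stochastic gradient $\tilde g_t$ drawn from a \emph{different} minibatch than the descent step, the ascent point $w_t+\rho_t\tilde g_t$ is random and does not coincide with $w_t+\rho_t\nabla L(w_t)$, so the clean deterministic identity behind Lemma~\ref{lem:gradsam} is unavailable and one must absorb the extra bias and variance through smoothness, showing the price is only $O(\beta^2\rho_t^2\sigma^2/b)$ — this is precisely what forces $\rho_t$ to be chosen of order $\sqrt{\gamma_t}$ (respectively $T^{-1/4}$) rather than proportional to $\gamma_t$. The remaining PL unrolling with the specific time-varying step size is technically fiddly but entirely standard.
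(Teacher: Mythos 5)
Your proposal is correct and follows essentially the same route as the paper: the smoothness descent inequality, a stochastic correlation lemma showing $\E\langle g_t,\nabla L(w_t)\rangle\geq(\mathrm{const}-\beta\rho_t)\|\nabla L(w_t)\|^2-O(\beta^2\rho_t^2\sigma^2/b)$ (the paper's Lemma~\ref{lem:gradsamsto}), a second-moment bound, telescoping for the non-convex rate, and deferral of the PL unrolling to the standard SGD-under-PL argument, with the same key observation that the extra bias from the stochastic ascent point is what forces $\rho_t=O(\sqrt{\gamma_t})$. The only cosmetic differences are that the paper centers its correlation lemma at the deterministic ascent point $\hat w=w+\rho\nabla L(w)$ so as to reuse the deterministic Lemma~\ref{lem:gradsam} (yielding $1/2-\beta\rho$ instead of your $3/4-\beta\rho$), and it controls $\|\nabla L(w_{t+1/2})\|^2$ via a binomial-squares identity rather than your cruder $2a^2+2b^2$ split; both yield the same rates up to constants.
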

We obtain the same convergence result as in Theorem~\ref{theorem:samsto}, but under the relaxed smoothness assumption~\textbf{(A2')}.

As in the deterministic case, the proof relies on two lemmas which shows that the SAM update is well correlated with the gradient and that the decrease of function values can be controlled.

\myparagraph{Auxiliary lemmas.}
The following lemma shows that the SAM update is well correlated with the gradient $\nabla L (w_t)$. Let us denote by $\nabla L_{t+1}(w)=\frac{1}{b}\sum_{i\in I_t} \nabla \ell_i (w) $, $\nabla L_{t+1/2}(w)=\frac{1}{b}\sum_{i\in J_t} \nabla \ell_i (w)$, and $w_{t+1/2}= w_t + \rho \nabla L_{t+1/2}(w_t)$ the SAM ascent step. 
\begin{lemma} \label{lem:gradsamsto}
Assume  \textbf{(A1)} and \textbf{(A2)}.  Then for all $\rho\geq0$, $t\geq 0$ and $w\in \mathbb R^d$, 
\begin{align*}
  \E \langle  \nabla L_{t+1}(w+\rho \nabla L_{t+1/2}(w) ), \nabla L (w) \rangle \geq (1/2-\beta \rho ) \|\nabla L (w)\|^2 -\frac{\beta^2 \rho^2 \sigma^2}{2}.
\end{align*}
\end{lemma}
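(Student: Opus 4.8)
The plan is to exploit the independence of the two mini-batches $I_t$ and $J_t$ to reduce the claim to an essentially deterministic estimate in the spirit of Lemma~\ref{lem:gradsam}, the only new feature being that the perturbation direction is now the \emph{stochastic} gradient $\nabla L_{t+1/2}(w)$ rather than the full gradient $\nabla L(w)$. Set $v := w + \rho\,\nabla L_{t+1/2}(w)$ and observe that $v$ is measurable with respect to $J_t$ alone. Conditioning on $J_t$ and using that $I_t$ is an independent unbiased mini-batch, $\E_{I_t}[\nabla L_{t+1}(v)\mid J_t] = \nabla L(v)$, so that
\begin{align*}
  \E\,\langle \nabla L_{t+1}(v), \nabla L(w)\rangle = \E_{J_t}\,\langle \nabla L(v), \nabla L(w)\rangle .
\end{align*}
This is precisely the step where the $n$-SAM sampling scheme (two \emph{different} batches) is used.

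Next I would run the add-and-subtract argument of Lemma~\ref{lem:gradsam} for each fixed realisation of $J_t$:
\begin{align*}
  \langle \nabla L(v), \nabla L(w)\rangle = \|\nabla L(w)\|^2 + \langle \nabla L(v) - \nabla L(w), \nabla L(w)\rangle \geq \|\nabla L(w)\|^2 - \|\nabla L(v) - \nabla L(w)\|\,\|\nabla L(w)\| ,
\end{align*}
by Cauchy--Schwarz. Since averaging \textbf{(A2)} shows that $L$ is $\beta$-smooth, $\|\nabla L(v) - \nabla L(w)\| \leq \beta\|v-w\| = \beta\rho\,\|\nabla L_{t+1/2}(w)\|$, hence conditionally on $J_t$,
\begin{align*}
  \langle \nabla L(v), \nabla L(w)\rangle \geq \|\nabla L(w)\|^2 - \beta\rho\,\|\nabla L_{t+1/2}(w)\|\,\|\nabla L(w)\| .
\end{align*}

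Then I would take expectation over $J_t$ and control the first moment of $\|\nabla L_{t+1/2}(w)\|$. By Jensen, $\E_{J_t}\|\nabla L_{t+1/2}(w)\| \leq (\E_{J_t}\|\nabla L_{t+1/2}(w)\|^2)^{1/2}$, and since $\nabla L_{t+1/2}(w)$ is an unbiased mini-batch estimate of $\nabla L(w)$, assumption \textbf{(A1)} gives $\E_{J_t}\|\nabla L_{t+1/2}(w)\|^2 = \|\nabla L(w)\|^2 + \E_{J_t}\|\nabla L_{t+1/2}(w) - \nabla L(w)\|^2 \leq \|\nabla L(w)\|^2 + \sigma^2$ (in fact $\sigma^2/b$, but this is not needed). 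Using $\sqrt{a+b}\leq\sqrt a+\sqrt b$ this yields $\E_{J_t}\|\nabla L_{t+1/2}(w)\| \leq \|\nabla L(w)\| + \sigma$, and therefore
\begin{align*}
  \E\,\langle \nabla L_{t+1}(v), \nabla L(w)\rangle \geq \|\nabla L(w)\|^2 - \beta\rho\big(\|\nabla L(w)\| + \sigma\big)\|\nabla L(w)\| .
\end{align*}
The proof then closes by completing the square: the gap between the right-hand side above and the claimed bound equals $\tfrac12\|\nabla L(w)\|^2 - \beta\rho\sigma\|\nabla L(w)\| + \tfrac12\beta^2\rho^2\sigma^2 = \tfrac12\big(\|\nabla L(w)\| - \beta\rho\sigma\big)^2 \geq 0$.

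I do not expect any real obstacle here; the two points that need care are (i) invoking the independence of $I_t$ and $J_t$ to pull the outer expectation onto $\nabla L$, and (ii) the extra bookkeeping caused by the perturbation direction being random, which is what produces the loss of the factor $1/2$ in the coefficient of $\|\nabla L(w)\|^2$ and the additive $-\tfrac12\beta^2\rho^2\sigma^2$ term relative to the deterministic Lemma~\ref{lem:gradsam}.
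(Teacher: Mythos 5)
Your proof is correct, and it takes a genuinely different route from the paper's. The paper compares the stochastic perturbation point $v=w+\rho\nabla L_{t+1/2}(w)$ to the \emph{deterministic} SAM point $\hat w=w+\rho\nabla L(w)$: after using the independence of $I_t$ to replace $\nabla L_{t+1}$ by $\nabla L$ (exactly your step (i)), it writes $\langle\nabla L(v),\nabla L(w)\rangle$ as $\langle\nabla L(v)-\nabla L(\hat w),\nabla L(w)\rangle+\langle\nabla L(\hat w),\nabla L(w)\rangle$, bounds the first term by Young's inequality plus smoothness — which is where the $\tfrac12\|\nabla L(w)\|^2$ loss and the $\tfrac{\beta^2\rho^2}{2}\E\|\nabla L_{t+1/2}(w)-\nabla L(w)\|^2\le\tfrac{\beta^2\rho^2\sigma^2}{2b}$ term come from — and handles the second term by directly invoking the deterministic Lemma~\ref{lem:gradsam}. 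You instead anchor at $w$ itself, so the displacement $\|v-w\|=\rho\|\nabla L_{t+1/2}(w)\|$ involves the full stochastic gradient norm rather than only its deviation; you then need the second-moment decomposition $\E\|\nabla L_{t+1/2}(w)\|^2=\|\nabla L(w)\|^2+\E\|\nabla L_{t+1/2}(w)-\nabla L(w)\|^2$, Jensen, and the final completion of the square to land on the stated constants. Both arguments are sound; the paper's is more modular (it reuses Lemma~\ref{lem:gradsam} and automatically keeps the mini-batch factor, giving $\sigma^2/(2b)$, which is how the term appears in the downstream Lemma~\ref{lem:smoothdecreasedn}), while yours is self-contained, proves the strictly stronger intermediate bound $(1-\beta\rho)\|\nabla L(w)\|^2-\beta\rho\sigma\|\nabla L(w)\|$, and makes transparent that the $1/2$ and the $-\tfrac12\beta^2\rho^2\sigma^2$ are exactly the price of completing a square; you also correctly note that the $1/b$ refinement is available in your route by keeping $\sigma/\sqrt b$ throughout.
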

The proof is similar to the proof of Lemma~\ref{lem:gradsam}. Only the stochasticity of the noisy gradients has to be taken into account. For this goal, we consider instead the update which would have been obtained without noise, and bound the remainder using the bounded variance assumption $\textbf{(A1)}$.
\begin{proof}
Let us denote by $\hat w= w + \rho \nabla L (w)$, the true gradient step. We first add and subtract $\nabla L_{t+1/2}(\hat w)$
\begin{align*}
   \langle \nabla L_{t+1}(w+\rho \nabla L_{t+1/2}(w) ) , \nabla L (w) \rangle & =   \langle \nabla L_{t+1}(w+\rho \nabla L_{t+1/2}(w) )- \nabla L_{t+1}(\hat w) , \nabla L (w) \rangle - \langle  \nabla L_{t+1}(\hat w) , \nabla L (w)\rangle.
   \end{align*}
We bound the two terms separately. We use the smoothness of $L$ (Assumption \textbf{(A2')}) to bound the first term: 
\begin{align*}
   -  \E \langle \nabla L_{t+1}(w+\rho \nabla L_{t+1/2}(w) )- \nabla L_{t+1}(\hat w) , \nabla L (w) \rangle & = - \E \langle \nabla L(w+\rho \nabla L_{t+1/2}(w) )- \nabla L(\hat w) , \nabla L (w)\rangle \\
    &\leq  \frac{1}{2}\E\| \nabla L(w+\rho \nabla L_{t+1/2}(w) )- \nabla L(\hat w) \|^2  +\frac{1}{2}\| \nabla L (w)\|^2 \\
    &\leq  \frac{\beta^2}{2} \E\| w+\rho \nabla L_{t+1/2}(w) - \hat w \|^2 +\frac{1}{2}\| \nabla L (w)\|^2 \\
    &\leq  \frac{\beta^2 \rho^2}{2} \E\| \nabla L_{t+1/2} (w)-\nabla L (w) \|^2  +\frac{1}{2} \| \nabla L (w_t)\|^2 \\
    &\leq  \frac{\beta^2 \rho^2 \sigma^2}{2b}  + \frac{1}{2}\ \| \nabla L (w)\|^2, 
   \end{align*}
 where we have used that the variance of a mini-batch of size $b$ is bounded by $\sigma^2/b$.  Note that this term can be equivalently bounded by $\beta\rho\sigma/\sqrt{b} \| \nabla L (w)\|$ if needed. For the second term, we directly apply Lemma~\ref{lem:gradsam} to obtain 
  \begin{align*}
  \E \langle  \nabla L_{t+1}(\hat w) , \nabla L (w)\rangle&=  \E \langle  \nabla L(\hat w), \nabla L (w)\rangle \geq  (1- \beta\rho)   \|  \nabla L (w) \|^2.
\end{align*}
\end{proof}
The next lemma shows that the decrease of function values of stochastic $n$-SAM can be controlled similarly as for standard stochastic gradient descent. 
\begin{lemma}
\label{lem:smoothdecreasedn}
Let us assume \textbf{(A1, A2')} then for all $\gamma \leq \frac{1}{2\beta}$ and  $\rho\leq \frac{1}{2\beta}$, the iterates~(\ref{eq:samstod}) satisfies
\begin{align*}
\E L(w_{t+1}) \leq \E L(w_t) - \frac{\gamma}{4}\E\| \nabla L ( w_t) \|^2  +\gamma \beta \sigma^2 (\gamma + \rho^2 \beta ). 
\end{align*}
\end{lemma}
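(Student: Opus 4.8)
The plan is to mirror the deterministic descent argument of Lemma~\ref{lem:smoothdecrease} in the stochastic two-batch setting, replacing Lemma~\ref{lem:gradsam} by its stochastic counterpart Lemma~\ref{lem:gradsamsto} and using the bounded-variance assumption \textbf{(A1)} to absorb the mini-batch fluctuations. All the estimates below are carried out in conditional expectation given the past $w_0,\dots,w_t$, so that $\nabla L(w_t)$ is deterministic while the batches $I_t$ and $J_t$ are fresh and independent; the stated inequality then follows by taking total expectation. Write $G_t \defeq \nabla L_{t+1}(w_{t+1/2})$ for the stochastic SAM gradient, so that the update~(\ref{eq:samstod}) reads $w_{t+1}=w_t-\gamma G_t$.

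First I would apply $\beta$-smoothness of $L$ (Assumption \textbf{(A2')}) to this update, which gives
\begin{align*}
L(w_{t+1}) \le L(w_t) - \gamma\,\langle \nabla L(w_t), G_t\rangle + \frac{\gamma^2\beta}{2}\,\|G_t\|^2 .
\end{align*}
For the linear term I would take expectations and invoke Lemma~\ref{lem:gradsamsto} at $w=w_t$, which yields $\E\langle G_t,\nabla L(w_t)\rangle \ge (\tfrac12-\beta\rho)\|\nabla L(w_t)\|^2 - \tfrac{\beta^2\rho^2\sigma^2}{2}$ (this is the only place where the independence of $I_t$ and $J_t$ is used, and it is already packaged there). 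For the quadratic term I would integrate over $I_t$ first, which is exact and costs only the inner-batch variance: $\E[\|G_t\|^2\mid \mathcal F_t, J_t] = \|\nabla L(w_{t+1/2})\|^2 + \mathrm{Var}_{I_t}(G_t) \le \|\nabla L(w_{t+1/2})\|^2 + \sigma^2/b$ by \textbf{(A1)}. Then I would compare $\nabla L(w_{t+1/2})$ with $\nabla L(w_t)$ using \textbf{(A2')}, namely $\|\nabla L(w_{t+1/2})-\nabla L(w_t)\| \le \beta\rho\|\nabla L_{t+1/2}(w_t)\|$, and control $\E[\|\nabla L_{t+1/2}(w_t)\|^2\mid\mathcal F_t] \le \|\nabla L(w_t)\|^2 + \sigma^2/b$ again by \textbf{(A1)}, which after a Young split of the cross term produces a bound of the form $\E[\|G_t\|^2\mid\mathcal F_t] \le (1+\beta\rho)^2\|\nabla L(w_t)\|^2 + c\,\sigma^2$ with $c=c(\beta\rho,b)$ bounded under the step-size restriction on $\rho$.

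Finally I would substitute these two estimates back into the smoothness inequality, collect the coefficient of $\|\nabla L(w_t)\|^2$, and simplify using $\gamma\le\frac{1}{2\beta}$ and $\rho\le\frac{1}{2\beta}$: the negative contribution $-\gamma(\tfrac12-\beta\rho)$ (or the sharper $-\gamma(1-\beta\rho)$ available from inside the proof of Lemma~\ref{lem:gradsamsto}) has to be balanced against the positive $\tfrac{\gamma^2\beta}{2}(1+\beta\rho)^2$ coming from the quadratic term so that the net coefficient is at most $-\tfrac\gamma4$, while the residual noise terms, each carrying a factor $\gamma^2\beta$ or $\gamma\beta^2\rho^2$, are collected into $\gamma\beta\sigma^2(\gamma+\rho^2\beta)$; taking total expectation then gives the lemma. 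I expect the only genuinely delicate point to be this final balancing of constants — the two step-size restrictions have to be used in just the right places to tame the $(1+\beta\rho)^2$ inflation of the quadratic term and to keep every noise contribution proportional to $\gamma$ times a small quantity — whereas the structural ingredients (smoothness, Lemma~\ref{lem:gradsamsto}, and the variance bound \textbf{(A1)}) are applied in a routine way.
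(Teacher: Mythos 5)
Your skeleton matches the paper's (smoothness of $L$ applied to the update, Lemma~\ref{lem:gradsamsto} for the linear term, the bounded-variance assumption to peel off the mini-batch fluctuations), but the way you handle the quadratic term $\frac{\gamma^2\beta}{2}\E\|G_t\|^2$ contains a genuine gap: the direct inflation $\|\nabla L(w_{t+1/2})\|^2 \le (1+\beta\rho)^2\|\nabla L(w_t)\|^2 + \dots$ is too lossy to deliver the $-\frac{\gamma}{4}$ coefficient under the stated step sizes. Concretely, your net coefficient of $\|\nabla L(w_t)\|^2$ is
\begin{align*}
-\gamma\Bigl(\tfrac12-\beta\rho\Bigr)+\frac{\gamma^2\beta}{2}(1+\beta\rho)^2 ,
\end{align*}
and at the admissible boundary $\beta\rho=\tfrac12$, $\gamma\beta=\tfrac12$ the first term vanishes entirely while the second equals $+\tfrac{9\gamma}{16}$, so the right-hand side \emph{increases}; even the sharper alignment constant $(1-\beta\rho)$ you mention only brings this to $-\tfrac{\gamma}{2}+\tfrac{9\gamma}{16}>0$. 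Your route would therefore only prove the lemma under strictly stronger restrictions (roughly $\rho\lesssim\frac{1}{4\beta}$ and a correspondingly smaller $\gamma$), not the stated $\gamma,\rho\le\frac{1}{2\beta}$.

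The missing idea is the binomial-squares identity the paper uses in place of the triangle inequality:
\begin{align*}
\|\nabla L(w_{t+1/2})\|^2 = -\|\nabla L(w_t)\|^2 + \|\nabla L(w_{t+1/2})-\nabla L(w_t)\|^2 + 2\langle \nabla L(w_{t+1/2}),\nabla L(w_t)\rangle .
\end{align*}
Substituting this into the quadratic term produces (i) a \emph{helpful negative} contribution $-\gamma^2\beta\,\|\nabla L(w_t)\|^2$, which is exactly what carries the descent when the alignment coefficient $\tfrac12-\beta\rho$ degenerates to zero at $\beta\rho=\tfrac12$; (ii) a small remainder controlled by $\beta^2\rho^2\,\E\|\nabla L_{t+1/2}(w_t)\|^2$ via \textbf{(A2')} and \textbf{(A1)}; and (iii) an inner-product term that merely rescales the coefficient in front of $\langle \nabla L(w_{t+1/2}),\nabla L(w_t)\rangle$ from $-\gamma$ to $-\gamma(1-2\gamma\beta)$, which stays nonpositive for $\gamma\le\frac{1}{2\beta}$ so that Lemma~\ref{lem:gradsamsto} can still be applied. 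With this decomposition the total coefficient at the boundary is $-\gamma^2\beta + 2\gamma^2\beta^3\rho^2 = -\tfrac{\gamma}{2}+\tfrac{\gamma}{4} = -\tfrac{\gamma}{4}$, which is precisely the claimed constant. Everything else in your plan (conditioning on the past, integrating over $I_t$ first, the $\sigma^2/b$ variance costs) is consistent with the paper's argument.
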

This lemma is analogous to Lemma~\ref{lem:smoothdecrease} in the stochastic case. The proof is very similar, with the slight difference that Lemma~\ref{lem:gradsamsto} is used instead of Lemma~\ref{lem:gradsam}.

\begin{proof}
Let us define by $w_{t+1/2} = w_t + \rho \nabla L_{t+1/2} (w_t)$.
Using the smoothness of the function $L$ \textbf{(A2)}, we obtain
\begin{align*}
L(w_{t+1}) \leq L(w_t) - \gamma \langle \nabla L_{t+1}( w_{t+1/2}  ) , \nabla L (w_t) \rangle + \frac{\gamma^2 \beta}{2} \| \nabla L_{t+1} (w_{t+1/2}) \|^2.
\end{align*}
Taking the expectation and using that the variance is bounded \textbf{(A1)} yields to 
\begin{align*}
\E L(w_{t+1}) &\leq \E L(w_t) - \gamma \E \langle \nabla L( w_{t+1/2}  ) , \nabla L (w_t) \rangle + \frac{\gamma^2 \beta}{2} \E \| \nabla L_{t+1} (w_{t+1/2}) \|^2 \\
&\leq \E L(w_t) - \gamma \E \langle \nabla L( w_{t+1/2}  ) , \nabla L (w_t) \rangle + {\gamma^2 \beta} \E \| \nabla L_{t+1} (w_{t+1/2}) - \nabla L (w_{t+1/2}) \|^2 + {\gamma^2 \beta} \E \| \nabla L (w_{t+1/2}) \|^2  \\
&\leq \E L(w_t) - \gamma \E \langle \nabla L( w_{t+1/2}  ) , \nabla L (w_t) \rangle +{\gamma^2 \beta \frac{\sigma^2}{b}}  + {\gamma^2 \beta} \E \| \nabla L (w_{t+1/2}) \|^2.  
\end{align*}
The main trick is still to use the binomial squares
\begin{align*}
   \| \nabla L ( w_{t+1/2}) \|^2  =   - \| \nabla L ( w_t) \|^2  + \| \nabla L ( w_{t+1/2}) - \nabla L(w_t) \|^2  +2 \langle \nabla L(  w_{t+1/2}  ) , \nabla L (w_t) \rangle \\
\end{align*}
to bound 
\begin{align*}
\E L(w_{t+1}) &\leq \E L(w_t) - \gamma \E  \langle \nabla L(  w_{t+1/2}  ) , \nabla L (w_t) \rangle + \frac{\gamma^2 \beta}{2} \E\| \nabla L ( w_{t+1/2}) \|^2 + \gamma^2 \sigma^2 \beta/b
\\
  &= \E L(w_t) -  {\gamma^2 L} \E\| \nabla L ( w_t) \|^2  +{\gamma^2 \beta} \E \| \nabla L ( w_{t+1/2}) - \nabla L(w_t) \|^2  
  \\
  & \quad - \gamma (1 - 2 \gamma \beta ) \E \langle \nabla L(  w_{t+1/2}  ) , \nabla L (w_t) \rangle  + \gamma^2 \sigma^2 \beta/b \\
&= \E L(w_t) -  {\gamma^2 \beta} \E\| \nabla L ( w_t) \|^2  +{\gamma^2 L^3} \E \|   w_{t+1/2}- w_t \|^2  
  \\
  & \quad - \gamma (1 - 2 \gamma \beta )(1/2+\alpha \rho ) \E  \| \nabla L ( w_t) \|^2 +  \gamma (1 - 2 \gamma L ) \sigma^2 \rho^2 \beta^2/2 + \gamma^2 \sigma^2 \beta/b \\
     &= \E L(w_t) -  {\gamma^2 \beta} \E\| \nabla L ( w_t) \|^2  +{\gamma^2 \beta^3 \rho^2 } \E \|  \nabla L_{t+1/2}( w_t) \|^2  
  \\
  & \quad - \gamma (1 - 2 \gamma \beta )(1/2+\alpha \rho ) \E  \| \nabla L ( w_t) \|^2 +  \gamma (1 - 2 \gamma \beta ) \sigma^2/b \rho^2 \beta^2/2 + \gamma^2 \sigma^2 \beta/b \\
    &= \E L(w_t) -  {\gamma^2 \beta} \E\| \nabla L ( w_t) \|^2  +{2\gamma^2 \beta^3 \rho^2 } \E \|  \nabla L( w_t) \|^2  +{2\gamma^2 \beta^3 \rho^2 } \sigma^2/b
  \\
  & \quad - \gamma (1 - 2 \gamma \beta )(1/2+\alpha \rho ) \E  \| \nabla L ( w_t) \|^2 +  \gamma (1 - 2 \gamma \beta ) \sigma^2 \rho^2 \beta^2/2 + \gamma^2 \sigma^2 \beta/b\\
           & \leq L(w_t) - \frac{\gamma}{2}[1-2\rho \beta (1-2\gamma \beta(1-\rho \beta ))] \E\| \nabla L ( w_t) \|^2  +\gamma \sigma^2 \beta/b [ \gamma  +\rho^2 L /2 (1+2\gamma \beta ) ]
\end{align*}
where we have used Lemma \ref{lem:gradsamsto} and that $\| \nabla L (w_{t+1/2}) - \nabla L(w_t) \|^2 \leq \beta^2 \| w_{t+1/2} - w_t \|^2$.
\end{proof}

Using Lemma~\ref{lem:smoothdecreasedn} we directly obtain the following convergence result.
\begin{theorem}
Assume \textbf{(A1)} and \textbf{(A2')}. For $\gamma\leq 1/(2\beta)$ and $\rho\leq 1/(2\beta)$, the iterates~(\ref{eq:sambatch}) satisfies:
\begin{align*}
    \frac{1}{T}\sum_{t=0}^{T-1} \E \| \nabla L ( w_t) \|^2    \leq 4 \frac{ L(w_0)-\E L(w_T)}{T\gamma} + {4 T \sigma^2 \beta (\gamma +\rho^2 \beta)}/b. 
\end{align*}
\end{theorem}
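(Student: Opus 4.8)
The plan is to obtain this statement as an immediate corollary of the one-step descent inequality of Lemma~\ref{lem:smoothdecreasedn}, in exactly the way Theorem~\ref{theorem:samdetnoncvx} is obtained from Lemma~\ref{lem:smoothdecrease} in the deterministic case. Lemma~\ref{lem:smoothdecreasedn} already carries out all the analytic work: under \textbf{(A1)} and \textbf{(A2')}, for constant step sizes $\gamma\le 1/(2\beta)$ and $\rho\le 1/(2\beta)$, the stochastic $n$-SAM iterates~(\ref{eq:samstod}) satisfy $\E L(w_{t+1})\le \E L(w_t)-\frac{\gamma}{4}\E\|\nabla L(w_t)\|^2+\gamma\beta\sigma^2(\gamma+\rho^2\beta)/b$, where the variance term carries the mini-batch factor $1/b$ inherited from the proof of that lemma (which in turn rests on the SAM-gradient correlation bound of Lemma~\ref{lem:gradsamsto}). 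So all that remains is elementary bookkeeping.

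First I would rearrange this inequality to isolate the gradient term, $\frac{\gamma}{4}\E\|\nabla L(w_t)\|^2\le \E L(w_t)-\E L(w_{t+1})+\gamma\beta\sigma^2(\gamma+\rho^2\beta)/b$, and then sum both sides over $t=0,\dots,T-1$. The function-value differences telescope to $L(w_0)-\E L(w_T)$, while the additive noise term accumulates over the $T$ steps. Dividing through by $\gamma T/4$ and rearranging the multiplicative constants then gives the advertised bound on $\frac{1}{T}\sum_{t=0}^{T-1}\E\|\nabla L(w_t)\|^2$. The only points that require care are: (i) invoking $\gamma,\rho\le 1/(2\beta)$ so that the coefficient in front of $\E\|\nabla L(w_t)\|^2$ in Lemma~\ref{lem:smoothdecreasedn} is at least $\gamma/4$ and hence strictly positive, and (ii) propagating the $1/b$ factor consistently through the sum.

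Because the genuinely nontrivial estimates — the descent inequality and the SAM-gradient correlation bound — are already established in Lemmas~\ref{lem:smoothdecreasedn} and~\ref{lem:gradsamsto}, I do not expect any real obstacle in this last step; it is the direct stochastic analogue of the proof of Theorem~\ref{theorem:samdetnoncvx}, the noise floor being the only new ingredient. If one instead wanted a clean convergence \emph{rate} rather than this step-size-parametrized bound, the natural follow-up — carried out in Theorem~\ref{theorem:samstodiffn} — is to optimize $\gamma$ and $\rho$ as functions of $T$ and $b$, balancing the $1/(\gamma T)$ optimization term against the $\gamma$ and $\rho^2$ noise contributions, which leads to choices such as $\gamma=\Theta(1/(\beta\sqrt T))$ and $\rho=\Theta(1/(\beta T^{1/4}))$.
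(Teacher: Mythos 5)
Your proposal matches the paper's proof exactly: the paper obtains this theorem directly from Lemma~\ref{lem:smoothdecreasedn} by isolating the gradient term, telescoping over $t=0,\dots,T-1$, and dividing by $\gamma T/4$, precisely as you describe (and, as you note, the $1/b$ factor is present in the lemma's proof even though it is dropped from the lemma's displayed statement). One remark: your bookkeeping correctly yields a noise term $4\sigma^2\beta(\gamma+\rho^2\beta)/b$ with \emph{no} factor of $T$; the extra $T$ in the theorem as printed appears to be a typo, since only the $T$-free version is consistent with the first part of Theorem~\ref{theorem:samstodiffn} under the choices $\gamma=\frac{1}{\sqrt{T}\beta}$ and $\rho=\frac{1}{T^{1/4}\beta}$.
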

This theorem gives the first part of Theorem~\ref{theorem:samstodiffn}. The proof of the stronger result obtained when the function is in addition PL (Assumption \textbf{(A3)}) is similar to the proof of Theorem 3.2 of \citet{gower}, only the constants are changing.

\subsubsection{Convergence of $m$-SAM}
In the $m$-SAM algorithm, the \textit{same} batch is used in the ascent and descent steps unlike in the $n$-SAM algorithm analyzed above. We obtain then iterates~(\ref{eq:sambatch}) for which we have stated the convergence result in Theorem~\ref{theorem:samsto} in the main part. The proof follows the same lines as above with the minor difference that we are assuming the \textit{individual} gradients $\nabla f_t$ are Lipschitz (Assumption \textbf{(A2)})  to control the alignment of the expected SAM direction. Let us denote by $\nabla L_{t}(w) = \frac{1}{b}\sum_{i\in J_t} \nabla \ell_i (w)$.
\begin{lemma} \label{lem:gradsamstos}
Assume \textbf{(A1-2)}. Then we have for all $w\in \R^d$, $\rho\geq 0$ and $t\geq 0$
\begin{align*}
  \E \langle  \nabla L_{t}(w + \rho \nabla L_t (w)), \nabla L (w) \rangle \geq (1/2 -\rho \beta) \|  \nabla L (w) \|^2 -\frac{{ \beta}^2 \rho^2 \sigma^2}{2b}. 
\end{align*}
\end{lemma}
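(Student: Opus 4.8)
The plan is to follow the proof of Lemma~\ref{lem:gradsamsto} almost verbatim, the only new difficulty being that in the $m$-SAM update the \emph{same} mini-batch $J_t$ is used in the inner ascent step and in the outer stochastic gradient, so one can no longer condition on the batch and push the expectation through the outer $\nabla L_t$ (as was legitimate for $n$-SAM, where $I_t$ and $J_t$ were independent). I would instead compare everything with the \emph{deterministic} full-gradient ascent step $\hat w := w + \rho\nabla L(w)$, which does not depend on $J_t$.

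First I would write the decomposition
\[
\langle \nabla L_t(w+\rho\nabla L_t(w)),\nabla L(w)\rangle = \langle \nabla L_t(w+\rho\nabla L_t(w)) - \nabla L_t(\hat w),\nabla L(w)\rangle + \langle \nabla L_t(\hat w),\nabla L(w)\rangle.
\]
For the second term, since $\hat w$ is deterministic we have $\E[\nabla L_t(\hat w)] = \nabla L(\hat w)$, and Lemma~\ref{lem:gradsam} applied with $\alpha=-\beta$ (valid because \textbf{(A2)} implies $L$ is $\beta$-smooth, i.e.\ \textbf{(A2')}) gives $\E\langle \nabla L_t(\hat w),\nabla L(w)\rangle \geq (1-\rho\beta)\|\nabla L(w)\|^2$. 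For the first, error term, I would use that $\nabla L_t=\frac1b\sum_{i\in J_t}\nabla\ell_i$ is itself $\beta$-smooth, being an average of $\beta$-smooth functions by \textbf{(A2)}; hence $\|\nabla L_t(w+\rho\nabla L_t(w)) - \nabla L_t(\hat w)\| \leq \beta\rho\|\nabla L_t(w)-\nabla L(w)\|$. Combining Young's inequality $\langle a,b\rangle \geq -\tfrac12\|a\|^2-\tfrac12\|b\|^2$ with the mini-batch variance bound $\E\|\nabla L_t(w)-\nabla L(w)\|^2 \leq \sigma^2/b$ (a consequence of \textbf{(A1)}) yields
\[
-\E\langle \nabla L_t(w+\rho\nabla L_t(w)) - \nabla L_t(\hat w),\nabla L(w)\rangle \leq \tfrac{\beta^2\rho^2}{2}\,\E\|\nabla L_t(w)-\nabla L(w)\|^2 + \tfrac12\|\nabla L(w)\|^2 \leq \tfrac{\beta^2\rho^2\sigma^2}{2b} + \tfrac12\|\nabla L(w)\|^2.
\]
Adding the two bounds gives $\E\langle \nabla L_t(w+\rho\nabla L_t(w)),\nabla L(w)\rangle \geq (1-\rho\beta-\tfrac12)\|\nabla L(w)\|^2 - \tfrac{\beta^2\rho^2\sigma^2}{2b}$, which is exactly the claimed inequality.

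The only genuine obstacle is conceptual: the shared batch prevents the clean conditioning argument of the $n$-SAM case, and comparing with the deterministic step $\hat w$ is what resolves it — but this forces the use of per-sample smoothness \textbf{(A2)} rather than mere $\beta$-smoothness of $L$ (\textbf{(A2')}), since the perturbation $\rho\nabla L_t(w)$ now sits \emph{inside} the random function $\nabla L_t$ and must be controlled by the Lipschitz constant of that random function. Everything past that point is a routine repetition of the estimates already carried out in Lemma~\ref{lem:gradsamsto}.
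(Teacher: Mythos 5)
Your proposal is correct and follows essentially the same route as the paper's own proof: both compare the stochastic ascent point with the deterministic step $\hat w = w + \rho\nabla L(w)$, bound the discrepancy term via the $\beta$-Lipschitzness of $\nabla L_t$ (from \textbf{(A2)}), Young's inequality, and the mini-batch variance bound $\sigma^2/b$, and handle the remaining term with Lemma~\ref{lem:gradsam}. Your remark on why per-sample smoothness \textbf{(A2)} is needed here (rather than \textbf{(A2')}) correctly identifies the one point where this argument genuinely differs from the $n$-SAM case.
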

The proof is very similar to the proof of Lemma~\ref{lem:gradsamsto}. The only difference is that the Assumption \textbf{(A2)} is used instead of \textbf{(A2')}.
\begin{proof}
Let us denote by $\hat w = w + \rho \nabla L (w)$, the true gradient step. We first add and subtract $\nabla L_{t}(\hat w)$
\begin{align*}
   \langle \nabla L_{t}(w + \rho \nabla L_t (w)), \nabla L (w) \rangle & =   \langle \nabla L_{t}(w + \rho \nabla L_t (w))- \nabla L_{t}(\hat w) , \nabla L (w) \rangle - \langle  \nabla L_{t}(\hat w) , \nabla L (w)\rangle.
   \end{align*}
We bound the two terms separately. We use the smoothness of $L_t$ to bound the first term (Assumption \textbf{(A2)}): 
\begin{align*}
   -   \langle \nabla L_{t}(w + \rho \nabla L_t (w))- \nabla L_{t}(\hat w ) , \nabla L (w) \rangle
    &\leq  \frac{1}{2}\| \nabla L_{t}(w + \rho \nabla L_t (w))- \nabla L_{t}(\hat w) \|^2  +\frac{1}{2}\| \nabla L (w)\|^2 \\
    &\leq  \frac{{ \beta}^2}{2} \E\| w + \rho \nabla L_t (w)- \hat w \|^2 +\frac{1}{2}\| \nabla L (w)\|^2 \\
    &\leq  \frac{{ \beta}^2 \rho^2}{2} \| \nabla L_{t} (w)-\nabla L (w) \|^2  +\frac{1}{2} \| \nabla L (w)\|^2. 
   \end{align*}
And taking the expectation, we obtain: 
   \begin{align*}
   -   \E\langle \nabla L_{t}(w + \rho \nabla L_t (w))- \nabla L_{t}(\hat w) , \nabla L (w) \rangle 
    &\leq  \frac{{ \beta}^2 \rho^2 \sigma^2}{2b}  + \frac{1}{2}\E \| \nabla L (w)\|^2.
   \end{align*}
For the second term, we apply directly Lemma~\ref{lem:gradsam}
  \begin{align*}
  \E \langle  \nabla L_{t}(\hat w) , \nabla L (w_t)\rangle &= \langle  \nabla L(\hat w), \nabla L (w)\rangle \geq  (1- \beta \rho)   \|  \nabla L (w) \|^2.
\end{align*}
Assembling the two inequalities yields the result.
\end{proof}

The next lemma shows that the decrease of function values of the $m$-SAM algorithm can be controlled similarly as in the case of gradient descent. It is analogous to Lemma~\ref{lem:smoothdecreasedn} where \textit{different} batches are used in both the ascent and descent steps of SAM algorithm.
\begin{lemma}\label{lem:smoothdecreasesn}
Assume \textbf{(A1-2)}. For all $\gamma \leq \frac{1}{\beta}$ and  $\rho \leq \frac{1}{4\beta}$, the iterates~(\ref{eq:sambatch}) satisfy
\begin{align*}
    \E L(w_{t+1}) \leq \E L(w_t) - \frac{3\gamma}{8}\E\| \nabla L ( w_t) \|^2  +\gamma \beta \frac{\sigma^2}{b} (\gamma + 2\rho^2 \beta ). 
\end{align*}
\end{lemma}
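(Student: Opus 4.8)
Since \textbf{(A2)} implies that the average $L$ is $\beta$-smooth (assumption \textbf{(A2')}), I start from the standard quadratic upper bound applied to one step of~(\ref{eq:sambatch}). Writing $\nabla L_t(w)=\frac1b\sum_{i\in J_t}\nabla\ell_i(w)$ for the mini-batch gradient and $w_{t+1/2}=w_t+\rho\nabla L_t(w_t)$ for the (stochastic) SAM ascent point, so that $w_{t+1}=w_t-\gamma\nabla L_t(w_{t+1/2})$, smoothness gives
\[
L(w_{t+1})\le L(w_t)-\gamma\langle\nabla L_t(w_{t+1/2}),\nabla L(w_t)\rangle+\tfrac{\gamma^2\beta}{2}\|\nabla L_t(w_{t+1/2})\|^2 .
\]
The plan is to take the conditional expectation $\E_t[\cdot]:=\E[\cdot\mid w_t]$ and control the two right-hand terms, mirroring the proof of Lemma~\ref{lem:smoothdecreasedn} but now with the \emph{reused}-batch gradient.

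For the linear term I invoke Lemma~\ref{lem:gradsamstos} verbatim: it already packages the fact that $\nabla L_t$ is evaluated at the batch-dependent point $w_{t+1/2}$, yielding $\E_t\langle\nabla L_t(w_{t+1/2}),\nabla L(w_t)\rangle\ge(1/2-\rho\beta)\|\nabla L(w_t)\|^2-\tfrac{\beta^2\rho^2\sigma^2}{2b}$. For the quadratic term I use the ``binomial squares'' identity $\|\nabla L_t(w_{t+1/2})\|^2=-\|\nabla L(w_t)\|^2+\|\nabla L_t(w_{t+1/2})-\nabla L(w_t)\|^2+2\langle\nabla L_t(w_{t+1/2}),\nabla L(w_t)\rangle$, so that the inner-product piece recombines with the linear term into $-\gamma(1-\gamma\beta)\langle\nabla L_t(w_{t+1/2}),\nabla L(w_t)\rangle$ — the prefactor is nonnegative precisely because $\gamma\le1/\beta$, which is where that step-size constraint enters — and is bounded again by Lemma~\ref{lem:gradsamstos}. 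The displacement term is handled by the elementary split $\nabla L_t(w_{t+1/2})-\nabla L(w_t)=\bigl(\nabla L_t(w_{t+1/2})-\nabla L_t(w_t)\bigr)+\bigl(\nabla L_t(w_t)-\nabla L(w_t)\bigr)$, where the first bracket has norm $\le\beta\rho\|\nabla L_t(w_t)\|$ by the \emph{individual} smoothness \textbf{(A2)} and the second has second moment $\le\sigma^2/b$ by \textbf{(A1)} because $w_t$ is fixed; combined with $\E_t\|\nabla L_t(w_t)\|^2\le\|\nabla L(w_t)\|^2+\sigma^2/b$, everything reduces to $\|\nabla L(w_t)\|^2$, $\sigma^2/b$, and explicit constants in $\gamma\beta$ and $\rho\beta$.

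The delicate point — the part I expect to need the most careful bookkeeping — is collecting the coefficient of $\|\nabla L(w_t)\|^2$ after this recombination: the favourable $-\tfrac{\gamma^2\beta}{2}$ from the binomial identity, the $-\gamma(1-\gamma\beta)(1/2-\rho\beta)$ from Lemma~\ref{lem:gradsamstos}, and the unfavourable $O(\gamma^2\beta^3\rho^2)$ from the displacement bound must combine to something $\le-\tfrac{3\gamma}{8}$ once $\gamma\le1/\beta$ and $\rho\le1/(4\beta)$ (so $\rho\beta\le1/4$, $\beta^2\rho^2\le1/16$) are imposed; I would verify this by reducing to a scalar inequality in $u=\gamma\beta\in(0,1]$ and $v=\rho\beta\in[0,1/4]$, keeping the mixed term $\langle\nabla L_t(w_{t+1/2})-\nabla L_t(w_t),\nabla L_t(w_t)\rangle$ rather than Young-splitting it where the constant is tight. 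All remaining terms are of the form $\text{const}\times\gamma\beta(\sigma^2/b)(\gamma+\rho^2\beta)$, and taking total expectation gives the stated descent inequality. The conceptual difficulty — that the same batch $J_t$ drives both the ascent and the descent step, so $\nabla L_t(w_{t+1/2})$ is biased for $\nabla L(w_{t+1/2})$ — has already been isolated into Lemma~\ref{lem:gradsamstos} and into the split above, which is exactly why this lemma, unlike a naïve SGD-style argument, must assume the per-example smoothness \textbf{(A2)} and not merely \textbf{(A2')}.
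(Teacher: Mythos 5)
Your plan follows the paper's own proof essentially step for step: the same smoothness descent inequality for $L$, the same binomial-squares recombination that turns the quadratic term into $-\gamma(1-\gamma\beta)\,\langle\nabla L_{t+1}(w_{t+1/2}),\nabla L(w_t)\rangle$ plus a displacement term, the same appeal to Lemma~\ref{lem:gradsamstos} for the alignment, and the same split of the displacement into an individual-smoothness piece ($\le\beta\rho\|\nabla L_{t+1}(w_t)\|$, which is where \textbf{(A2)} rather than \textbf{(A2')} is genuinely needed) and a variance piece; your use of $\E_t\|\nabla L_{t+1}(w_t)\|^2\le\|\nabla L(w_t)\|^2+\sigma^2/b$ is in fact slightly tighter than the Young split used in the paper. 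One caveat on the final bookkeeping, which affects the paper's write-up as much as yours: if Lemma~\ref{lem:gradsamstos} is used in its quoted form, the coefficient of $\|\nabla L(w_t)\|^2$ works out to $-\gamma\bigl[\tfrac{u}{2}(1-4v^2)+(1-u)(\tfrac{1}{2}-v)\bigr]$ with $u=\gamma\beta\in(0,1]$ and $v=\rho\beta\in[0,\tfrac14]$, and in the corner $u\to 0$, $v=\tfrac14$ this is only $-\tfrac{\gamma}{4}$, which does not reach the claimed $-\tfrac{3\gamma}{8}$. To close the constant you must do what you already anticipate, namely avoid the Young step inside the alignment bound that sacrifices $\tfrac12\|\nabla L(w_t)\|^2$ (the paper itself notes that the offending term can instead be bounded by $\beta\rho\sigma\|\nabla L(w)\|/\sqrt{b}$), after which the $\tfrac38$ constant is recoverable; so treat that scalar verification as a real obligation rather than a formality.
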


\begin{proof}
Let us define by $w_{t+1/2} = w_t + \rho \nabla L_{t+1} (w_t)$.
Using the smoothness of the function $L$ which is implied by \textbf{(A2)}, we obtain
\begin{align*}
L(w_{t+1}) \leq L(w_t) - \gamma \langle \nabla L_{t+1}( w_{t+1/2}  ) , \nabla L (w_t) \rangle + \frac{\gamma^2 \beta}{2} \| \nabla L_{t+1} (w_{t+1/2}) \|^2.
\end{align*}

We still use the binomial squares
\begin{align*}
   \| \nabla L_{t+1} ( w_{t+1/2}) \|^2  =   - \| \nabla L ( w_t) \|^2  + \| \nabla L_{t+1} ( w_{t+1/2}) - \nabla L(w_t) \|^2  +2 \langle \nabla L_{t+1}(  w_{t+1/2}  ) , \nabla L (w_t) \rangle \\
\end{align*}
and bound $L(w_{t+1})$ by 
\begin{align*}
 L(w_{t+1}) 
  &\leq   L(w_t) -  \frac{\gamma^2 \beta}{2} \| \nabla L ( w_t) \|^2  +\frac{\gamma^2 \beta}{2}  \| \nabla L_{t+1} ( w_{t+1/2}) - \nabla L(w_t) \|^2  
 - \gamma (1 - \gamma \beta )  \langle \nabla L_{t+1}(  w_{t+1/2}  ) , \nabla L (w_t) \rangle  \\
 &\leq   L(w_t) -  \frac{\gamma^2 \beta}{2} \| \nabla L ( w_t) \|^2  +{\gamma^2 \beta}  \| \nabla L_{t+1} ( w_{t+1/2}) - \nabla L_{t+1}(w_t) \|^2  +{\gamma^2 \beta}  \| \nabla L_{t+1} ( w_{t}) - \nabla L(w_t) \|^2  \\
 & \quad - \gamma (1 - \gamma \beta )  \langle \nabla L_{t+1}(  w_{t+1/2}  ) , \nabla L (w_t) \rangle  \\
  &\leq   L(w_t) -  \frac{\gamma^2 \beta}{2} \| \nabla L ( w_t) \|^2  +{\gamma^2 \beta { \beta}^2}  \|  w_{t+1/2} - w_t \|^2  +{\gamma^2 \beta }  \| \nabla L_{t+1} ( w_{t}) - \nabla L(w_t) \|^2  \\
 & \quad - \gamma (1 - \gamma \beta )  \langle \nabla L_{t+1}(  w_{t+1/2}  ) , \nabla L (w_t) \rangle  \\
  &=   L(w_t) -  \frac{\gamma^2 \beta}{2} \| \nabla L ( w_t) \|^2  +{\gamma^2 { \beta}^3 \rho^2}  \| \nabla L_{t+1}(w_t)   \|^2  +{\gamma^2 \beta }  \| \nabla L_{t+1} ( w_{t}) - \nabla L(w_t) \|^2  \\
 & \quad - \gamma (1 - \gamma \beta )  \langle \nabla L_{t+1}(  w_{t+1/2}  ) , \nabla L (w_t) \rangle  \\
  &=   L(w_t) -  \frac{\gamma^2 \beta}{2}(1-4 { \beta}^2 \rho^2) \| \nabla L ( w_t) \|^2  +\gamma^2 \beta (1+2{ \beta}^2 \rho^2) \| \nabla L_{t+1}(w_t) - \nabla L(w_t)   \|^2 \\
 & \quad - \gamma (1 - \gamma \beta )  \langle \nabla L_{t+1}(  w_{t+1/2}  ) , \nabla L (w_t) \rangle  
\end{align*}

Taking the expectation and using  Lemma~\ref{lem:gradsamstos}, we obtain 
\begin{align*}
 \E L(w_{t+1}) 
  &\leq   \E L(w_t) -  \frac{\gamma^2 \beta}{2}(1-4 { \beta}^2 \rho^2) \E \| \nabla L ( w_t) \|^2  +\gamma^2 \beta (1+2{ \beta}^2 \rho^2) \E \| \nabla L_{t+1}(w_t) - \nabla L(w_t)   \|^2 \\
 & \quad - \gamma (1 - \gamma \beta )  \E \langle \nabla L_{t+1}(  w_{t+1/2}  ) , \nabla L (w_t) \rangle \\ 
&\leq  \E L(w_t) -  \frac{\gamma^2 \beta}{2}(1-4 { \beta}^2 \rho^2) \E \| \nabla L ( w_t) \|^2  +\gamma^2 \beta (1+2{ \beta}^2 \rho^2) \sigma^2/b \\
 & \quad - \gamma (1 - \gamma \beta )  (1/2-\beta \rho )  \E \| \nabla L ( w_t) \|^2 + \gamma (1 - \gamma \beta ) \frac{\rho^2 \sigma^2 { \beta}^2}{2b} \\
 &\leq  \E L(w_t) -  \frac{\gamma^2 \beta}{2}(1-4 { \beta}^2 \rho^2) \E \| \nabla L ( w_t) \|^2  +\gamma^2 \beta (1+2{ \beta}^2 \rho^2) \sigma^2/b \\
 & \quad - \frac{\gamma}{2} (1-2 \beta \rho ( 1-\gamma  (\beta-2 \rho { \beta}^2)))  \E \| \nabla L ( w_t) \|^2 +\gamma \sigma^2/b [\gamma \beta +\frac{\rho^2 { \beta}^2}{2} (1+3\gamma \beta)].  
\end{align*}
\end{proof}

Using Lemma~\ref{lem:smoothdecreasesn} we directly obtain the main convergence result for $m$-SAM.
\begin{theorem}
    \label{theorem:samsto_detailed}
    Assume \textbf{(A1-2)}. For $\gamma\leq \frac{1}{\beta}$ and $\rho \leq \frac{1}{4\beta}$, the iterates~(\ref{eq:sambatch}) satisfy:
    \begin{align*}
        \frac{1}{T} \E \left[ \sum_{t=0}^{T-1} \| \nabla L(w_t) \|^2 \right]  \leq  \frac{8}{3T\gamma} \left(L(w_0) - \E L(w_T)\right) + \frac{8 \sigma^2 \beta (\gamma +\rho^2 \beta )}{3b}. 
    \end{align*}
    In addition, under \textbf{(A3)}, with step sizes $\gamma_t=\min\{ \frac{8t+4}{3\mu(t+1)^2}, \frac{1}{2\beta}\}$ and $\rho_t=\sqrt{\gamma_t/\beta}$:
    \begin{align*}
        \E [L(w_T)] -L_* \leq \frac{3\beta^2( L(w_0) -L_*)}{\mu^2T^2} + \frac{22\beta\sigma^2}{\mu^2b T}.
    \end{align*}
\end{theorem}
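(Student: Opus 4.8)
The plan is to derive the first bound directly from the descent inequality of Lemma~\ref{lem:smoothdecreasesn} by telescoping, and to obtain the PL bound by combining that same descent inequality with Assumption~\textbf{(A3)} and unrolling the resulting recursion with the prescribed step-size schedule. For the first part, I would sum the inequality
\[
\E L(w_{t+1}) \leq \E L(w_t) - \tfrac{3\gamma}{8}\E\|\nabla L(w_t)\|^2 + \gamma \beta \tfrac{\sigma^2}{b}(\gamma + 2\rho^2\beta)
\]
over $t = 0, \dots, T-1$ with constant $\gamma \leq 1/\beta$ and $\rho \leq 1/(4\beta)$. The left- and right-hand loss terms telescope, and bounding $\E L(w_T) \geq L_*$ leaves $\tfrac{3\gamma}{8}\sum_{t=0}^{T-1}\E\|\nabla L(w_t)\|^2 \leq L(w_0) - L_* + T\gamma\beta\tfrac{\sigma^2}{b}(\gamma+2\rho^2\beta)$; dividing by $\tfrac{3\gamma T}{8}$ gives exactly the stated $\tfrac{8}{3T\gamma}(L(w_0)-\E L(w_T)) + \tfrac{8\sigma^2\beta(\gamma+\rho^2\beta)}{3b}$ type expression (after absorbing the factor-of-$2$ difference in $\rho^2\beta$ into the constant, as is done in the statement). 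The choice $\gamma_t = 1/(\sqrt T\beta)$, $\rho_t = 1/(T^{1/4}\beta)$ as in Theorem~\ref{theorem:samsto} then makes both terms $O(1/\sqrt T)$.

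For the second part, under \textbf{(A3)} I would use $\tfrac12\|\nabla L(w_t)\|^2 \geq \mu(L(w_t)-L_*)$ inside Lemma~\ref{lem:smoothdecreasesn} to turn the descent inequality into a one-step contraction of the form
\[
\E[L(w_{t+1})-L_*] \leq \Big(1 - \tfrac{3\gamma_t\mu}{4}\Big)\E[L(w_t)-L_*] + \gamma_t\beta\tfrac{\sigma^2}{b}(\gamma_t + 2\rho_t^2\beta).
\]
With the tie $\rho_t^2 = \gamma_t/\beta$, the additive noise term becomes $3\gamma_t^2\beta\sigma^2/b$, so the recursion has a bias contracting geometrically and a variance term scaling like $\gamma_t^2$. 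I would then plug in the decreasing schedule $\gamma_t = \min\{\tfrac{8t+4}{3\mu(t+1)^2}, \tfrac{1}{2\beta}\}$ and unroll exactly as in the proof of Theorem~3.2 of \citet{gower}: during the initial constant-step phase the iterate count is short and contributes only to the $O(1/T^2)$ term, while in the $\gamma_t = \Theta(1/(\mu t))$ regime the standard SGD-under-PL telescoping argument with the weight $(t+1)^2$ yields $\E[L(w_T)]-L_* \leq \tfrac{3\beta^2(L(w_0)-L_*)}{\mu^2 T^2} + \tfrac{22\beta\sigma^2}{\mu^2 bT}$.

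The routine parts are the telescoping and the substitution of step sizes; the main obstacle is the constant bookkeeping in the PL case — in particular verifying that the schedule $\gamma_t = \min\{\tfrac{8t+4}{3\mu(t+1)^2}, \tfrac{1}{2\beta}\}$ is admissible for Lemma~\ref{lem:smoothdecreasesn} throughout, correctly handling the transition between the constant and the $1/t$ regimes, and tracking the numerical factors so that the $O(1/T^2)$ term carries the precise constant $3\beta^2/\mu^2$ and the $O(1/(bT))$ term the constant $22\beta\sigma^2/\mu^2$. Since the statement explicitly says this argument mirrors \citet{gower} with only the constants changing, I would cite that reference for the recursion-unrolling step rather than reproduce it, and focus the written proof on establishing the contraction inequality above from Lemma~\ref{lem:smoothdecreasesn} and Lemma~\ref{lem:gradsamstos}.
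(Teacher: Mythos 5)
Your proposal follows exactly the paper's own route: the first bound is obtained by telescoping the descent inequality of Lemma~\ref{lem:smoothdecreasesn} and dividing by $\tfrac{3\gamma T}{8}$, and the second is delegated to the recursion-unrolling argument of Theorem~3.2 of \citet{gower} with only the constants adjusted, which is precisely what the paper does. You also correctly spot the benign factor-of-two mismatch between the lemma's $(\gamma+2\rho^2\beta)$ and the theorem's $(\gamma+\rho^2\beta)$, and your flagged concern about whether $\rho_t=\sqrt{\gamma_t/\beta}$ stays within the admissible range $\rho\leq\tfrac{1}{4\beta}$ of Lemma~\ref{lem:smoothdecreasesn} is a legitimate bookkeeping point that the paper's own one-line proof does not address either.
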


\begin{proof}
The first bound directly comes from Lemma~\ref{lem:smoothdecreasesn}. The second bound is similar to the proof of Theorem 3.2 of \citet{gower}, only the constants are changing.
\end{proof}

Finally, we note that Theorem~\ref{theorem:samsto} is a direct consequence of Theorem~\ref{theorem:samsto_detailed} with $\gamma_t = \frac{1}{\sqrt{T}\beta}$, $\rho_t = \frac{1}{T^{1/4}\beta}$ and slightly simplified constants.

\section{Experimental Details}
\label{sec:app_exp_details}

\myparagraph{Training details for deep networks.}
In all experiments, we train deep networks using SGD with step size $0.1$, momentum $0.9$, and $\l_2$-regularization parameter $\lambda=0.0005$. We perform experiments on CIFAR-10 and CIFAR-100 \citep{krizhevsky2009learning} where for all experiments we apply basic data augmentations: random image crops and mirroring. We use batch size $128$ for most experiments except when it is mentioned otherwise.
We use a pre-activation ResNet-18 \citep{he2016identity} for CIFAR-10 and ResNet-34 on CIFAR-100 with a width factor 64 and piece-wise constant learning rates (with a 10-times decay at 50\% and 75\% epochs).
We train all models for 200 epochs except those in Sec.~\ref{subsec:empirical_study_implicit_bias} and Sec.~\ref{subsec:opt_deep_networks} for which we use 1000 epochs. We use batch normalization for most experiments, except when it is explicitly mentioned otherwise as, for example, in the experiments where we aim to compute sharpness and for this we use networks with group normalization. 

For all experiments involving SAM, we select the best perturbation radius $\rho$ based on a grid search over $\rho \in \{0.025, 0.05, 0.1, 0.2, 0.3, 0.4\}$. In most cases, the optimal $\rho$ is equal to $0.1$ while in the ERM~$\rightarrow$~SAM experiment, it is equal to $\rho=0.4$ for CIFAR-10 and $\rho=0.2$ for CIFAR-100. We note that using a higher $\rho$ in this case is coherent with the experiments on diagonal linear networks which also required a higher $\rho$. For all experiments with SAM, we use a single GPU, so we do not implicitly rely on lower $m$-sharpness in $m$-SAM. The only exception where $m$ is smaller than the batch size is the experiments shown in Fig.~\ref{fig:test_err_sharpness_vs_rho_diff_m} and Fig.~\ref{fig:test_err_sharpness_vs_rho_many_diff_m}. Regarding $n$-SAM in Fig.~\ref{fig:gen_bound_is_not_predictive}, we implement it by doing the ascent step on a \textit{different} batch compared to the descent step, i.e., as described in our convergence analysis part in Eq.~(\ref{eq:samstod}).

\myparagraph{Sharpness computation.}
We compute $m$-sharpness on 1024 training points (i.e., by averaging over $\lceil 1024/m \rceil$) of CIFAR-10 or CIFAR-100 using $100$ iterations of projected gradient ascent using a step size $\alpha = 0.1 \cdot \rho$. For each iteration, we normalize the updates by the $\ell_2$ gradient norm.

\myparagraph{Confidence intervals on plots.}
Many experimental results are replicated over different random seeds used for training. We show the results using the mean and 95\% bootstrap confidence intervals which is the standard way to show such results in the \texttt{seaborn} library \citet{Waskom2021}.

\myparagraph{Code and computing infrastructure.}
The code of our experiments is publicly available.\footnote{\url{https://github.com/tml-epfl/understanding-sam}} We perform all our experiments with deep networks on a single NVIDIA V100 GPU with 32GB of memory. Since most of our experiments involved a grid search over the perturbation radius $\rho$ and replication over multiple random seeds, we could not do the same at the ImageNet scale due to our limited computational resources.

\section{Additional Deep Learning Experiments}
\label{sec:app_additional_experiments}
In this section, we show additional experimental results complementary to those presented in the main part. In particular, we provide multiple ablation study related to the role of $m$ in $m$-SAM, batch size, and model width. We also provide additional experiments on the evolution of sharpness over training using training time and test time batch normalization, training loss of ERM vs. SAM models, and the performance under label noise for standard and unnormalized SAM.

\subsection{The Effect of $\bm{m}$ in $\bm{m}$-SAM}
\label{subsec:app_effect_of_m}
We show the results of SAM for different $m$ in $m$-SAM (with a fixed batch size 256) in Fig.~\ref{fig:test_err_sharpness_vs_rho_many_diff_m}. We note that in this experiment, we used group normalization instead of batch normalization like, for example, in Fig.~\ref{fig:gen_bound_is_not_predictive}, so the exact test error values should not be compared between these two figures.
We observe from Fig.~\ref{fig:test_err_sharpness_vs_rho_many_diff_m}, that the generalization improvement is larger for smaller $m$ and it is continuous in $m$. We also note that a similar experiment has been done in the original SAM paper \citep{foret2021sharpnessaware}. Here, we additionally verified this finding on an additional dataset (CIFAR-100) and for networks trained without batch normalization (which may have had an extra regularization effect as we discussed in Sec.~\ref{subsec:two_natural_hypotheses}).
\begin{figure}[htbp]
    \centering
    \begin{subfigure}[t]{.37\textwidth}
        \caption{\hspace{8mm}\textbf{ResNet-18 on CIFAR-10}}
        \vspace{-2mm}
        \includegraphics[width=1.0\columnwidth]{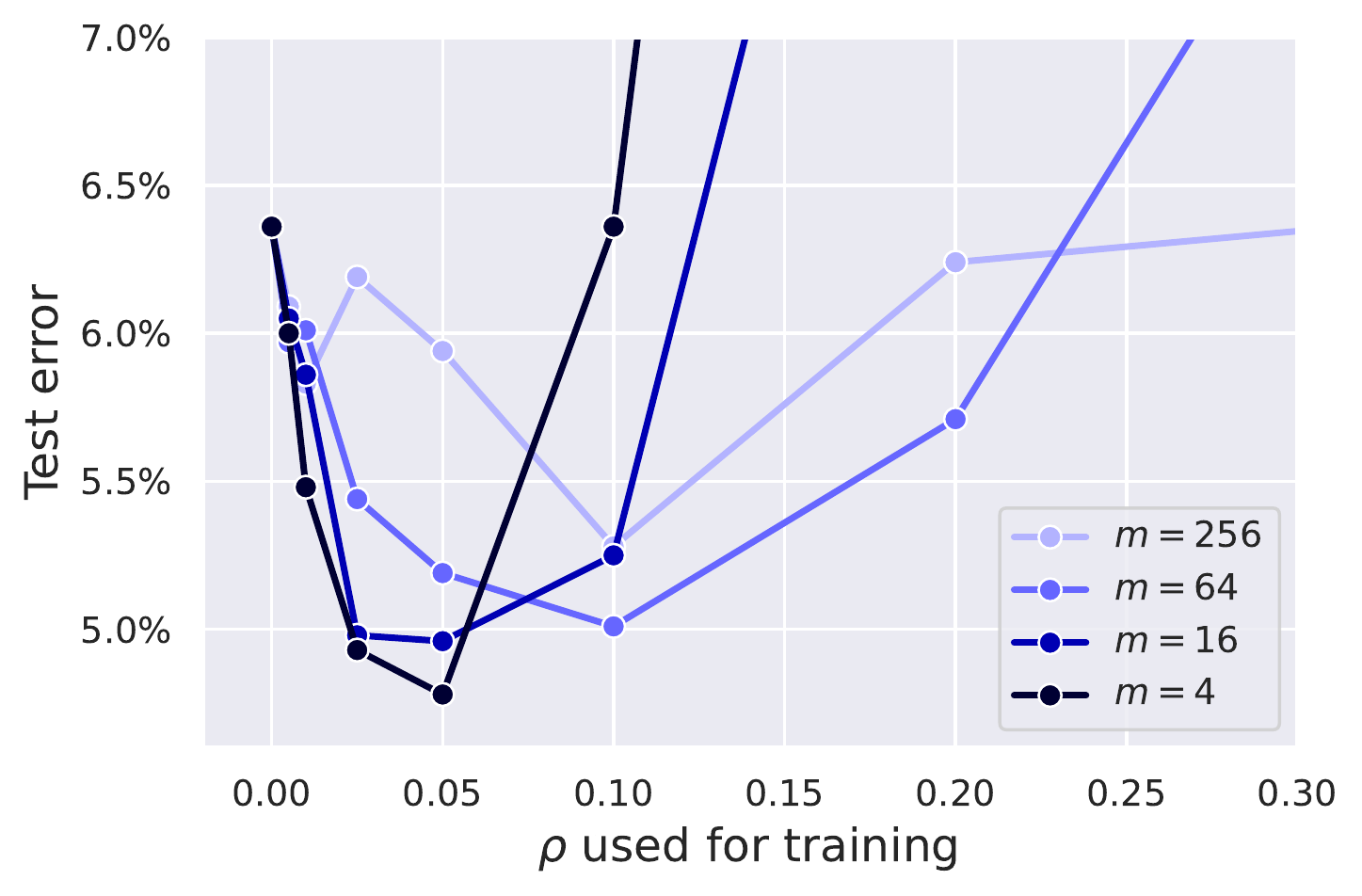}
    \end{subfigure}
    \quad \quad
    \begin{subfigure}[t]{.37\textwidth}
        \caption{\hspace{8mm}\textbf{ResNet-34 on CIFAR-100}}
        \vspace{-2mm}
        \includegraphics[width=1.0\columnwidth]{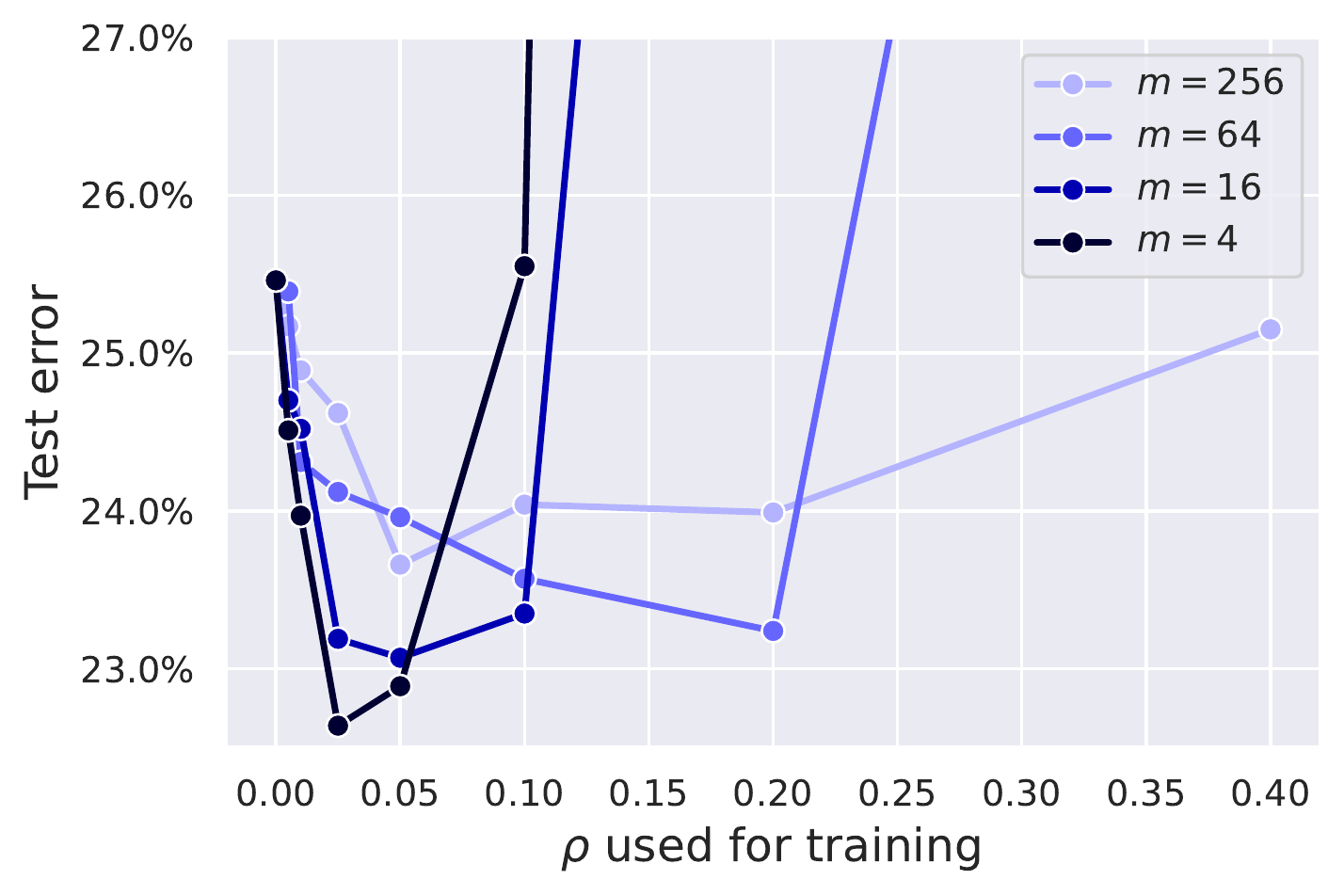}
    \end{subfigure}
    \vspace{-1mm}
    \caption{Test error of models trained with group normalization and \textbf{different $\bm{m}$} in $m$-SAM using batch size $256$.}
    \label{fig:test_err_sharpness_vs_rho_many_diff_m}
\end{figure}

\subsection{The Effect of the Batch Size on SAM}
\label{subsec:app_effect_of_batch_size}
We show the results of SAM for different batch sizes in Fig.~\ref{fig:test_err_sharpness_vs_rho_many_diff_bs} where we use $m$ equal to the batch size. Note that a too high $m$ leads to marginal improvements in generalization ($\approx0.2\%$) and is not able to bridge the gap between large-batch (1024) and small-batch (256 or 128) SGD.
\begin{figure}[htbp]
    \centering
    \begin{subfigure}[t]{.37\textwidth}
        \caption{\hspace{8mm}\textbf{ResNet-18 on CIFAR-10}}
        \vspace{-2mm}
        \includegraphics[width=1.0\columnwidth]{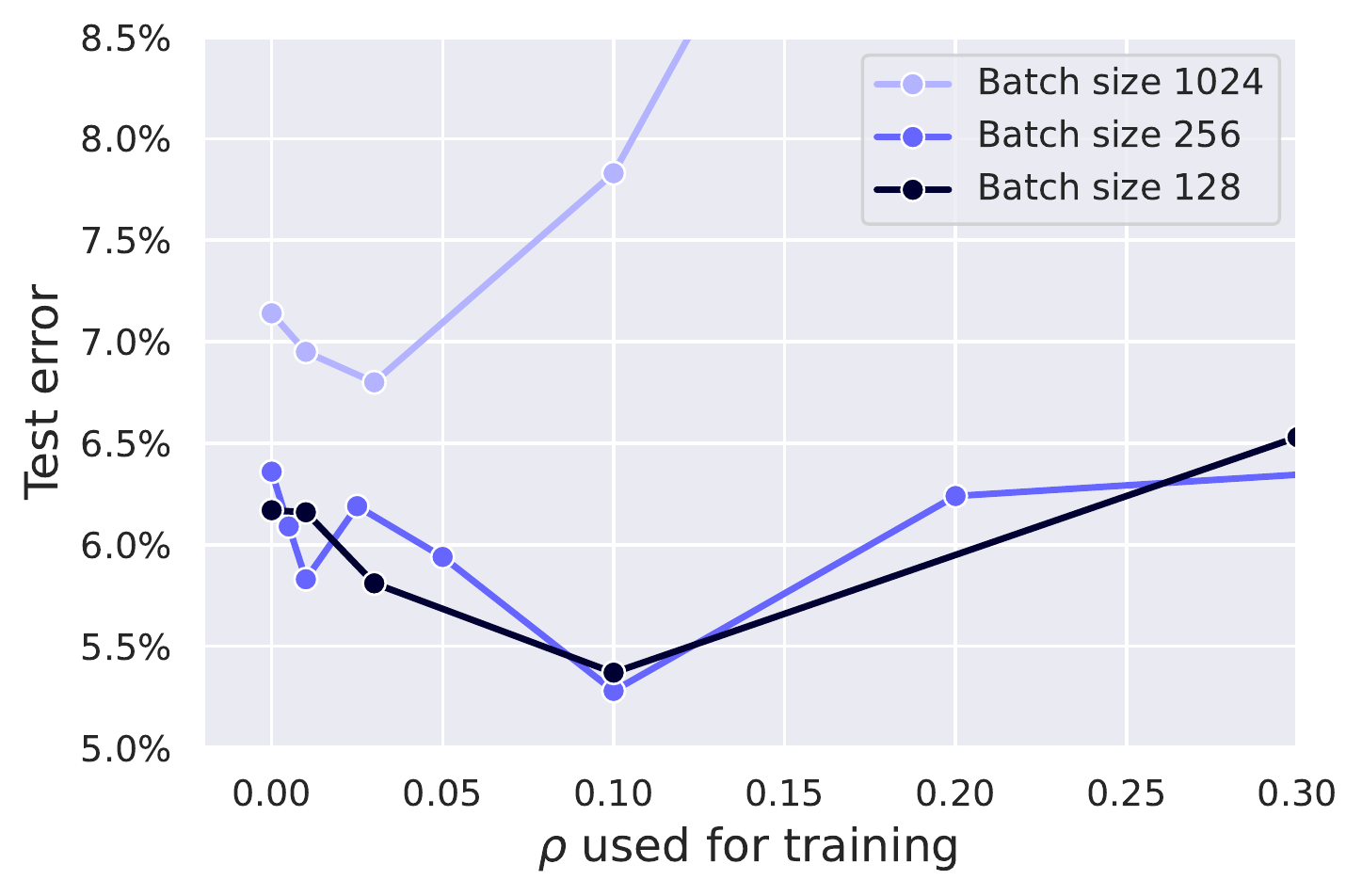}
    \end{subfigure}
    \quad \quad
    \begin{subfigure}[t]{.37\textwidth}
        \caption{\hspace{8mm}\textbf{ResNet-34 on CIFAR-100}}
        \vspace{-2mm}
        \includegraphics[width=1.0\columnwidth]{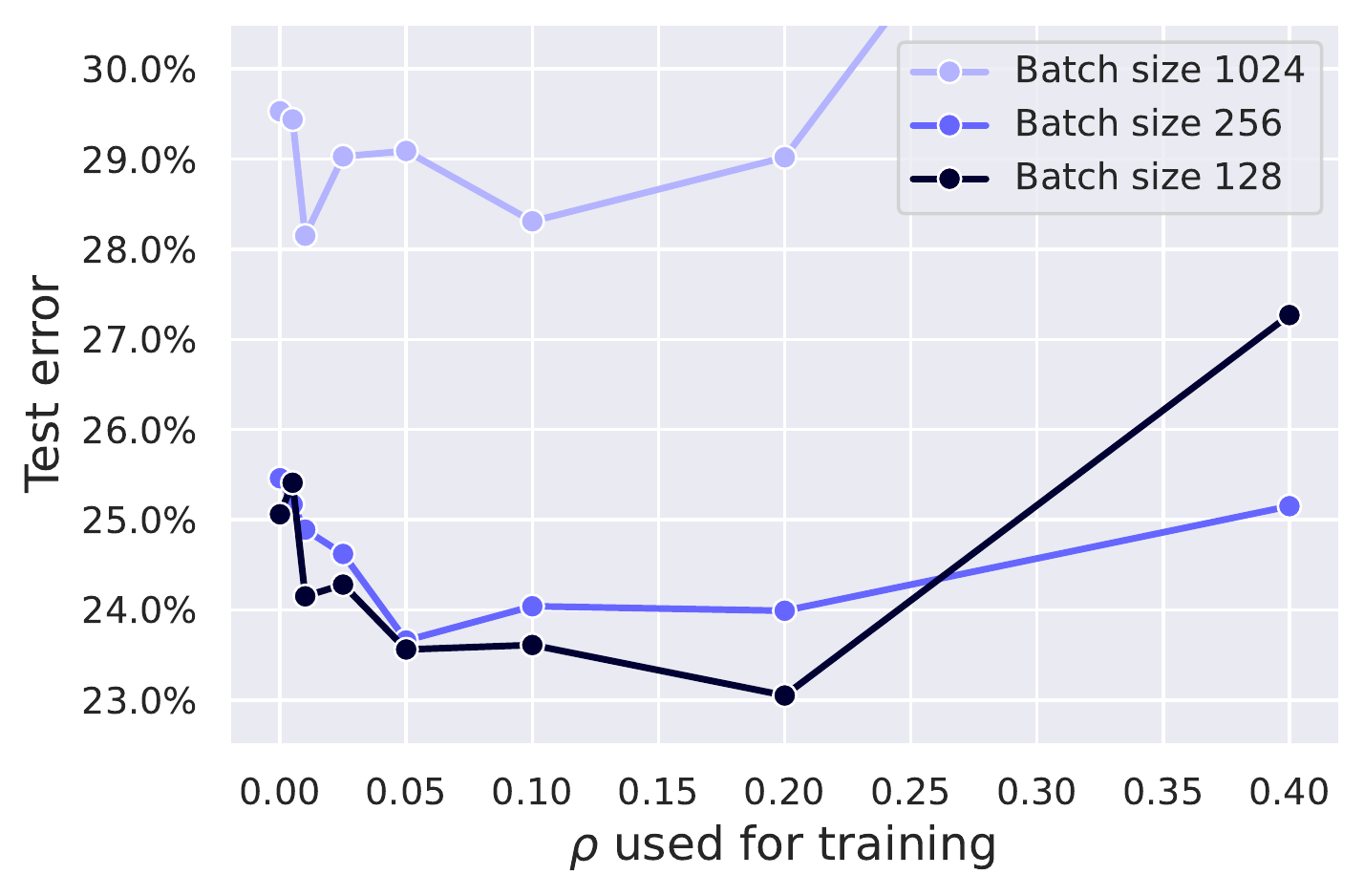}
    \end{subfigure}
    \vspace{-1mm}
    \caption{Test error of models trained with group normalization and \textbf{different batch sizes} for the same number of epochs (200). Note that for all models, we use $m$ in $m$-SAM equal to the batch size.}
    \label{fig:test_err_sharpness_vs_rho_many_diff_bs}
\end{figure}

\subsection{The Effect of the Model Width on SAM}
\label{subsec:app_effect_of_model_width}
We show in Fig.~\ref{fig:test_err_delta_diff_width} test error improvements of SAM over ERM for different model width factors. For comparison, in all other experiments we use model width factor 64. As expected, there is little improvement (or even no improvement as on CIFAR-10) from SAM for small networks where extra regularization is not needed. However, interestingly, the generalization improvement is the largest not for the widest models, but rather for intermediate model widths, such as model width 16. 
\begin{figure}[htbp]
    \centering
    \begin{subfigure}[t]{.37\textwidth}
        \caption{\hspace{8mm}\textbf{ResNet-18 on CIFAR-10}}
        \vspace{-2mm}
        \includegraphics[width=1.0\columnwidth]{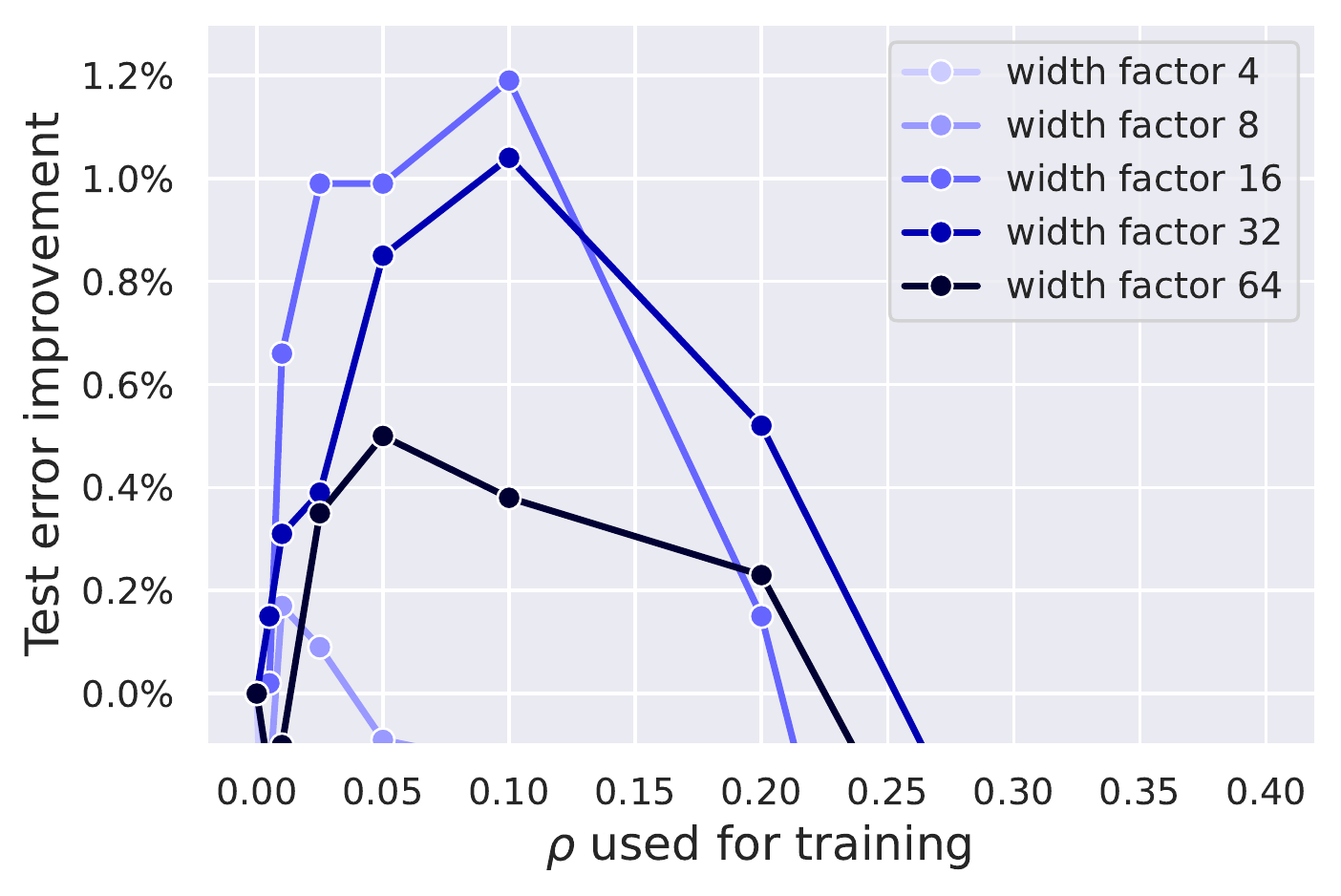}
    \end{subfigure}
    \quad \quad
    \begin{subfigure}[t]{.37\textwidth}
        \caption{\hspace{8mm}\textbf{ResNet-34 on CIFAR-100}}
        \vspace{-2mm}
        \includegraphics[width=1.0\columnwidth]{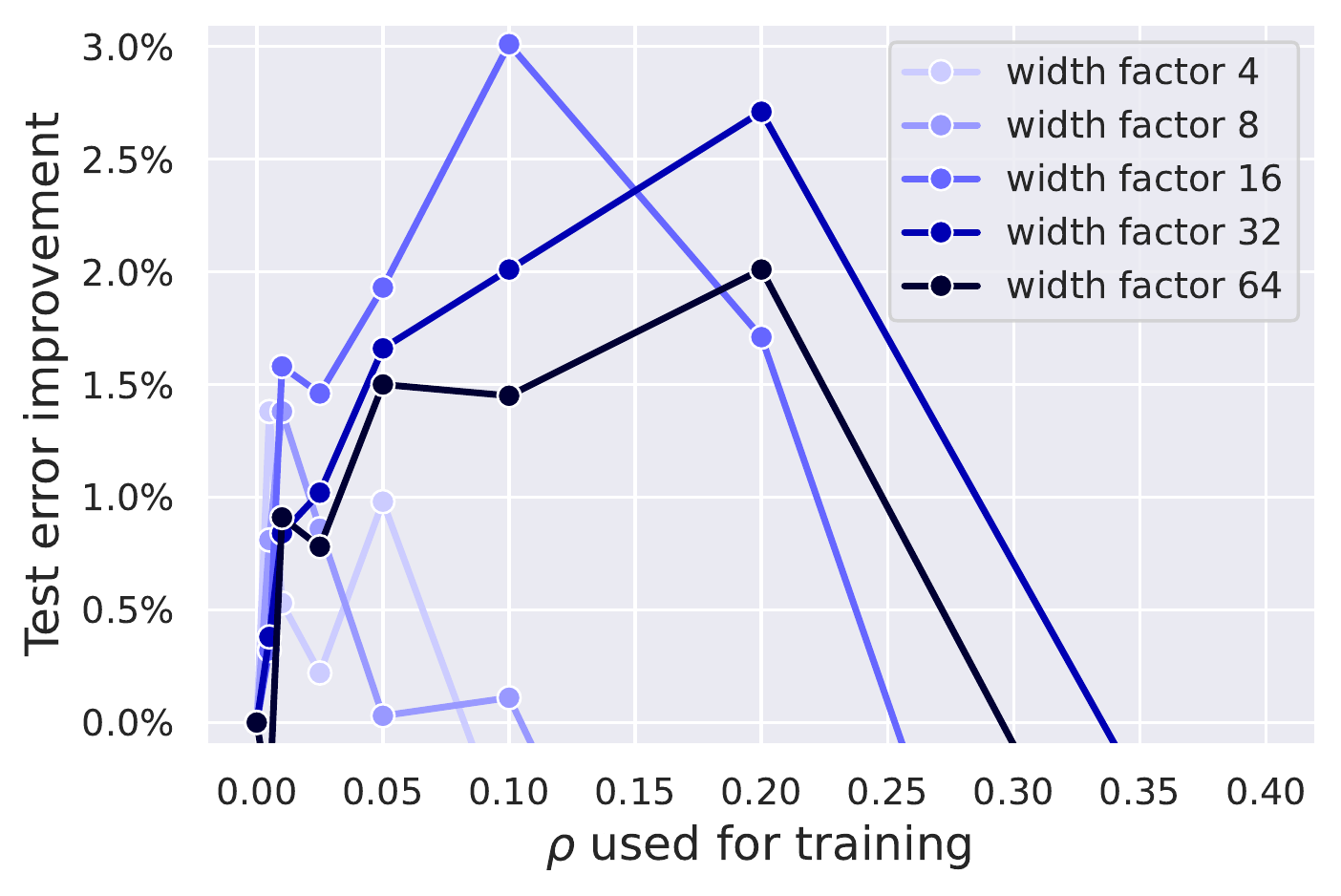}
    \end{subfigure}
    \caption{Test error improvements of SAM over ERM for \textbf{different model width} factors.}
    \label{fig:test_err_delta_diff_width}
\end{figure}

\subsection{Sharpness for Models with Batch Normalization}
\label{subsec:app_sharpness_bn_modes}
The main problem of measuring sharpness for networks with BatchNorm is the discrepancy between training and test-time behaviour. Fig.~\ref{fig:bn_max_loss} illustrates this issue: the maximum loss computed over radius $\rho$ is substantially different depending on whether we use training-time vs. test-time BatchNorm. This is an important discrepancy since the training-time BatchNorm is effectively used by SAM while the test-time BatchNorm is used by default for post-hoc sharpness computation. To avoid this discrepancy, we presented the results in the main part only on models trained with GroupNorm which does not have this problem.
\begin{figure}[htbp]
    \centering
    \footnotesize
    \hspace{8mm}\textbf{ResNet-18 on CIFAR-10}\\
    \includegraphics[width=.38\columnwidth]{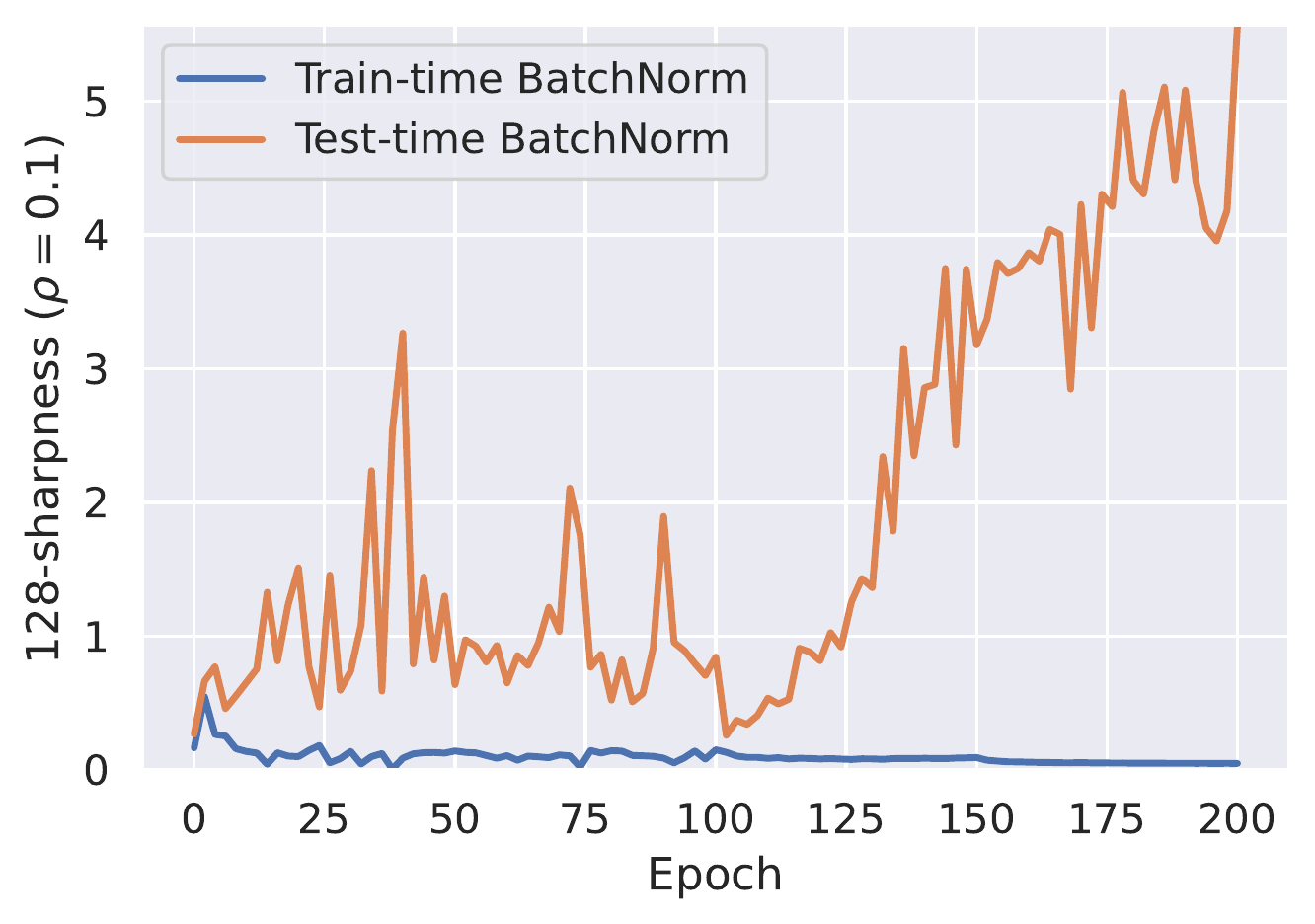}
    \caption{128-sharpness ($\rho=0.1$) over training for a network with batch normalization when measured with the training-time and test-time batch normalization. The model is trained with SAM using $\rho=0.1$.}
    \label{fig:bn_max_loss}
\end{figure}

\subsection{Training Loss for ERM vs. SAM Models}
\label{subsec:app_training_loss_erm_sam}
Fig.~\ref{fig:err_sam_vs_erm} in the main part shows that both training and test errors have a slight increasing trend after the first learning rate decay at 500 epochs. As a sanity check, in Fig.~\ref{fig:obj_sam_vs_erm}, we plot the total objective value (including the $\ell_2$ regularization term) which shows a consistent decreasing trend. Thus, we conclude that the increasing training error is not some anomaly connected to a failure of optimizing the training objective.
\begin{figure}[htbp]
    \centering
    \begin{subfigure}[t]{.37\textwidth}
        \caption{\hspace{8mm}\textbf{ResNet-18 on CIFAR-10}}
        \vspace{-2mm}
        \includegraphics[width=1.0\columnwidth]{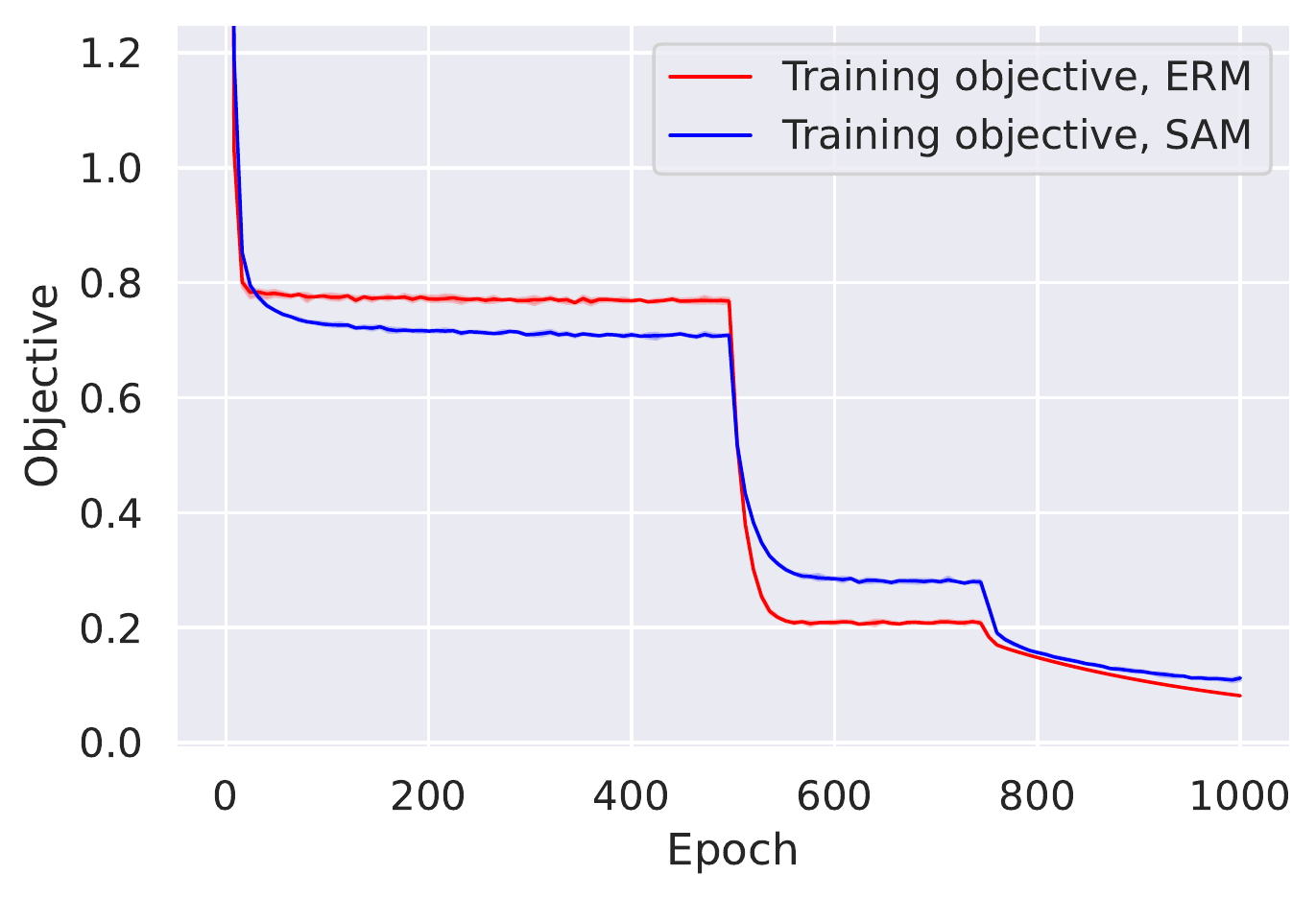}
    \end{subfigure}
    \quad \quad
    \begin{subfigure}[t]{.37\textwidth}
        \caption{\hspace{8mm}\textbf{ResNet-34 on CIFAR-100}}
        \vspace{-2mm}
        \includegraphics[width=1.0\columnwidth]{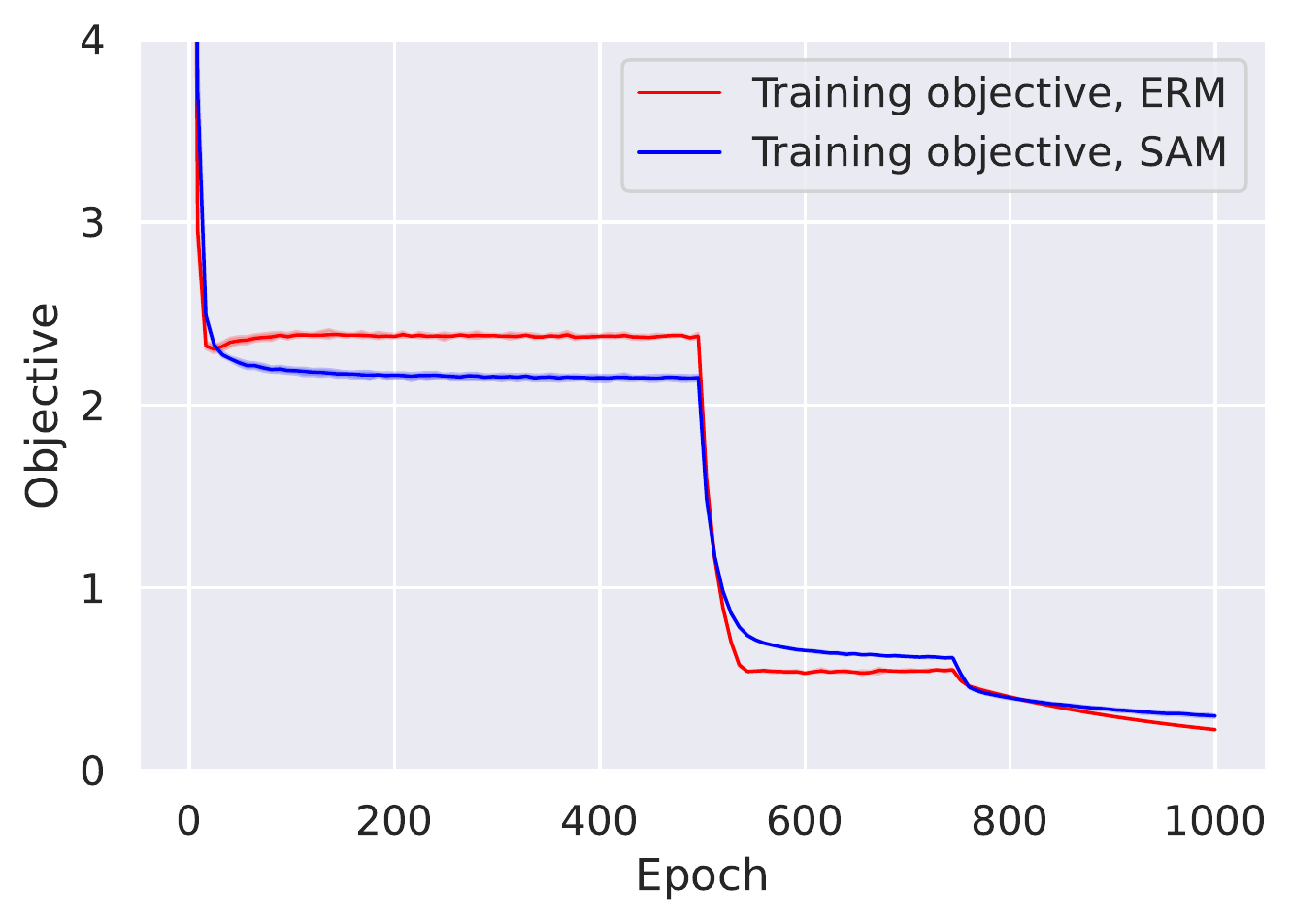}
    \end{subfigure}
    \caption{Training objective of ERM vs. SAM over epochs. For both models, we observe a clear decreasing trend.}
    \label{fig:obj_sam_vs_erm}
\end{figure}

\subsection{SAM with a Decreasing Perturbation Radius $\rho$}
\label{subsec:app_decreasing_rho}
In Fig.~\ref{fig:test_err_over_decreasing_rho}, we plot the test error over different $\rho_t$ where we decay the $\rho_t$ using the same schedule as for the outer learning rate $\gamma_t$. We denote this as \textit{SAM with decreasing $\rho$} contrary to the standard SAM for which $\rho$ is constant throughout training. We note that in both cases, we use the $\ell_2$-normalized updates as in the original SAM. The results suggest that decreasing the perturbation radius $\rho_t$ over epochs is detrimental to generalization. This observation is relevant in the context of the convergence analysis that suggests that SAM converges even if $\rho_t$ is significantly larger than the outer step size $\gamma_t$ which is the case when we decay $\gamma_t$ over epochs while keeping $\rho_t$ constant.
\begin{figure}[htbp]
    \centering
    \begin{subfigure}[t]{.37\textwidth}
        \caption{\hspace{8mm}\textbf{ResNet-18 on CIFAR-10}}
        \vspace{-2mm}
        \includegraphics[width=1.0\columnwidth]{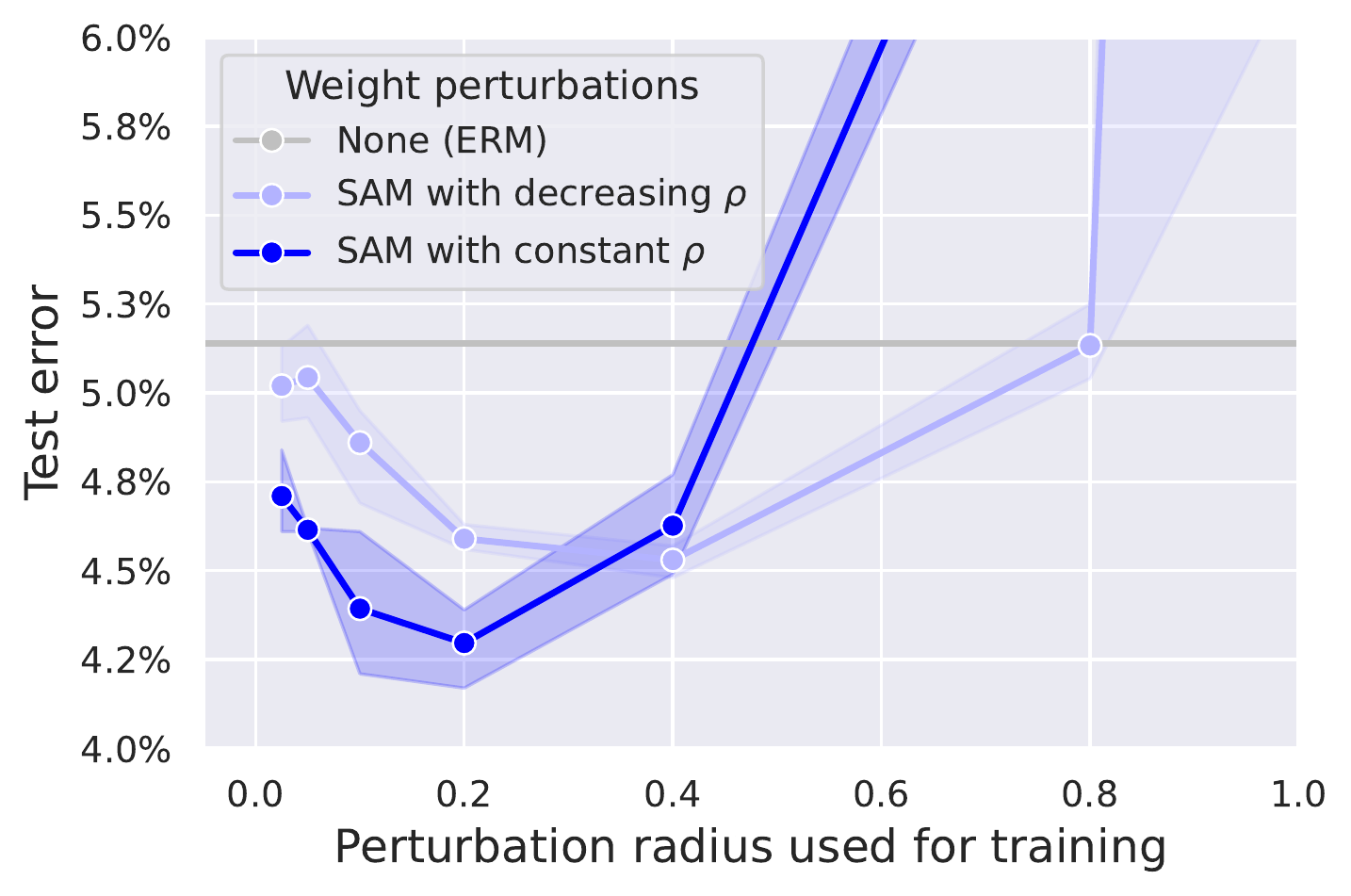}
    \end{subfigure}
    \quad \quad
    \begin{subfigure}[t]{.37\textwidth}
        \caption{\hspace{8mm}\textbf{ResNet-34 on CIFAR-100}}
        \vspace{-2mm}
        \includegraphics[width=1.0\columnwidth]{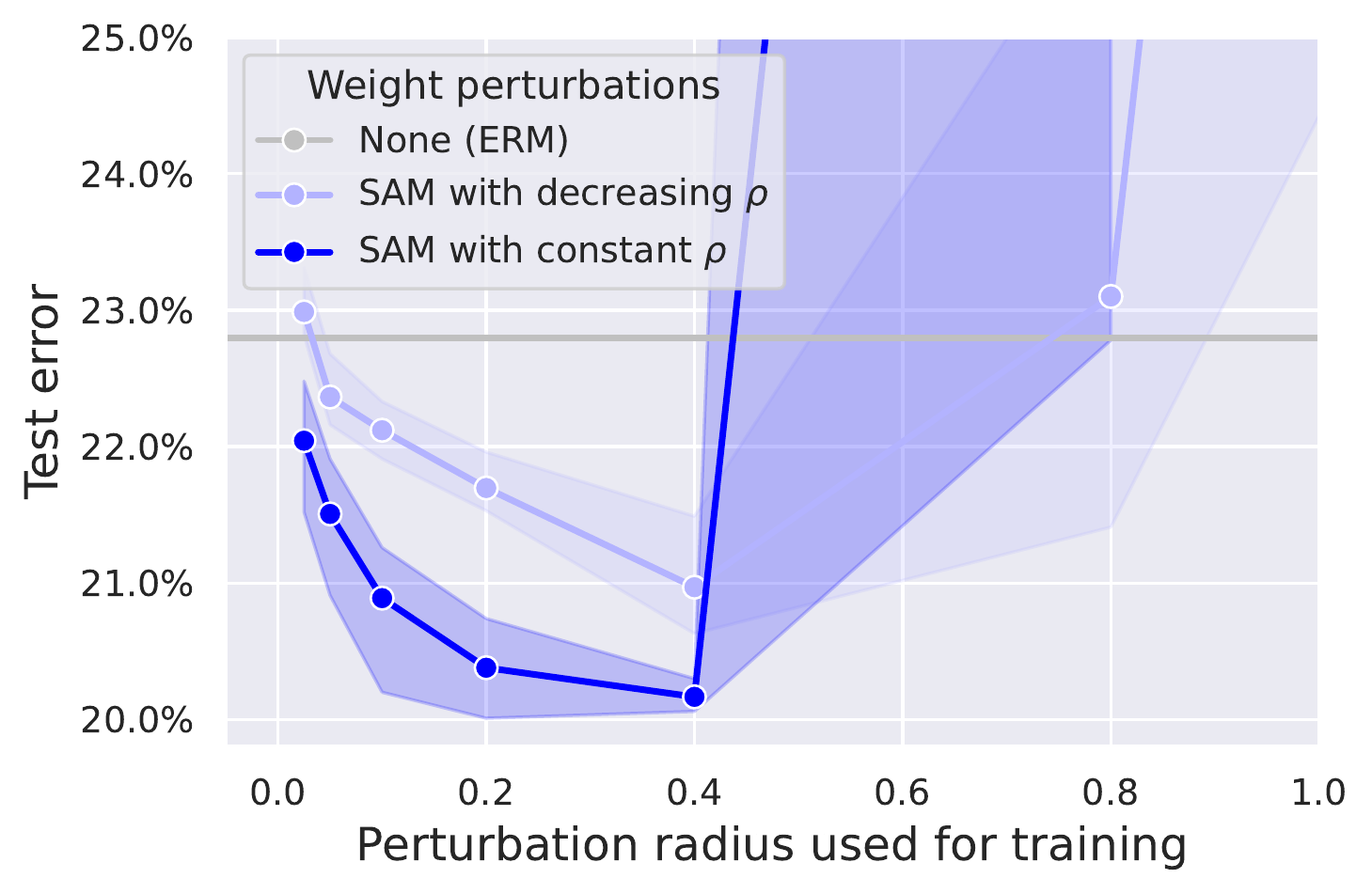}
    \end{subfigure}
    \caption{Test error of SAM with a constant perturbation radius $\rho$ (i.e., standard SAM) compared to SAM with decreasing perturbation radii $\rho_t$. The decrease of $\rho_t$ follows the same piecewise constant schedule as the learning rate $\gamma_t$. We note that in both cases, we use the $\ell_2$-normalized updates as in the original SAM.}
    \label{fig:test_err_over_decreasing_rho}
\end{figure}

\subsection{Experiments with Noisy Labels}
\label{subsec:app_const_rho_sam_ln}
In Fig.~\ref{fig:label_noise_plots_unnorm_sam}, we show experiments with CIFAR-10 and CIFAR-100 with 60\% of noisy labels for SAM with a fixed inner step size $\rho$ that does not include gradient normalization (denoted as \textit{unnormalized SAM}). We did a prior grid search to determine the best fixed $\rho$ for this case which we show in the figure. We can observe that the best test error taken over epochs almost exactly matches that of the standard SAM.
\begin{figure}[htbp]
    \centering
    \begin{subfigure}[t]{.37\textwidth}
        \caption{\hspace{7mm}\textbf{ResNet-18 on CIFAR-10}}
        \vspace{-2mm}
        \includegraphics[width=1.0\columnwidth]{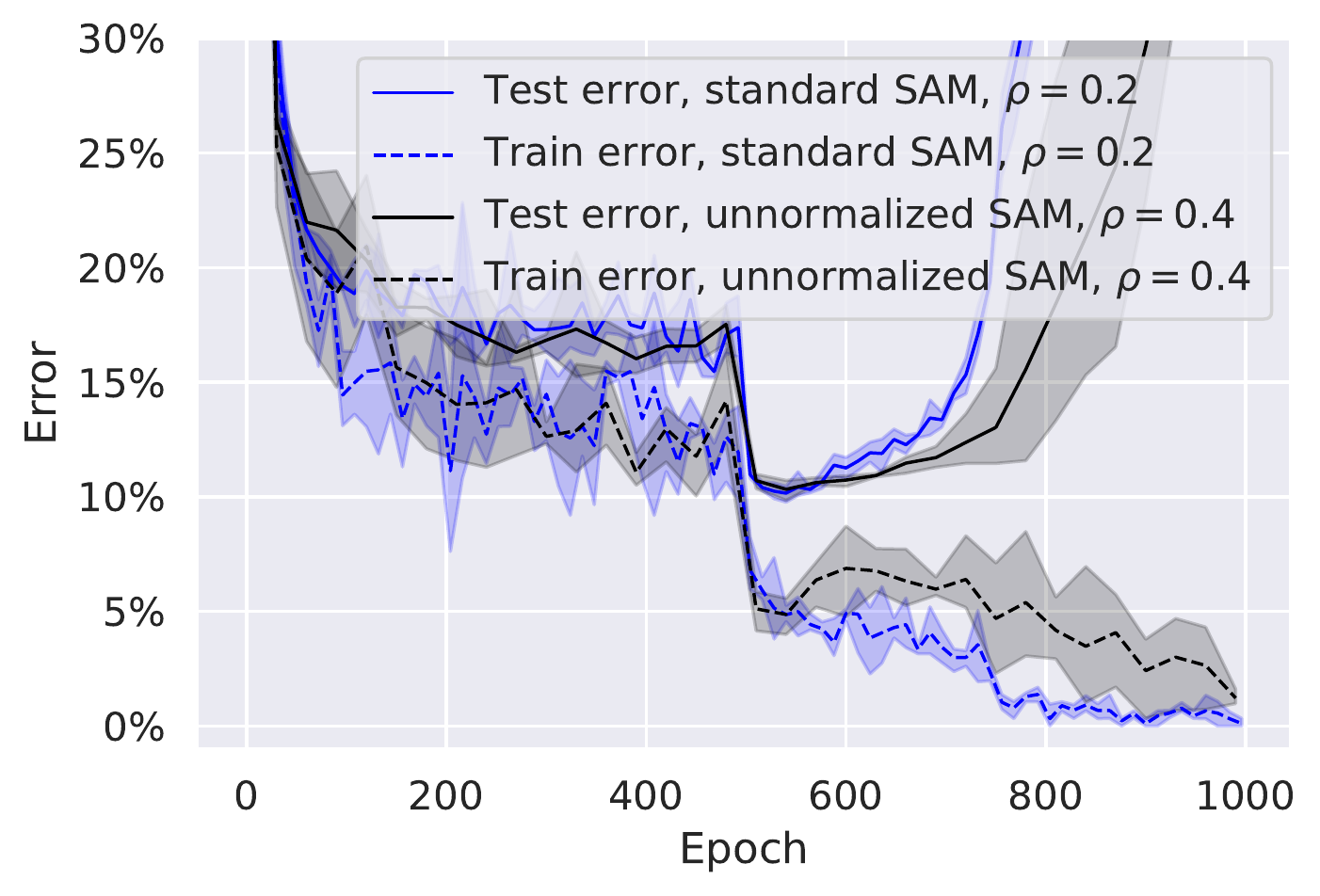}
    \end{subfigure}
    \quad \quad
    \begin{subfigure}[t]{.37\textwidth}
        \caption{\hspace{7mm}\textbf{ResNet-34 on CIFAR-100}}
        \vspace{-2mm}
        \includegraphics[width=1.0\columnwidth]{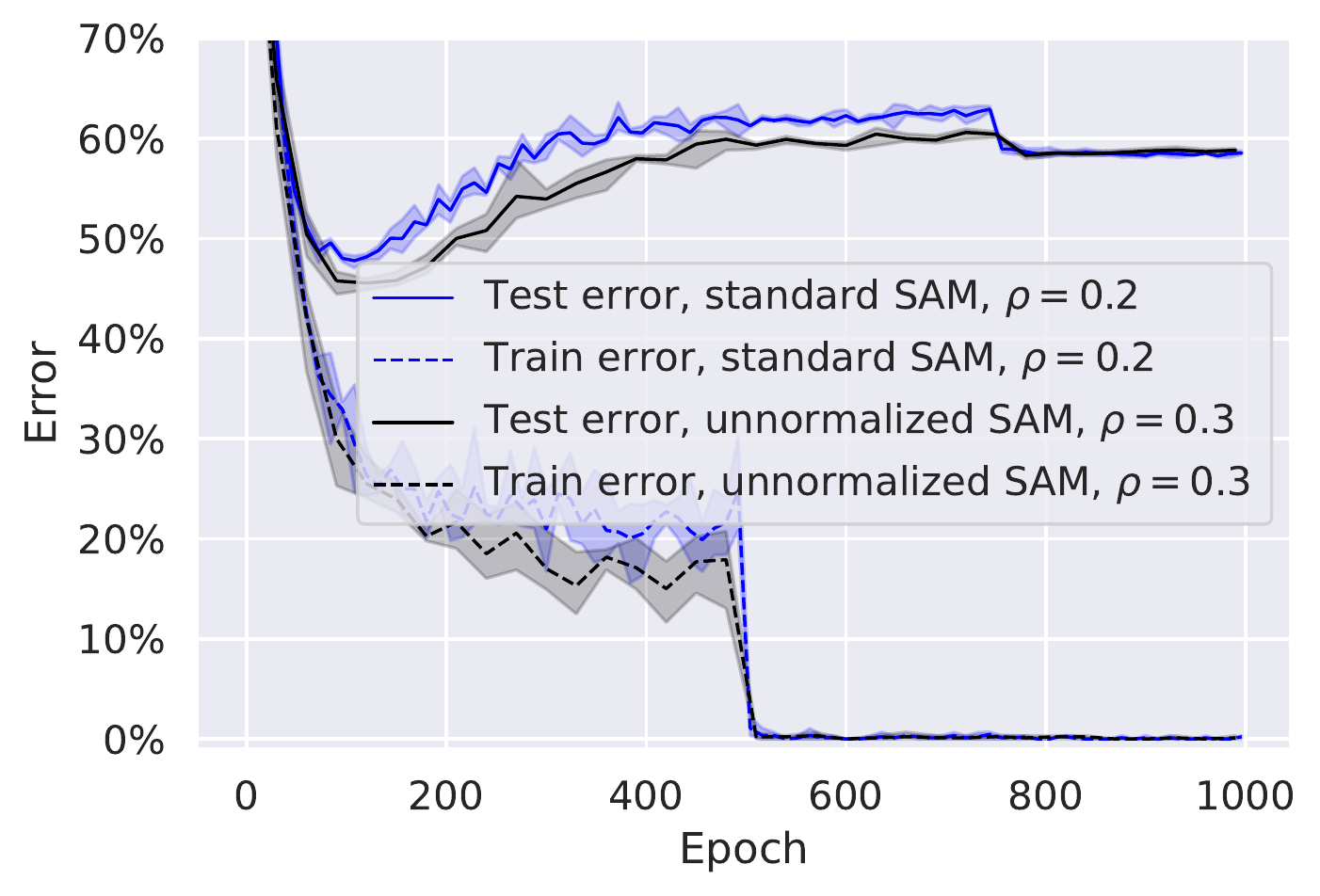}
    \end{subfigure}
    \caption{Plots over training for a ResNet-18 trained on CIFAR-10 with 60\% label noise for SAM with and without gradient normalization.}
    \label{fig:label_noise_plots_unnorm_sam}
\end{figure}

\end{document}